\newcommand{\valunc}[2]{%
  \textnormal{#1}%
  \if\relax\detokenize{#2}\relax
  \else
    {\scriptsize$\pm$\textnormal{#2}}%
  \fi
}
\newcommand{\texthl}[2]{%
  \begingroup
    \setlength{\fboxsep}{0pt}%
    \colorbox{#1}{\strut #2}%
  \endgroup
}
\DeclareRobustCommand\onedot{\futurelet\@let@token\@onedot}
\def\@onedot{\ifx\@let@token.\else.\null\fi\xspace}
\declaretheorem[name=Theorem]{theorem}
\declaretheorem[name=Lemma]{lemma}
\declaretheorem[name=Definition]{definition}
\declaretheorem[name=Example]{example}
\declaretheorem[name=Proposition]{proposition}
\declaretheorem[name=Remark,style=remark]{remark}
\newcommand{\E}[2]{\mathbb{E}_{#1}{\left[#2\right]}}
\newcommand{\R}{\mathbb{R}}
\newcommand{\C}{\mathbb{C}}
\newcommand{\M}{\mathcal{M}}
\renewcommand{\S}{\mathcal{S}}
\newcommand{\A}{\mathcal{A}}
\newcommand*{\argmax}{\mathop{\mathrm{argmax}}}
\newcommand{\defeq}{\mathrel{\mathop:}=}
\newcommand{\LHS}{\mathrm{LHS}}
\newcommand{\RHS}{\mathrm{RHS}}
\newcommand{\D}{\mathcal{D}}
\newcommand{\rmax}{r_{\max}}
\renewcommand{\Pr}{\mathbb P}
\newcommand{\supp}{\mathrm{supp}}
\newcommand{\Hc}{\mathcal{H}}
\newcommand{\Mf}{\mathfrak{M}}
\newcommand{\term}{f_\mathrm{term}}
\newcommand{\ensemble}{\mathbf{m}_{\boldsymbol{\theta}}}
\newcommand{\unc}{U_{\boldsymbol{\theta}}}
\newcommand{\uncthres}{\mathcal U(\zeta)}
\definecolor{citecolor}{rgb}{0.3,0.3,0.3}
\definecolor{linkcolor}{rgb}{0,0,0.5}
\icmltitlerunning{Long-Horizon Model-Based Offline Reinforcement Learning Without Explicit Conservatism}
\begin{document}

\newcommand{\algo}{\textsc{Neubay}\xspace}

\twocolumn[
  \icmltitle{Long-Horizon Model-Based Offline Reinforcement Learning \\Without Explicit Conservatism}

  \icmlsetsymbol{equal}{*}

  \begin{icmlauthorlist}
    \icmlauthor{Tianwei Ni}{mila,udem}
    \icmlauthor{Esther Derman}{mila,udem}
    \icmlauthor{Vineet Jain}{mila,mcgill}
    \icmlauthor{Vincent Taboga}{mila,udem}\\
    \icmlauthor{Siamak Ravanbakhsh}{mila,mcgill}
    \icmlauthor{Pierre-Luc Bacon}{mila,udem}
  \end{icmlauthorlist}

  \icmlaffiliation{mila}{Mila - Quebec AI Institute}
  \icmlaffiliation{udem}{Université de Montréal}
  \icmlaffiliation{mcgill}{McGill University}

  \icmlcorrespondingauthor{Tianwei Ni}{twni2016@gmail.com}

  \icmlkeywords{Machine Learning, ICML}

  \vskip 0.3in
]

\printAffiliationsAndNotice{}  %

\begin{abstract}
Popular offline reinforcement learning (RL) methods rely on \textit{explicit conservatism}, penalizing out-of-dataset actions or restricting rollout horizons. We question the universality of this principle and revisit a complementary \textit{Bayesian} perspective for test-time adaptation. By modeling a posterior over world models and training a history-dependent agent to maximize expected return, the Bayesian approach directly addresses epistemic uncertainty without explicit conservatism. We first illustrate in a bandit setting that Bayesianism excels on low-quality datasets where conservatism fails. Scaling to realistic tasks, we find that \textit{long-horizon rollouts} are essential to control value overestimation once conservatism is removed. We introduce design choices that enable learning from long-horizon rollouts while mitigating compounding model errors, yielding our algorithm, \algo, grounded in the \textsc{neu}tral \textsc{Bay}esian principle. On D4RL and NeoRL benchmarks, \algo is competitive with leading conservative algorithms, achieving new state-of-the-art on 7 datasets with rollout horizons of several hundred steps. Finally, we characterize datasets by quality and coverage to identify when \algo is preferable to conservative methods. Code is available at \href{https://github.com/twni2016/neubay}{github.com/twni2016/neubay}.
\end{abstract}

\vspace{-1em}
\section{Introduction}
\label{sec:introduction}

\begin{figure}[h]
    \centering
    \includegraphics[width=\linewidth]{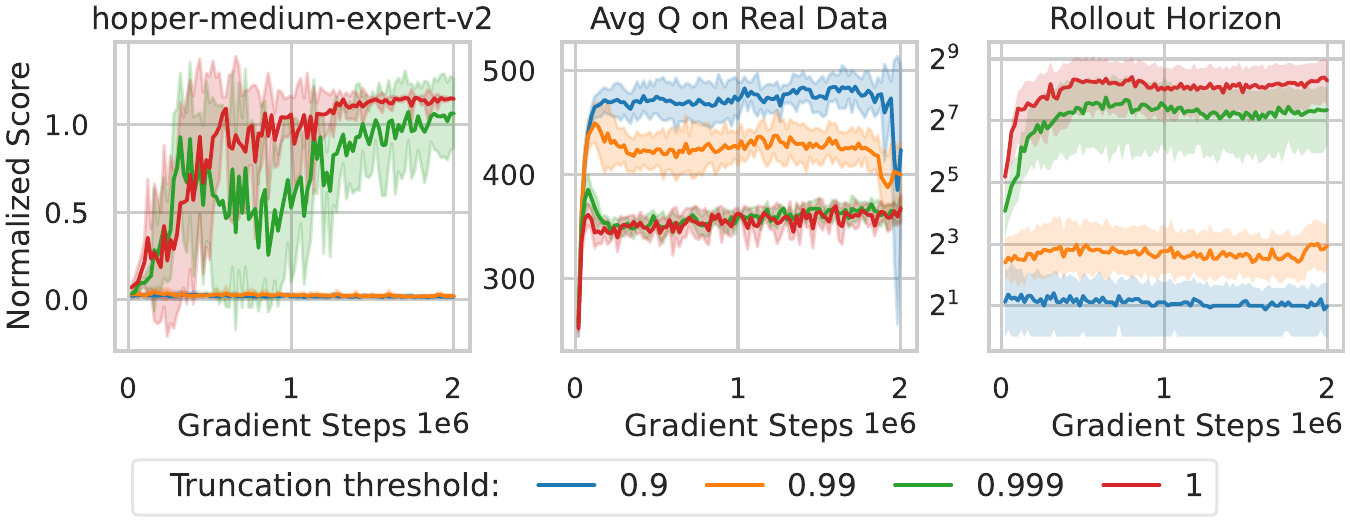}
    \vspace{-1.5em}
    \caption{\footnotesize\textbf{Adaptive long-horizon rollouts} improve performance (left) with lower estimated Q-value on the offline dataset (middle), \textbf{without explicit conservatism}. We vary the rollout truncation threshold $\zeta \in \{0.9, 0.99, 0.999, 1.0\}$ and report horizons (median with 25\%--75\% band) over 100 training-time rollouts (right). \textbf{Full results} are shown in \autoref{app:unc_thres}.\looseness=-1}
    \vspace{-1em}
    \label{fig:horizon_main}
\end{figure}

Reinforcement learning (RL) often assumes direct interaction with the environment, which we refer to as online RL~\citep{sutton2018reinforcement}. While successful in simulation, deploying it in real-world settings such as robotics~\citep{dulac2021challenges}, recommendation systems~\citep{chen2024opportunities}, and language reasoning~\citep{ma2026learning} is limited by expensive or risky data collection. 
A more practical alternative is offline RL~\citep{lange2012batch}, which learns from pre-collected datasets (e.g., from human demonstrations or prior agents) without further environment access~\citep{levine2020offline, fu2020d4rl, gulcehre2020rl}. This decoupling enables safe, scalable training and the potential to outperform the behavior policies that produced the data.\looseness=-1

Most offline RL algorithms adopt a conservative principle\footnote{In this paper, ``explicit conservatism'' refers to the deliberate injection of pessimism into offline policy learning. Unless otherwise noted, we use ``conservatism'' as shorthand for this notion. This excludes generic anti-overestimation methods proposed in the online RL literature.}
by penalizing the policy and value function on out-of-dataset state-action pairs~\citep{levine2020offline,prudencio2023survey} and using short rollout horizons~\citep{lu2021revisiting}.
In theory, these algorithms enforce robustness, either strict~\citep{jin2021pessimism,uehara2021pessimistic} or soft~\citep{zhang2024soft}, over the uncertainty set of possible MDPs consistent with the dataset. 
The trade-off is clear: conservatism reduces value overestimation and unsafe extrapolation, but can also suppress average-case performance and limit adaptation, since policies are discouraged from exploring potentially high-reward but underrepresented actions.\looseness=-1

Bayesian RL optimizes average-case performance under epistemic uncertainty~\citep{duff2002optimal}. Its application to offline RL was pioneered by \citet{ghosh2022offline}, who formalized the problem as an \textit{epistemic POMDP}, where partial observability arises from limited coverage. This formulation enables test-time adaptation through history-dependent Bayes-optimal policies.
In this work, we first revisit and extend this Bayesian principle through the lens of \emph{data quality}. Using a two-armed bandit with skewed data, we show that conservative algorithms, with sufficient uncertainty penalty, are guaranteed to commit to the seen arm regardless of test-time conditions. In contrast, Bayesian algorithms can adaptively explore and commit to the better arm at test time, a clear advantage in low-quality datasets.\looseness=-1

However, scaling the Bayesian principle to realistic tasks is challenging, as it requires solving an \emph{approximate} epistemic POMDP. 
We identify three key challenges: (1) \textit{value overestimation}~\citep{fujimoto2019off}, which becomes central once Bayesian RL abandons explicit conservatism and removes penalties for out-of-dataset actions; (2) \textit{compounding error} in world models~\citep{lambert2022investigating}, where inaccuracies grow rapidly with horizon; and (3) training agents with \textit{long-term memory}~\citep{ni2023transformers} to enable test-time adaptation.
Each aligns with a major research area: offline RL, model-based RL, and partially observable RL.
This helps explain why prior Bayesian-inspired algorithms often \textit{reintroduce} explicit conservatism through uncertainty penalties and short horizons~\citep{chen2021offline,jeong2022conservative} or reduce to model-free RL~\citep{ghosh2022offline}.\looseness=-1

We build a practical algorithm, \algo, to address these challenges. We first show that \textit{long-horizon rollouts} can themselves reduce value overestimation once explicit conservatism is removed. We then introduce design choices that make such long-horizon rollouts feasible. To control compounding errors, we truncate rollouts adaptively using epistemic uncertainty as a threshold~\citep{frauenknecht2024trust,zhan2021model}, an alternative use of uncertainty beyond penalties, and apply layer normalization~\citep{ba2016layer} within the world model. To stabilize learning from long horizons, we leverage recent advances in online recurrent RL with long-term memory~\citep{morad2024recurrent,luo2024efficient}.\looseness=-1

We evaluate \algo on the D4RL~\citep{fu2020d4rl} and NeoRL~\citep{qin2022neorl} benchmarks, covering 33 datasets. Overall, \algo is on par with leading conservative algorithms and outperforms other Bayesian-inspired methods, establishing new state-of-the-art results on 7 datasets. On realistic tasks, \algo performs best on low-quality datasets and on medium-quality datasets with moderate coverage. 
Our sensitivity study validates a key insight: \textbf{adaptive long horizon} is a primary driver of \algo's success. Whereas dominant model-based RL practice favors short horizons, our Bayesian approach uncovers a new role for long horizons: they suppress value overestimation. \algo routinely plans \textbf{64-512 steps} (e.g., \autoref{fig:horizon_main}), while short-horizon variants fail due to severe overestimation.
These results position \algo as a practical direction for model-based and offline RL from a Bayesian perspective, and future advances in world modeling can push the limits further.\looseness=-1

\vspace{-0.5em}
\section{Background on Offline RL}
\label{sec:prelim}
\vspace{-0.5em}

In the standard offline RL setting, a static dataset $\D$ is collected by interacting with an MDP $\M^*$, which we refer to as the \textit{true MDP}.
Formally, $\mathcal M^* = (\S, \A,\gamma, T,\term, m^*, \rho)$ where $\S$ and $\A$ are state and action spaces, $\gamma \in (0,1)$ is the discount factor, $T\in \mathbb N$ is the maximum episode length, and $\term: \S\times\A\times\S \to \{0,1\}$ is the terminal function. 
These components are assumed to be \textbf{known}~\citep{puterman2014markov, yu2020mopo}. 
The \textit{joint reward-transition function} is $m^*: \S \times \A \to \Delta( [-\rmax, \rmax]\times\S )$, consisting of reward function $R^*: \S \times \A \to \Delta([-\rmax, \rmax])$ (here, $\rmax$ is a positive constant) and dynamics $P^*: \S \times \A \to \Delta(\S)$, both \textbf{unknown} to the agent and learned in model-based methods.
The initial state distribution $\rho \in \Delta(\S)$ is also unknown, but we do not model it explicitly since initial states can be directly sampled from $\D$~\citep{janner2019trust}.\looseness=-1

The static dataset of trajectories\footnote{Although offline RL datasets are often stored as transitions, trajectories are available in common benchmarks and used by history-dependent methods~\citep{chen2021decision}.} $\D = \{\tau^i\}_{i=1}^{\mathrm{num\_traj}}$ is collected by an unknown (possibly) history-dependent behavior policy $\pi_\beta:\mathcal H_t \to \Delta(\A)$, where $\mathcal H_t$ is the space of state-action-reward sequences up to timestep $t$. Define $h_t = (s_{0:t},a_{0:t-1},r_{1:t})\in \mathcal H_t$ for $t\ge 1$ with the convention that $h_0 = s_0$.
Each trajectory $\tau = (s_0,a_0,r_1,d_1, s_1,a_1,\dots)$ is generated by:
$$s_0\sim \rho,a_t\sim \pi_\beta(h_t), (r_{t+1}, s_{t+1})\sim m^*(s_t,a_t),$$   
and $d_{t+1}=\term(s_t,a_t,s_{t+1})$. A trajectory ends either when $d_t = 1$ (termination) or when $t = T$ (truncation).
We stress this distinction: \textit{termination} implies absorbing states with zero future rewards, whereas \textit{truncation} preserves continuation and thus allows bootstrapping.\looseness=-1

\textit{The ideal objective} in offline RL is to find a possibly history-dependent policy $\pi: \mathcal H_t \to \Delta(\A)$ that maximizes the expected discounted return under the true MDP $\mathcal M^*$: $$\max_\pi J(\pi, m^*) \defeq \E{}{\sum_{t=0}^\infty \gamma^t r_{t+1}},$$where $s_0\sim \rho,  a_t \sim \pi(h_t), (r_{t+1},s_{t+1})\sim m^*( s_t,a_t)$. The defining constraint of offline RL is that the agent cannot interact with $m^*$, making it intractable to direct optimize. This leads to the following discussion about epistemic uncertainty on $m^*$.\looseness=-1 

\textbf{Empirical model and epistemic uncertainty.} From the agent's view, knowledge of $m^*$ is well-defined only on the state-action support of the dataset $\D$: $\supp_{\S\times \A}(\D) \defeq \{(s,a) \mid (s,a,r,s')\in \D\}$. Let $\Mf_{\text{in}}$ denote a model class whose domain is restricted to $\supp_{\S\times \A}(\D)$. 
The \emph{empirical model}~\citep{fujimoto2019off} is then obtained by maximum likelihood estimation (MLE):
$$m_{\D} = \argmax_{m \in\Mf_{\text{in}}} \, \E{(s,a,r,s')\sim \D}{\log m(r, s' \mid s,a)}.$$
Thus, $m_\D$ is uniquely determined in-support by empirical frequencies, but remains \emph{undefined} for $(s,a)\notin\supp_{\S\times\A}(\D)$, giving rise to substantial epistemic uncertainty~\citep{gal2016uncertainty}. 
A common way to formalize this uncertainty is through an \emph{uncertainty set} $\Mf_\D$, the set of plausible models on $\S\times\A$ that agree with $\D$ on $\supp_{\S\times\A}(\D)$ (i.e, close to $m_\D$).
Offline policy learning then incorporates $\Mf_\D$ into optimization, typically via two paradigms: (explicit) \emph{conservatism}, which optimizes against (near) worst-case models in $\Mf_\D$, or \emph{Bayesianism}, which leverages a posterior distribution over $\Mf_\D$.\looseness=-1

\begin{figure*}[t]
    \centering
 \includegraphics[width=\linewidth]{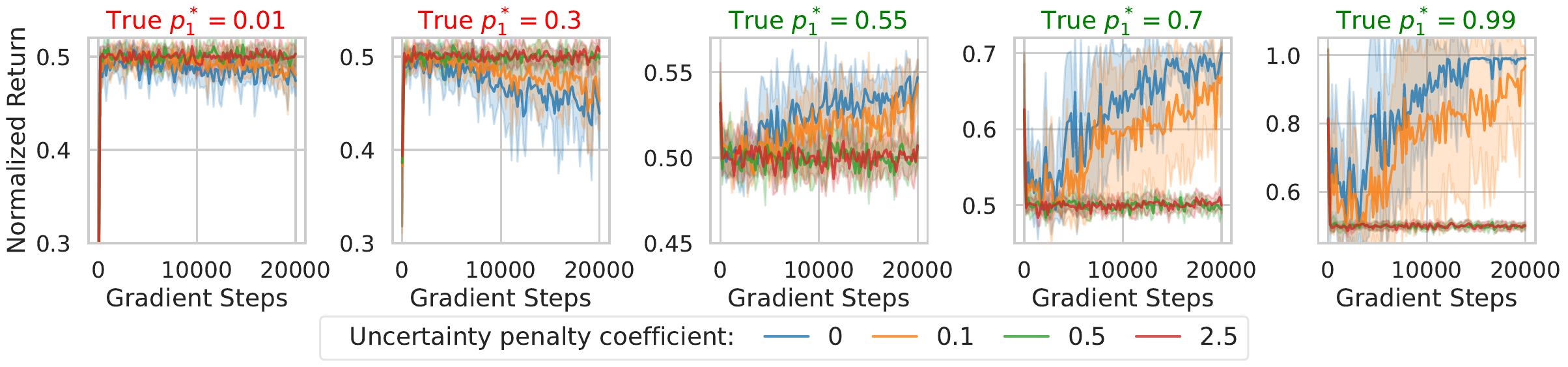}
 \vspace{-1.5em}
    \caption{\footnotesize Average return (normalized by $T$) on \textbf{test-time bandits} with different $p^*_1$. Since the observed arm has $p^*_0=0.5$, cases with $p^*_1<0.5$ are \textit{worse} and those with $p^*_1>0.5$ are \textit{better}. Zero penalty: Bayesian agent; nonzero penalty: conservative agent.}
    \label{fig:bandit}
    \vspace{-1.5em}
\end{figure*}

\textbf{Conservative principle: robust MDPs.}  Conservative RL methods commonly optimize return under the \textit{worst} model of the uncertainty set $\Mf_\D$~\citep{uehara2021pessimistic}:
\begin{equation}
\label{eq:robust_obj}
\max_\pi J(\pi; \Mf_\D) \defeq \max_\pi \min_{m\in \Mf_\D} J(\pi, m).
\end{equation}
This robust MDP formulation~\citep{wiesemann2013robust} covers both model-free~\citep{fujimoto2019off} and model-based algorithms~\citep{rigter2022rambo,yu2020mopo}, differing only in the choice of uncertainty set $\Mf_\D$ and the degree of robustness~\citep{zhang2024soft}. We provide formal connections between conservatism and robustness in \autoref{app:bg_conservatism}.\looseness=-1

\textbf{Bayesian principle: epistemic POMDPs.} 
Bayesianism instead treats the true model $m^*$ as a random variable~\citep{ghavamzadeh2015bayesian}, maintaining a posterior distribution $\Pr_\D(m)$ over plausible models. Bayesian offline RL~\citep{ghosh2022offline,uehara2021pessimistic,fellows2025sorel} then optimizes the \emph{expected} return under this posterior, known as \emph{ambiguity-neutrality}~\citep{ellsberg1961risk}:\looseness=-1
\begin{align}
\label{eq:bayes_obj}
\max_\pi J(\pi;\Pr_\D) \defeq  \max_\pi \E{m \sim \Pr_{\D}}{J(\pi, m)}. 
\end{align}
The posterior naturally induces an uncertainty set via its support. If $\Mf_\D = \supp(\Pr_\D)$, the Bayesian objective (\autoref{eq:bayes_obj}) is less pessimistic than the conservative one (\autoref{eq:robust_obj}), since for all policies $\pi$, $J(\pi; \Pr_\D) \ge J(\pi; \Mf_\D)$. 
\autoref{eq:bayes_obj} is equivalent to solving a Bayes-adaptive MDP (BAMDP)~\citep{duff2002optimal}, also known as an epistemic POMDP; see \autoref{app:bg_bayesian} for a connection with POMDPs.\looseness=-1

\textbf{Approximate posterior and uncertainty quantifier.} To instantiate the Bayesian objective (\autoref{eq:bayes_obj}), we require a tractable approximation
$\hat \Pr_\D$ to the posterior over models.  
Following common practice~\citep{chua2018deep,yu2020mopo}, we approximate $\Pr_\D$ with a finite ensemble of learned models 
$\ensemble=\{m_{\theta^n}\}_{n=1}^N$. 
The ensemble induces the empirical posterior
$
\hat \Pr_\D(m)=\frac{1}{N}\sum_{n=1}^N \mathbbm{1}(m=m_{\theta^n})
$ that captures modes of true posterior~\citep{fort2019deep,wilson2020bayesian}.
Given a state-action pair $(s,a)$, we adopt the disagreement among the \textit{mean} predictions~\citep{lakshminarayanan2017simple} as uncertainty quantifier, 
$$
\unc(s,a) = \mathrm{std}(\{\mu_{\theta^{n}}(s,a)\}_{n=1}^N),   
$$which is shown to be empirically robust under dataset shift~\citep{ovadia2019can}. We defer architectural and training details of the ensemble to \autoref{app:ensemble}.\looseness=-1

\vspace{-0.5em}
\section{Illustrative Example for Bayesianism}
\label{sec:bandit}
\vspace{-0.5em}

\textbf{The bandit dataset.} We consider a sequential two-armed bandit with $\A = \{0,1\}$. Each arm $a \in \A$ yields a Bernoulli reward $r\sim R^*(a)=\mathcal{B}(p^*_a)$, so the true MDP $m^*$ is specified by reward parameters $p^* \in [0,1]\times [0,1]$. We fix $p^*_0 = 0.5$ in our setup.
The optimal policy $\pi^*$ in $m^*$ is deterministic and memoryless, always choosing $\argmax_{a\in \A} p^*_{a}$.
To highlight out-of-support challenges, we construct a dataset that only covers arm $0$. Specifically, the dataset  $\D = \{(a_{0:T-1}^i,r_{1:T}^i)\}_{i}$ is collected by a deterministic behavior policy $\pi_\beta(a) = \mathbbm 1 (a =0)$. 
Thus, for each $t<T=100$, $a_t = 0$, $r_{t+1}\sim \mathcal{B}(p^*_{a_t})$, and $\D$ contains no data on $a=1$.\looseness=-1

\begin{figure}[H]
\vspace{0.5em}
\centering
\includegraphics[width=0.5\linewidth]{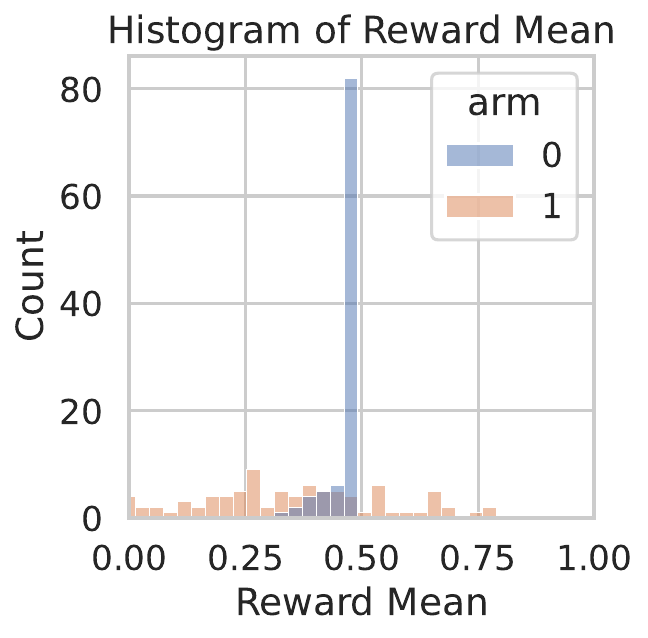}
\vspace{-0.5em}
\caption{\footnotesize Histogram of estimated reward means $p_0,p_1$ across ensemble members.\looseness=-1}
\label{fig:reward_ensemble}
\end{figure}

This skewed dataset $\D$ leaves the true reward parameter $p^*_1$ for arm $1$ \textit{completely} unobserved, inducing substantial epistemic uncertainty on $p^*_1$.
We approximate the posterior by fitting an ensemble of reward models $\ensemble$ using Gaussian outputs.
As shown in \autoref{fig:reward_ensemble}, after training on $\D$, the ensemble predictions concentrate around the observed arm $0$ but sharply disagree on the unobserved arm $1$, with estimated uncertainty about $10\times$ larger than on arm $0$.
Next, we test the Bayesian method from \autoref{eq:bayes_obj}, where planning begins at $t=0$ and truncates at $t=T$. Dynamics are not modeled due to the bandit structure, and there is no concern about compounding error.\looseness=-1

\textbf{Uncertainty penalty hurts generalization, but Bayesian agents adapt to worse \emph{and} better test cases.} 
We study explicit conservatism by penalizing the reward with an uncertainty-based term $\lambda\, \unc(a)$. Such a design is common in empirical algorithms~\citep{yu2020mopo}, and can be viewed as  soft robustness~\citep{zhang2024soft} (see \autoref{app:bg_conservatism}).
Since the estimated uncertainty on arm $1$ is much larger than arm $0$ (i.e, $\unc(1) \gg \unc(0)$), a sufficiently large coefficient $\lambda$ forces the resulting algorithm to always select arm $0$, regardless of the test-time environment.
In contrast, the Bayesian agent ($\lambda=0$) trains on a posterior $\Pr_\D(p_1)$ that spans both worse and better values relative to $p^*_0$ (see \autoref{fig:reward_ensemble}).
It adopts a history-dependent policy that explores the unseen arm at test time before committing, similarly to meta-RL~\citep{zintgraf2020varibad}. 
Thus, as shown in \autoref{fig:bandit}, the Bayesian agent’s return is slightly worse than conservatism when $p^*_1 < p^*_0=0.5$, because it briefly explores arm $1$. When $p^*_1 > 0.5$, Bayesianism is a clear win, as it identifies and exploits the better unseen action, whereas heavy conservatism remains stuck on arm $0$.
This leads to our main insight: \textit{Bayesianism excels on \textbf{low-quality} datasets or when optimal actions are scarce by leveraging test-time adaptation, while remaining competitive on \textbf{high-quality} datasets at the cost of exploration}.\looseness=-1

We theoretically confirm this empirical insight in \autoref{app:theory} by comparing robust-optimal and Bayes-optimal policies under the true MDP. For skewed datasets such as the bandits above, we show that Bayes-optimal policies outperform conservative ones by a provable margin. This gap is characterized by two data-quality measures: the Bayesian- and robust-coverage ratios. The robust-coverage ratio can grow arbitrarily large, while posterior averaging bounds the Bayesian-coverage ratio, explaining the performance difference.\looseness=-1

\vspace{-0.5em}
\section{A Practical Bayesian-Principled Algorithm}
\label{sec:algorithm}
\vspace{-0.5em}

The Bayesian agent is conceptually simple and, as seen in the bandit example, clearly outperforms conservative agents on low-quality data and enables test-time adaptation. Extending this principle from bandits to realistic MDPs, however, is challenging:
the full-horizon rollouts implied by the Bayesian objective~(\autoref{eq:bayes_obj}) can suffer from severe compounding model errors~\citep{luo2024reward, lin2025any}.
While this may suggest restricting to short-horizon rollouts, we first show that \textbf{long-horizon rollouts} are in fact necessary for Bayesian agents, as longer horizons reduce value
overestimation, a central challenge once explicit conservatism is removed (\autoref{sec:why_longhorizon}).
We then introduce a set of design choices for both performing and learning from long-horizon rollouts (\autoref{sec:how_longhorizon}--\autoref{sec:recurrent_rl}), yielding our practical algorithm, \algo (\autoref{sec:overall_algorithm}).\looseness=-1

\vspace{-0.5em}
\subsection{Why Do We Need Long-Horizon Rollouts?}
\label{sec:why_longhorizon}
\vspace{-0.5em}

Ideally, under a correct posterior, the Bayesian principle calls for full-horizon rollouts. In practice, however, compounding errors make full-horizon rollouts unreliable. 
To mitigate this issue, prior work in offline RL typically adopts a \textit{combination} of short-horizon rollouts and conservatism: short horizons limit compounding errors, while explicit conservatism (e.g., uncertainty penalties) significantly reduces value overestimation caused by the extrapolation error~\citep{fujimoto2019off,kumar2019stabilizing}.\looseness=-1

In contrast, our focus is on Bayesian agents that aim to adapt beyond the dataset, as illustrated in \autoref{sec:bandit}. This requires abandoning conservatism, but doing so removes an explicit mechanism for controlling value overestimation.
We show that, perhaps surprisingly, \textit{long-horizon rollouts} can themselves serve this role by actively \textbf{reducing overestimation} compared to short-horizon rollouts.\looseness=-1

To illustrate, starting from a real history $h_t \in \mathcal D$, we sample a model $m_\theta$ from the ensemble $\ensemble$ and generate a trajectory of length $H$, where $\hat a_{t+j} = \pi(\hat h_{t+j})$ comes from a deterministic policy, $(\hat r_{t+j+1}, \hat s_{t+j+1}) \sim m_\theta(\hat s_{t+j}, \hat a_{t+j})$, and $\hat h_t = h_t$.
Applying one-step Bellman backups on the Bayesian value function \textit{along this rollout}, i.e., 
{\small
$$Q^{\text{Bayes}}(\hat h_{t+j}, \hat a_{t+j}) \gets \hat r_{t+j+1} + \gamma Q^{\text{Bayes}} (\hat h_{t+j+1}, \pi(\hat h_{t+j+1})),$$
}for $0 \le j < H$, we obtain an $H$-step TD target on $Q^{\text{Bayes}}(h_t,\hat a_t)$:\looseness=-1
\begin{equation}
\label{eq:overestimation}
\sum\nolimits_{j=0}^{H-1} \gamma^{j} \underbrace{\hat r_{t+j+1}}_{\text{lower bias}} + \underbrace{\gamma^{H}}_{\text{discounted}} \underbrace{Q^{\text{Bayes}}(\hat h_{t+H}, \pi(\hat h_{t+H}))}_{\text{higher bias}}.
\end{equation}
This decomposition highlights the bias trade-off: imagined rewards can be low-bias if the model generalizes, while the bootstrapped term -- more susceptible to overestimation~\citep{kumar2019stabilizing,sims2024edge} -- is exponentially discounted with $H$. We formalize this effect in \autoref{app:proof_bootstrap}, extending the analysis of \citet{sims2024edge}.
Importantly, this role of long-horizon rollouts in mitigating overestimation is distinct from their use for data augmentation or exploration~\citep{young2023benefits}. Rather than relying on conservatism, long horizons absorb overestimation risk by
shifting estimation from bootstrapping toward model-generated rewards. The cost of longer horizons is increased compounding error, which we address in \autoref{sec:how_longhorizon} and \autoref{sec:ensemble_arch}.\looseness=-1

\vspace{-0.5em}
\subsection{How Do We Perform Long-Horizon Rollouts?}
\label{sec:how_longhorizon}
\vspace{-0.5em}

\textbf{Where to start planning?}
From the Bayesian objective (\autoref{eq:bayes_obj}), it is natural to initiate planning rollouts by sampling initial states $s_0 \sim \rho_\D$, where $\rho_\D$ is the empirical initial-state distribution. But this underrepresents states appearing \textit{later} in real trajectories, as compounding errors make them harder to reach.
Following MBPO-style branching mechanism~\citep{janner2019trust}, we sample starting states $s_t \sim \D$ from any timestep $t$ to maintain coverage. Since the environment appears as an epistemic POMDP to the \textit{agent}, we also provide the corresponding history $h_t = (s_{0:t}, a_{0:t-1}, r_{1:t}) \in \D$.\looseness=-1

\begin{figure}[h]
\centering
\vspace{0.5em}
\includegraphics[width=0.6\linewidth]{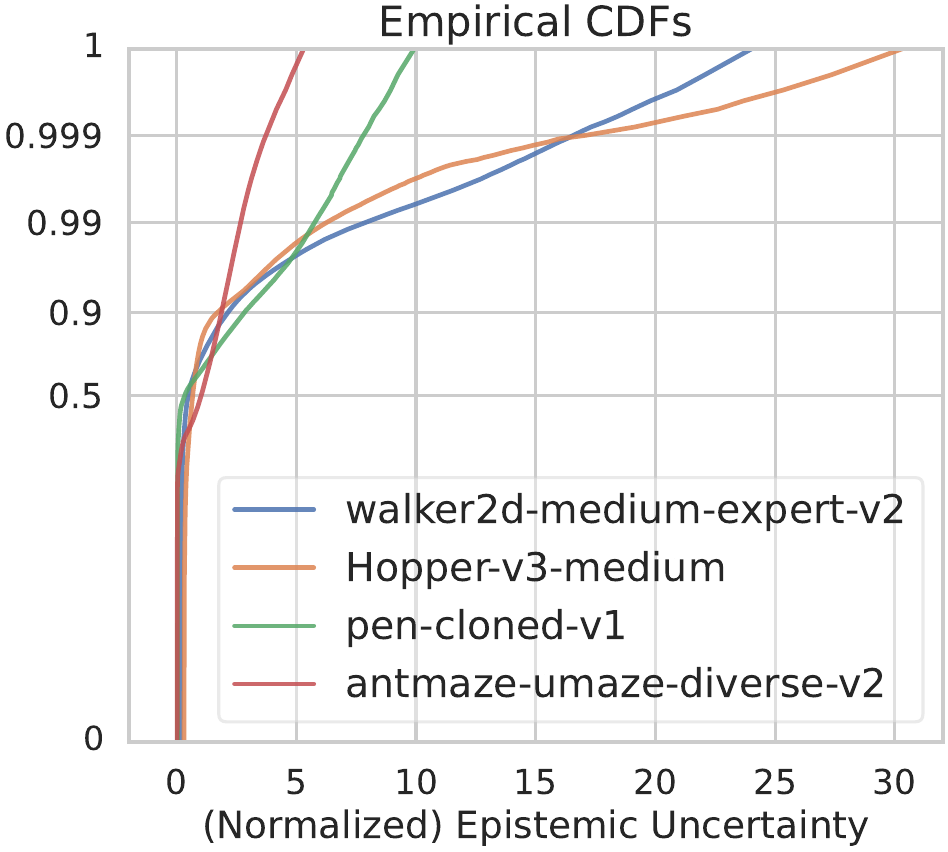}
\vspace{-0.5em}
\caption{\footnotesize \textbf{Empirical CDFs} of epistemic uncertainty $\unc$ over in-dataset $(s,a)$ (logit-scaled y-axis). Uncertainties are normalized by the dataset mean, so $1$ is the average value.\looseness=-1}
\label{fig:cdf}
\end{figure}

\textbf{How to truncate long-horizon rollouts?} 
Since model errors depend on specific $(s,a)$ pairs, a key question is not simply \emph{when} to truncate rollouts, but \emph{where}.
A natural criterion is the model’s uncertainty estimate $\unc(s,a)$, which correlates with prediction error. 
In practice, epistemic uncertainty is highly \textit{non-uniform} even within $\D$: frequently visited $(s,a)$ pairs yield low uncertainty, while rarely seen ones result in high uncertainty, reflecting uncertainty about the true model $m^*$ rather than $m_\D$.\footnote{Standard concentration bounds~\citep[Section D.3]{kumar2020conservative} imply that, with probability $1-\delta$, for $(s,a) \in \supp_{\S \times \A}(\D)$, $\textsc{tv}(m_\D(s,a), m^*(s,a)) \le c_{m^*,\delta}/\sqrt{n_\D(s,a)}$, where $n_\D(s,a)$ is the visitation count in $\D$ and $c_{m^*,\delta}$ is a constant.} As shown in \autoref{fig:cdf}, most empirical CDFs are long-tailed, with dataset-dependent skewness reflecting the underlying visitation distribution.

Prior work has also used uncertainty threshold to limit model error, together with short horizon caps~\citep{pan2020trust,zhan2021model,zhang2023uncertainty,frauenknecht2024trust}. At a high level, these methods and ours share the same principle: using the longest reliable rollout before model error becomes harmful. Our setting differs in how this reliable horizon is used. Once explicit conservatism is removed, short horizons can cause severe overestimation due to excessive reliance on bootstrapping. We therefore remove any fixed cap and use uncertainty alone to determine truncation adaptively, allowing rollouts to remain long wherever the model is reliable.

\textbf{Uncertainty threshold $\uncthres$ for rollout truncation (design choice 1).} Let $\zeta \in [0,1]$, we define
\begin{equation*}
\uncthres \defeq F^{-1}_{Y}(\zeta), \quad Y \defeq \unc(s,a), \, (s,a) \sim \D, 
\end{equation*}
where $F^{-1}_Y$ is the quantile function of $Y$. 
Quantile-based thresholds adapt naturally to different datasets and uncertainty scales. 
Setting $\zeta=1.0$ yields the maximum in-dataset uncertainty and encourages the longest possible rollouts; thresholds beyond this are avoided as they correspond to out-of-distribution uncertainty.\looseness=-1

\vspace{-0.5em}
\subsection{World Architecture for Long-Horizon Rollouts}
\label{sec:ensemble_arch}
\vspace{-0.5em}

To support long rollouts, we introduce simple and effective architectural choices for the world model ensemble that mitigate compounding errors and improve posterior fidelity.\looseness=-1 

\textbf{Larger ensemble size $N$ (design choice 2).}
While deep ensembles approximate Bayesian posteriors, their fidelity relies on member diversity. Under long-horizon rollouts, compounding errors amplify posterior inaccuracies, rendering small ensembles (e.g., $N=5$ in MBPO~\citep{janner2019trust}) inadequate. In our main experiments, we therefore use a larger ensemble size ($N=100$), which remains computationally feasible due to parallelization.\looseness=-1 

\textbf{Layer normalization in the world model (design choice 3).}
We further find that applying layer normalization (LN)~\citep{ba2016layer} within the world model significantly mitigates compounding errors for long-horizon rollouts.
This mirrors its role in reducing extrapolation error in model-free RL~\citep{ball2023efficient}. 
Concretely, we structure the world model as a delta predictor and apply LN to its hidden features. The model predicts the next state as $$\E{}{\hat s'} = s + \mathbf W^\top \mathrm{ReLU}(\mathrm{LN}(\psi(s,a)),$$with features $\psi(s,a) \in \R^{k}$ and output weights $\mathbf W \in \R^{k \times |\S|}$. For LN without affine parameters, under $\ell_2$ norm:\looseness=-1
\begin{equation}
\label{eq:LN}
\begin{aligned}
\| \E{}{\hat s'} - s\|  &\le \| \mathbf W\| \| \mathrm{ReLU}(\mathrm{LN}(\psi(s,a))\|  \\
&\le \| \mathbf W\| \|\mathrm{LN}(\psi(s,a))\| = \sqrt{k} \| \mathbf W\|,
\end{aligned}
\end{equation}
using that fact that $\|\mathrm{LN}(x)\| = \sqrt{k}$ for any $x\in \R^k$.
Applying the triangle inequality over an $H$-step imagined trajectory yields a linear compounding bound, $\|\E{}{\hat s_H} - s_0\| \le H \sqrt{k} \| \mathbf W\|$. Therefore, by controlling $\|\E{}{\hat s_H}\|$, we can upper bound the compounding error: $\|\E{}{\hat s_H} - s_H\| \le \|\E{}{\hat s_H}\| + \|s_H\|$.\looseness=-1

\vspace{-0.5em}
\subsection{Stable Recurrent RL Using Long-Horizon Rollouts}
\vspace{-0.5em}
\label{sec:recurrent_rl}

Lastly, we describe how we train policies from long-horizon rollouts. Following MBPO~\citep{janner2019trust}, we apply model-free RL to a mixture of real and model-generated data. Since real data is off-policy and $\hat \Pr_\D$ induces a POMDP, we use recurrent off-policy RL with a recurrent actor $\pi_\nu(a_t\mid h_t)$ and critic $Q_\omega(h_t,a_t)$, each equipped with a separate RNN encoder ($\nu_\phi(h_t)$ and $\omega_\phi(h_t)$) for stability~\citep{ni2022recurrent}.
Long rollouts and test-time adaptation require memory spanning entire episodes (up to $1000$ steps in our tasks), beyond the capacity of \textit{standard} RNNs~\citep{ni2023transformers}. We therefore extend memoroid~\citep{morad2024recurrent} to actor-critic architectures using linear recurrent units (LRUs)~\citep{orvieto2023resurrecting}, enabling long-term memory with parallel optimization. 
Since memoroid was originally designed for online POMDPs, this mismatch motivates additional design choices below.\looseness=-1

\textbf{Balancing real and imagined data with $\kappa \in (0,1)$ (design choice 4).} Following MBPO, we introduce a mixing ratio $\kappa$ to weight real versus imagined transitions in the RL loss (see \autoref{app:algorithm}). Prior work~\citep{zhang2023uncertainty} shows that when real data is of higher quality, a larger $\kappa$ is preferable.\looseness=-1

\textbf{Small context encoder learning rate $\eta_\phi$ (design choice 5).} 
Borrowing insights from online recurrent RL~\citep{luo2024efficient}, we use a much smaller learning rate for recurrent encoders, since history representations can diverge exponentially with history length even under tiny parameter changes.\looseness=-1

\newcommand{\novel}[1]{\textcolor{magenta}{#1}}
\newcommand{\module}[1]{\textcolor{blue}{#1}}
\vspace{1em}
\begin{algorithm}[h]
\footnotesize
\captionof{algorithm}{\algo: Full training loop}\label{algo:full}
\begin{algorithmic}[1]
    \Require Offline dataset $\D$, Online buffer $\mathcal B \gets \emptyset$. 
    \module{World ensemble}: $\ensemble$, \module{Recurrent actor}: $\pi_\nu$, \module{Recurrent critic}: $Q_\omega$. 
    \State Train $\ensemble$ on $\D$ until convergence
    \While{gradient steps $\le$ max gradient steps}
        \For{$k=1$ to $K$ rollouts} \Comment{Parallelized in practice}
            \State \novel{Sample $m_\theta \sim \ensemble$, fixed for the rollout}
            \State \novel{Sample initial history $h_t \sim \D$} 
            \State Append \Call{Rollout}{$h_t$, $m_\theta$, $\pi_\nu$} to $\mathcal B$
        \EndFor
        \For{$g=1$ to $G$ gradient steps}
            \State Optimize RL loss $L(Q_\omega, \pi_\nu; \tau, \kappa)$ with $\tau \sim \mathcal B$
        \EndFor
    \EndWhile
\end{algorithmic}
\end{algorithm}
\vspace{1em}
\begin{algorithm}[h]
\footnotesize
\captionof{algorithm}{\algo: Rollout function}\label{algo:rollout}
\begin{algorithmic}[1]
    \Require Terminal function $\term$, Max episode length $T$, Uncertainty threshold $\uncthres$, \novel{\sout{Rollout horizon $H$}}
    \Function{Rollout}{$h_t, m_\theta, \pi_\nu $} 
    \State Set $\hat h_t = h_t$ and $\mathrm{done}$ to False
    \While{$\mathrm{done}$ is False} 
        \State $\hat a_t \sim \pi_\nu(\hat h_t), (\hat r_{t+1}, \hat s_{t+1}) \sim m_\theta(\hat s_t, \hat a_t)$ 
        \State $\hat d_{t+1} = \term(\hat s_t,\hat a_t, \hat s_{t+1})$
        \State \novel{$\mathrm{trunc} = (\unc(\hat s_t, \hat a_t) > \uncthres)$}
        \State \novel{$\mathrm{done} = \hat d_{t+1} \lor \mathrm{trunc} \lor (t+1 \ge T)$}
        \State $\hat h_{t+1} = (\hat h_t, \hat a_t, \hat r_{t+1}, \hat s_{t+1})$, $t \gets t+1$
    \EndWhile
    \State \Return Trajectory $\tau$ (i.e., $h_t$ plus imagined rollout)
    \EndFunction
\end{algorithmic}
\end{algorithm}

\vspace{-0.5em}
\subsection{Overall Algorithm}
\vspace{-0.5em}
\label{sec:overall_algorithm}

We now present the full method in \autoref{algo:full}, integrating all design choices with the core rollout subroutine (\autoref{algo:rollout}). 
\algo remains conceptually simple, using single-step models and no uncertainty penalty, while addressing the challenges of Bayesian offline RL. 
We \novel{highlight the lines} where \algo differs from prior work. In particular, to respect the Bayesian objective, we fix a single model per rollout, rather than randomizing the model index at each step as in MBPO-style methods.
For completeness, \autoref{app:algorithm} provides details on RL loss and stopping criteria, although they are not needed to follow the experiments below.\looseness=-1

\vspace{-1em}
\section{Experiments}
\label{sec:experiments}
\vspace{-0.5em}

\textbf{Benchmarks.}
We evaluate on two widely used offline RL suites for continuous control: D4RL~\citep{fu2020d4rl} and NeoRL~\citep{qin2022neorl}. 
From \textbf{D4RL}, we use the \textbf{locomotion} benchmark (\textbf{12} datasets) spanning three MuJoCo environments: halfcheetah (hc), hopper (hp), and walker2d (wk). Each one provides four data regimes: random, medium-replay, medium, and medium-expert. 
From \textbf{NeoRL}, we use its \textbf{locomotion} benchmark (\textbf{9} datasets) built on the same environments, but with data collected from more deterministic policies, resulting in narrower coverage and different data regimes: Low, Medium, High. 
We also include the \textbf{Adroit} benchmark (\textbf{6} datasets) from D4RL, where a 28-DoF robotic arm manipulates objects to solve tasks (pen, door, hammer) with data collected from human demonstrations (human) and behavior cloning (cloned). The difficulty of Adroit lies in its high-dimensional control, small data size, and low-data quality. 
Lastly, we perform experiments on \textbf{AntMaze} (\textbf{6} datasets), where an 8-DoF ant robot must reach a goal position in a maze. Tasks span maze layouts of different sizes (umaze, medium, large) and start distributions (play, diverse), and are challenging for model-based RL due to navigation under sparse rewards. See more details in \autoref{app:dataset}. \looseness=-1

\textbf{Baselines.}
We compare against offline RL baselines grouped into three categories, including \textbf{conservative model-free RL} (behavior cloning (BC), CQL~\citep{kumar2020conservative}, IQL~\citep{kostrikov2021offline}, EDAC~\citep{an2021uncertainty}, ReBRAC~\citep{tarasov2023revisiting}),
\textbf{conservative model-based RL} (MOPO~\citep{yu2020mopo}, COMBO~\citep{yu2021combo}, RAMBO~\citep{rigter2022rambo}, ARMOR~\citep{bhardwaj2023adversarial}, MOBILE~\citep{sun2023model}, LEQ~\citep{park2025model}, MoMo~\citep{srinivasan2024offline}, ADMPO~\citep{lin2025any}, SUMO~\citep{qiao2025sumo}, VIPO~\citep{chen2025vipo}, ScorePen~\citep{liu2025leveraging}, ROMI~\citep{qiao2026modelbased}), and \textbf{Bayesian-inspired RL} (APE-V~\citep{ghosh2022offline}, MAPLE~\citep{chen2021offline}, CBOP~\citep{jeong2022conservative}, MoDAP~\citep{choi2024diversification}).\looseness=-1

\textbf{Algorithm setup.}
In our main experiments (\autoref{sec:benchmarking}), we adopt the following \textbf{default} design choices: ensemble size $N=100$, layer normalization in the world models, uncertainty threshold $\uncthres$ of $\zeta = 1.0$.
The learning rates of the actor and critic MLP heads are fixed to \num{1e-4}, while their RNN encoders share a tied learning rate $\eta_\phi$ swept over $[\num{3e-7}, \num{1e-4}]$, the exact range being benchmark-dependent. Similarly, the real data ratio $\kappa$ is swept within $[0.05, 0.95]$, also depending on the benchmark. Note that this yields \textit{two} per-dataset hyperparameters, consistent with common practice in conservative model-based RL~\citep{yu2020mopo,rigter2022rambo,sun2023model}.

Following common practice in offline RL benchmarking, including the baselines we compare against, we tune these hyperparameters based on final policy performance in the true environment. We report the best results for each dataset. 
Implementation details are provided in \autoref{app:implementation}. In \autoref{sec:analysis}, we conduct sensitivity and ablation studies on design choices.\looseness=-1

\vspace{-0.5em}
\subsection{Benchmarking Results} 
\label{sec:benchmarking}
\vspace{-0.5em}

\definecolor{hlcolor}{RGB}{225,230,255}
\renewcommand{\highlight}[1]{\cellcolor{hlcolor}{#1}}

\begingroup
\setlength{\tabcolsep}{2pt}       %
\renewcommand{\arraystretch}{1.1} %
\begin{table*}[h]
\caption{\footnotesize Comparison of offline RL methods on the \textbf{D4RL locomotion} benchmark. We report mean normalized scores for all baselines, with {\scriptsize $\pm$std} for competitive baselines. The \textbf{best mean score} is bolded, and \texthl{hlcolor}{marked} methods are statistically similar under a $t$-test. Our results use 6 seeds, each evaluated at the final step with 20 episodes.\looseness=-1}
\vspace{-0.5em}
\label{tab:d4rl_loco}
\centering
\resizebox{\linewidth}{!}{%
\begin{tabular}{lrrrrrrrrrrrrrrrr}
\toprule
 & \multicolumn{2}{c}{Model-free} & \multicolumn{9}{c}{Conservative model-based} & \multicolumn{4}{c}{Bayesian-inspired} & \multicolumn{1}{c}{Ours} \\
\cmidrule(lr){2-3} \cmidrule(lr){4-12} \cmidrule(lr){13-16} \cmidrule(lr){17-17}
Dataset & CQL & EDAC & {\scriptsize COMBO} & {\scriptsize RAMBO} & MOPO & LEQ & MOBILE & MoMo & ADMPO & SUMO & VIPO & APE-V & MAPLE & MoDAP & CBOP & \algo \\
\midrule
hc-random  & \valunc{31.3}{} & \valunc{28.4}{} & \valunc{38.8}{} & \valunc{40.0}{} & \valunc{38.5}{} & \valunc{30.8}{3.3} & \valunc{39.3}{3.0} & \valunc{39.6}{3.7} & \highlight{\valunc{\textbf{45.4}}{2.8}} & \valunc{34.9}{2.1} & \valunc{42.5}{0.2} & \valunc{29.9}{} & \valunc{41.5}{} & \valunc{36.5}{1.8} & \valunc{32.8}{0.4} & \valunc{37.0}{3.3} \\
hp-random   & \valunc{5.3}{}  & \valunc{25.3}{} & \valunc{17.9}{} & \valunc{21.6}{} & \highlight{\valunc{31.7}{}} & \highlight{\valunc{32.4}{0.3}} & \highlight{\valunc{31.9}{0.6}} & \valunc{18.3}{2.8} & \highlight{\valunc{32.7}{0.2}} & \valunc{30.8}{0.9} & \highlight{\valunc{\textbf{33.4}}{1.9}} & \valunc{31.3}{} & \valunc{10.7}{} & \valunc{8.9}{1.1} & \highlight{\valunc{31.4}{0.0}} & \highlight{\valunc{24.5}{28.5}} \\
wk-random   & \valunc{5.4}{}  & \valunc{16.6}{} & \valunc{7.0}{}  & \valunc{11.5}{} & \valunc{7.4}{}  & \valunc{21.5}{0.1} & \valunc{17.9}{6.6} & \valunc{26.8}{3.3} & \valunc{22.2}{0.2} & \valunc{27.9}{2.0} & \valunc{20.0}{0.1} & \valunc{15.5}{} & \valunc{22.1}{} & \valunc{23.1}{1.6} & \valunc{17.8}{0.4} & \highlight{\valunc{\textbf{34.1}}{6.8}} \\
\midrule
hc-med-rep & \valunc{45.3}{} & \valunc{61.3}{} & \valunc{55.1}{} & \highlight{\valunc{\textbf{77.6}}{}} & \valunc{72.1}{} & \valunc{65.5}{1.1} & \valunc{71.7}{1.2} & \valunc{72.9}{1.8} & \valunc{67.6}{3.4} & \highlight{\valunc{76.2}{1.3}} & \highlight{\valunc{77.2}{0.4}} & \valunc{64.6}{} & \valunc{69.5}{} & \valunc{67.3}{3.4} & \valunc{66.4}{0.3} & \valunc{72.1}{2.4} \\
hp-med-rep  & \valunc{86.3}{} & \valunc{101.0}{} & \valunc{89.5}{} & \valunc{92.8}{} & \valunc{103.5}{} & \valunc{103.9}{1.3} & \valunc{103.9}{1.0} & \valunc{104.0}{1.8} & \valunc{104.4}{0.4} & \highlight{\valunc{109.9}{1.4}} & \highlight{\valunc{109.6}{0.9}} & \valunc{98.5}{} & \valunc{85.0}{} & \valunc{94.2}{4.8} & \valunc{104.3}{0.4} & \highlight{\valunc{\textbf{110.6}}{0.7}} \\
wk-med-rep  & \valunc{76.8}{} & \valunc{87.1}{} & \valunc{56.0}{} & \valunc{86.9}{} & \valunc{85.6}{} & \highlight{\valunc{98.7}{6.0}} & \highlight{\valunc{89.9}{1.5}} & \highlight{\valunc{90.4}{7.7}} & \highlight{\valunc{95.6}{2.1}} & \valunc{78.2}{1.5} & \highlight{\valunc{98.4}{0.3}} & \valunc{82.9}{} & \valunc{75.4}{} & \highlight{\valunc{88.4}{4.2}} & \highlight{\valunc{92.7}{0.9}} & \highlight{\valunc{\textbf{99.3}}{19.3}} \\
\midrule
hc-medium  & \valunc{46.9}{} & \valunc{65.9}{} & \valunc{54.2}{} & \valunc{68.9}{} & \valunc{73.0}{} & \valunc{71.7}{4.4} & \valunc{74.6}{1.2} & \valunc{77.1}{0.9} & \valunc{72.2}{0.6} & \highlight{\valunc{\textbf{84.3}}{2.4}} & \valunc{80.0}{0.4} & \valunc{69.1}{} & \valunc{48.5}{} & \valunc{77.3}{1.1} & \valunc{74.3}{0.2} & \valunc{78.6}{1.6} \\
hp-medium   & \valunc{61.9}{} & \valunc{101.6}{} & \valunc{97.2}{} & \valunc{96.6}{} & \valunc{62.8}{} & \valunc{103.4}{0.3} & \valunc{106.6}{0.6} & \highlight{\valunc{\textbf{110.8}}{2.3}} & \valunc{107.4}{0.6} & \valunc{104.8}{2.1} & \valunc{107.7}{1.0} & \textemdash & \valunc{44.1}{} & \valunc{106.6}{1.9} & \valunc{102.6}{0.1} & \valunc{54.2}{7.2} \\
wk-medium   & \valunc{79.5}{} & \highlight{\valunc{92.5}{}} & \valunc{81.9}{} & \valunc{85.0}{} & \valunc{84.1}{} & \valunc{74.9}{26.9} & \highlight{\valunc{87.7}{1.1}} & \highlight{\valunc{95.0}{1.4}} & \highlight{\valunc{93.2}{1.1}} & \highlight{\valunc{94.1}{2.5}} & \highlight{\valunc{93.1}{1.8}} & \highlight{\valunc{90.3}{}} & \valunc{81.3}{} & \valunc{81.1}{6.5} & \highlight{\valunc{95.5}{0.4}} & \highlight{\valunc{\textbf{106.4}}{23.0}} \\
\midrule
hc-med-exp & \valunc{95.0}{} & \valunc{106.3}{} & \valunc{90.0}{} & \valunc{93.7}{} & \valunc{90.8}{} & \valunc{102.8}{0.4} & \highlight{\valunc{108.2}{2.5}} & \valunc{107.9}{1.9} & \valunc{103.7}{0.2} & \valunc{106.6}{2.4} & \highlight{\valunc{\textbf{110.0}}{0.4}} & \valunc{101.4}{} & \valunc{55.4}{} & \valunc{103.4}{4.3} & \valunc{105.4}{1.6} & \highlight{\valunc{109.5}{8.7}} \\
hp-med-exp  & \valunc{96.9}{} & \valunc{110.7}{} & \valunc{111.1}{} & \valunc{83.3}{} & \valunc{81.6}{} & \valunc{109.4}{1.8} & \valunc{112.6}{0.2} & \valunc{109.1}{0.4} & \valunc{112.7}{0.3} & \valunc{107.8}{0.7} & \valunc{113.2}{0.1} & \valunc{105.7}{} & \valunc{95.3}{} & \valunc{94.5}{7.8} & \valunc{111.6}{0.2} & \highlight{\valunc{\textbf{114.8}}{0.5}} \\
wk-med-exp  & \valunc{109.1}{} & \valunc{114.7}{} & \valunc{103.3}{} & \valunc{68.3}{} & \valunc{112.9}{} & \valunc{108.2}{1.3} & \valunc{115.2}{0.7} & \valunc{118.4}{0.9} & \valunc{114.9}{0.3} & \highlight{\valunc{\textbf{122.8}}{0.4}} & \valunc{117.7}{1.0} & \valunc{110.0}{} & \valunc{107.0}{} & \valunc{112.2}{2.8} & \valunc{117.2}{0.5} & \valunc{120.7}{1.3} \\
\midrule
AVG   & \valunc{61.6}{} & \valunc{76.0}{} & \valunc{66.8}{} & \valunc{68.9}{} & \valunc{70.3}{} & \valunc{76.9}{} & \valunc{80.0}{} & \valunc{80.9}{} & \valunc{81.0}{} & \valunc{81.5}{} & \valunc{\textbf{83.6}}{} & \textemdash & \valunc{61.3}{} & \valunc{74.5}{} & \valunc{79.3}{} & \valunc{80.1}{} \\
\bottomrule
\end{tabular}
}
\end{table*}
\endgroup

We present results for D4RL locomotion in \autoref{tab:d4rl_loco}, NeoRL locomotion in \autoref{tab:neorl_loco}, D4RL Adroit in \autoref{tab:d4rl_adroit}, and D4RL AntMaze in \autoref{tab:d4rl_antmaze}. 
Overall, \algo is competitive with leading conservative methods. The average normalized scores are 80.1 vs. 83.6 for the best baseline (VIPO) in D4RL locomotion, 64.7 vs. 73.3 for the best baseline (VIPO) in NeoRL locomotion, 21.1 vs. 28.1 for the best model-based baseline (MoMo) in D4RL Adroit, and 28.8 vs. 64.9 for the best model-based baseline (LEQ) in D4RL AntMaze. 
\algo establishes \textbf{new state-of-the-art} (SOTA) mean performance on 7 of the 33 datasets, with 3 gains statistically significant. In terms of magnitude, it advances walker-random-v2 (27.9 $\to$ 34.1), walker-medium-v2 (95.5 $\to$ 106.4), pen-cloned-v1 (74.1 $\to$ 91.3), and hammer-cloned-v1 (5.0 $\to$ 14.4).
It also surpasses prior Bayesian-inspired methods on most tasks, despite their reliance on explicit conservatism.\footnote{An exception is MoDAP, which avoids explicit conservatism by fine-tuning model during policy learning.}
To ensure fair comparison with conservative approaches, we also evaluate a design-matched conservative variant of \algo by incorporating uncertainty penalties while keeping all other design choices identical (see \autoref{sec:analysis}).

\textbf{When is \algo better than conservative methods?}
We group the 33 datasets into 3 categories:\looseness=-1
\begin{itemize}[noitemsep, topsep=0pt, leftmargin=*]
\item \textbf{Low-quality datasets (11 tasks):} 3 D4RL random, 3 NeoRL Low, 2 Adroit door, 2 Adroit hammer, and AntMaze umaze-diverse. Consistent with the bandit insight (\autoref{sec:bandit}), \algo ranks among the best methods in 5 of 11 tasks and achieves reasonable performance in 10 of 11.\looseness=-1
\item \textbf{Medium-quality, moderate coverage (7 tasks):} 3 D4RL medium-replay, 3 D4RL medium-expert\footnote{D4RL expert and NeoRL High are treated as medium-quality, as their behavior policies are not fully optimal.}, and Adroit pen-cloned.
\algo ranks among the best methods in 5 of 7 tasks, without any failures.\looseness=-1
\item \textbf{Medium-quality, narrow coverage (15 tasks):} 3 D4RL medium, 3 NeoRL Medium, 3 NeoRL High, Adroit pen-human, 5 AntMaze tasks. \algo is weaker, ranking among the best methods in only 3 of 15 tasks. 
In AntMaze, narrow coverage and contact-rich dynamics make world modeling difficult, while sparse rewards hinder exploration (see \autoref{app:antmaze} for analysis). Adding uncertainty penalties in \algo further degrades performance (\autoref{tab:ablation}), suggesting that AntMaze remains challenging for many model-based RL methods.\looseness=-1
\end{itemize}

\vspace{-0.5em}
\subsection{What Matters in \algo?}
\label{sec:analysis}

\newcommand{\degL}[1]{\cellcolor{red!10}#1}   %
\newcommand{\degM}[1]{\cellcolor{red!35}#1}   %
\newcommand{\degD}[1]{\cellcolor{red!60}#1}   %
\newcommand{\impL}[1]{\cellcolor{green!15}#1} %
\newcommand{\impM}[1]{\cellcolor{green!35}#1} %
\newcommand{\impD}[1]{\cellcolor{green!60}#1} %

\begingroup
\setlength{\tabcolsep}{4pt}       %
\renewcommand{\arraystretch}{1.1} %
\begin{table*}[h]
\caption{\footnotesize 
\textbf{Sensitivity and ablation results averaged across benchmarks.} 
The \texthl{hlcolor}{highlighted} setting ($N{=}100$, $\lambda{=}0.0$, $\zeta{=}1.0$, using the entire history as agent input) is the main result. Ablations vary one hyperparameter at a time, except for the Markov agent, where we sweep the real data ratio $\kappa$ for a fair comparison. 
Shading shows degradation level: \texthl{red!10}{light} (3–10), \texthl{red!35}{medium} (10–30), \texthl{red!60}{dark} ($>$30).
Full per-dataset results appear in \autoref{tab:ablation_full}.
}
\vspace{-0.5em}
\label{tab:ablation}
\centering
\resizebox{0.9\linewidth}{!}{%
\begin{tabular}{l|rrr|rrrrr|rrrr|rr}
\toprule
& \multicolumn{3}{c|}{Ensemble size $N$} 
& \multicolumn{5}{c|}{Unc.\ penalty coef.\ $\lambda$} 
& \multicolumn{4}{c|}{Truncation threshold $\zeta$} & \multicolumn{2}{c}{Agent input} \\
\cmidrule(lr){2-4} \cmidrule(lr){5-9} \cmidrule(lr){10-13} \cmidrule(lr){14-15}
Benchmark
& \highlight{100} & 20 & 5
& \highlight{0.0} & 0.04 & 0.2 & 1.0 & 5.0
& \highlight{1.0} & 0.999 & 0.99 & 0.9 
& \highlight{Hist.} & Mark. \\
\midrule
D4RL Locomotion (12 tasks)
& \valunc{80.1}{}             %
& \degL{\valunc{71.6}{}}      %
& \degM{\valunc{65.9}{}}      %
& \valunc{80.1}{}             %
& \valunc{79.8}{}             %
& \valunc{80.4}{}             %
& \degL{\valunc{73.1}{}}      %
& \degM{57.4}
& \valunc{80.1}{}             %
& \degM{\valunc{69.6}{}}      %
& \degD{\valunc{45.1}{}}      %
& \degD{\valunc{22.5}{}}      %
& \valunc{80.1}{}
& \degL{75.8}
\\
NeoRL Locomotion (9 tasks)
& \valunc{64.7}{}
& \degL{\valunc{60.3}{}}      %
& \degL{\valunc{59.8}{}}      %
& \valunc{64.7}{}
& \degL{\valunc{61.5}{}}      %
& \degL{\valunc{59.8}{}}      %
& \degL{\valunc{58.3}{}}      %
& \degM{42.5}
& \valunc{64.7}{}
& \degL{\valunc{58.8}{}}      %
& \degM{\valunc{36.3}{}}      %
& \degD{\valunc{16.7}{}}      %
& \valunc{64.7}{}
& 66.8 
\\
D4RL Adroit (3 non-zero tasks)
& \valunc{42.2}{}
& \degM{\valunc{30.0}{}}      %
& \degL{\valunc{32.6}{}}      %
& \valunc{42.2}{}
& \degL{\valunc{33.7}{}}      %
& \degL{\valunc{34.2}{}}      %
& \degL{\valunc{38.1}{}}      %
& \degL{\valunc{34.1}{}}
& \valunc{42.2}{}
& \degM{\valunc{17.8}{}}      %
& \degM{\valunc{16.0}{}}      %
& \degD{\valunc{1.9}{}}       %
& \valunc{42.2}{}
& \degL{36.4}
\\
D4RL AntMaze (4 non-zero tasks) 
& \valunc{43.2}{}
& \degM{33.2}
& \degM{28.8}
& \valunc{43.2}{}
& \degD{5.0}
& \degD{3.2}
& \degD{4.2}
& \degD{12.0}
& \valunc{43.2}{}
& \degL{35.8}
& \degL{38.4}
& \degM{32.7}
& \valunc{43.2}{}
& \degD{1.3}
\\
\bottomrule
\end{tabular}
}
\end{table*}
\endgroup

To understand which components are critical to the success of \algo, we conduct sensitivity studies on the design choices introduced in \autoref{sec:algorithm} and ablation studies on the uncertainty penalty and Markov agent (shown in \autoref{app:ablation}), using the best hyperparameters identified in \autoref{sec:benchmarking}.\looseness=-1

\begin{figure*}[h]
    \centering
    \includegraphics[width=\linewidth]{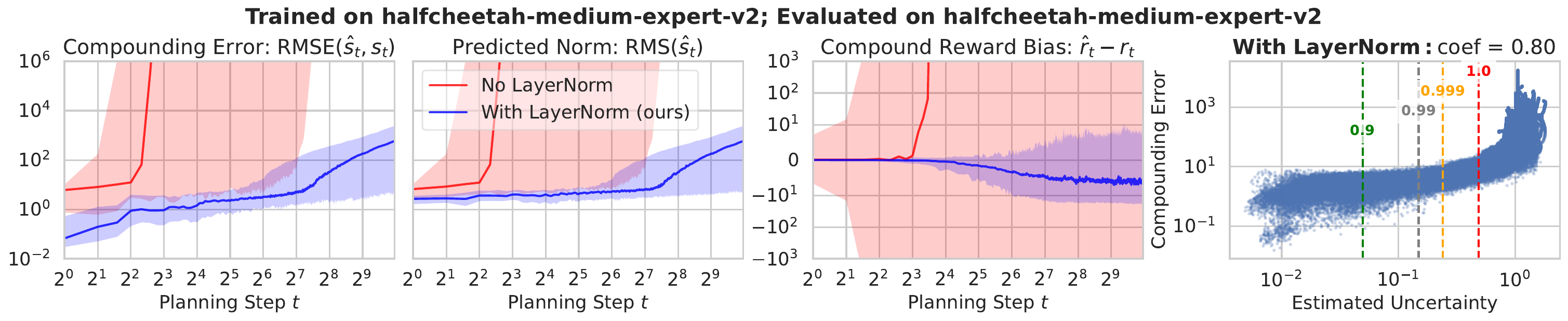}
    \vspace{-2em}
    \caption{\footnotesize\textbf{Effect of LayerNorm in world models} trained and evaluated on halfcheetah-medium-expert-v2.  
    We collect 200 rollouts and truncate only on \texttt{float32} overflow, without using an uncertainty threshold.
    For each metric, we plot the \textbf{median} (solid line) together with the \textbf{5-95\% percentile band} across rollouts. 
    The rightmost scatter plot show the Spearman's rank coefficient in the with-LayerNorm setting; vertical lines mark uncertainty thresholds $\zeta \in\{0.9, 0.99,0.999, 1.0\}$. 
    \textbf{Full results} and plotting setup are shown in \autoref{app:LN}.\looseness=-1 
    }
    \vspace{-1em}
    \label{fig:compound_main}
\end{figure*}

\textbf{Adaptive long-horizon rollout (\autoref{sec:how_longhorizon}) is the \textit{decisive} factor for success.} 
We vary the uncertainty quantile $\zeta \in \{0.9, 0.99, 0.999, 1.0\}$ and find that the most aggressive choice, $\zeta=1.0$, yields the best performance on almost all datasets.
As shown in \autoref{fig:horizon_main} (with full results in \autoref{app:unc_thres} and a summary in \autoref{tab:ablation}), $\zeta=1.0$ yields \textbf{64-512 steps} for 75th-percentile horizon and \textbf{256-1000 steps} for maximum horizon, in 21 out of 23 tasks with $T=1000$.
These horizons are far longer than typically used in model-based RL, contradicting the conventional view that long rollouts are harmful when horizons are fixed~\citep{janner2019trust}.
With \emph{adaptive} horizons, compounding errors remain controlled, allowing rollouts to extend deep into the episode.\looseness=-1

In contrast, performance often collapses to \textit{near zero} (i.e., scores $\le 5$) with smaller thresholds such as $\zeta = 0.9$ (16 datasets) or $0.99$ (6 datasets). The consistent failure mode behind these cases is \textbf{severe value overestimation}, where estimated Q-values ($\E{(h_t,a_t)\sim \D}{Q_\omega(h_t,a_t)}$) are much higher (2nd column in \autoref{fig:horizon_main}) compared to the actual performances.
These results support our analysis in \autoref{sec:why_longhorizon}: without explicit conservatism, Bayesian RL mitigates overestimation by relying more on imagined rewards over long horizons, which are lower-biased (reward panel in \autoref{fig:compound_main}). Adaptive horizons ensure that the model is trusted only within its in-distribution confidence region, enabling effective long-horizon rollouts.\looseness=-1

\textbf{LayerNorm in the world model (\autoref{sec:ensemble_arch}) and uncertainty truncation (\autoref{sec:how_longhorizon}) jointly enables long-horizon rollouts.}
\autoref{fig:compound_main} (see \autoref{app:LN} for full results) shows that without LN, predicted state norms diverge quickly (around 10 steps here), driving exploding compounding errors.
With LN, state norms remain bounded, which suppresses state error growth and stabilizes reward predictions. 
This matches our intuition based on \autoref{eq:LN}: by normalizing features at each step, LN constrains prediction magnitudes and thus compounding error.
Moreover, the rightmost scatter plot shows what uncertainty-based truncation \emph{would} accomplish: although we display full rollouts to reveal their divergence, applying thresholds would cut them off before entering high-error regions. Thus, LayerNorm prevents error explosion, while uncertainty cutoff provides a complementary safeguard.\looseness=-1

\textbf{Larger ensemble size (\autoref{sec:ensemble_arch}) improves performance.} 
As summarized in \autoref{tab:ablation} and detailed in \autoref{tab:ablation_full}, reducing the ensemble size $N$ to $20$ or $5$ degrades performance, though often moderately. This indicates that $N=100$ is close to the practical limit of what ensembling can offer for these tasks, while smaller ensembles remain viable.\looseness=-1

\textbf{Context encoder learning rate and real data ratio (\autoref{sec:recurrent_rl}) have to be tuned per dataset.} 
The best values of $\eta_\phi$ and $\kappa$ are reported in \autoref{tab:sweep_d4rl_loco}--\autoref{tab:sweep_adroit}, with selective learning curves shown in \autoref{fig:enclr}--\autoref{fig:realw}.
We find the optimal encoder learning rates are generally smaller than those used in online POMDPs~\citep{luo2024efficient}; for example, \num{1e-6} and \num{3e-7} yield the best results on 7 datasets. Intuitively, in Bayesian offline RL, smaller learning rates help curb overestimation by slowing down learning.
For the real data ratio, $\kappa = 0.05$, widely used in prior conservative algorithms~\citep{yu2020mopo}, often yields poor performance. Large $\kappa$ acts as a \textit{softened} regularizer, limiting overtrust in the model while avoiding explicit penalties.\looseness=-1

\textbf{Ablation: Introducing explicit conservatism to \algo helps some tasks, but not on average.} 
We study explicit conservatism by penalizing imagined rewards with $\lambda \frac{\unc(\hat s,\hat a)}{\E{(s,a)\sim\D}{\unc(s,a)}}$, normalized by the dataset average so that $\lambda$ is more comparable across datasets.
As summarized in \autoref{tab:ablation}, strong penalties ($\lambda=1.0$ or $5.0$) generally hurt performance, while a small penalty ($\lambda=0.04$) performs comparably to \algo ($\lambda=0$). Effect remains dataset-dependent, as detailed in \autoref{tab:ablation_full}.
Consistent with the bandit intuition, heavy penalties significantly worsen performance on 6 of 8 low-quality datasets, while leaving \verb|*|-medium-expert datasets largely unaffected.
In contrast, some tasks benefit substantially: hopper-random-v2 (24.5 $\to$ 48.2, a new SOTA), hopper-medium-v2 (54.2 $\to$ 105.8), and pen-human-v1 (20.8 $\to$ 35.9), but each requires a different $\lambda$, highlighting the need for tuning. However, penalties are not a universal remedy for narrow data: Walker2d-v3-Medium and halfcheetah-medium-v2 still fall short of the best baselines, and in AntMaze, penalties consistently harm performance.\looseness=-1

\vspace{-0.5em}
\subsection{Computation Costs}
\label{sec:computation}

The total training cost of \algo consists of two phases: \textit{world-model training} and \textit{agent training}. In practice, the world model is trained once per seed and its ensemble checkpoint is reused when tuning agents, making world-model training a \textit{one-time cost}, while agent training dominates the overall computation. All experiments were run on a single NVIDIA L40S GPU (48 GB) with 3 seeds in parallel, using a fully vectorized JAX implementation.\looseness=-1

For halfcheetah-medium-expert-v2 with 2M gradient steps and ensemble size $N=100$, world-model training costs 6 hours per seed and 10.7 GB memory, while recurrent agent training costs 4.4 hours per seed and 2.6 GB memory. As detailed in \autoref{app:compute}, the rollout inference cost is minimal, and increasing ensemble size $N$ or rollout size $K$ has only a minor effect on runtime.\looseness=-1

\vspace{-0.5em}
\section{Discussion}
\label{sec:concl}

\textbf{Practical guidelines for \algo.} Based on our experiments, we suggest the following guidelines when applying \algo to new tasks or after modifying key modules:

\begin{itemize}[noitemsep, topsep=0pt, leftmargin=*]
    \item \textit{When to use \algo?} \algo is best suited for low-quality or moderate-coverage datasets and settings that permit \textit{test-time adaptation}. It may underperform on narrow-coverage ones possibly due to current limits in posterior modeling. In safety-critical domains, uncertainty penalties can be reintroduced to prioritize safety.
    \item \textit{Defaults first.} Use a large ensemble size, LayerNorm in the world model, and uncertainty threshold $\zeta=1.0$.
    \item \textit{Encoder learning rate and real data ratio.} Tune $\eta_\phi$ within a wide range, typically $3$–$300\times$ smaller than the MLP head’s learning rate. Adjust $\kappa$ within $[0.5,0.95]$.
    \item \textit{Monitor overestimation.} Mitigate it by reducing the discount factor $\gamma$ or lowering the learning rate $\eta_\phi$, as suggested by our analysis in \autoref{sec:why_longhorizon}.
\end{itemize}

\textbf{Conclusion.} 
We revisit (explicit) conservatism as the dominant principle in offline RL and show that it is not universally optimal. Instead, we advance a Bayesian perspective that trains history-dependent agents to maximize expected rewards over a posterior of world models. Building on this principle, we show that \textit{long-horizon rollouts} play a critical role in reducing value overestimation once explicit conservatism is removed, and we propose key design choices that enable both performing and learning from such rollouts, yielding the \algo algorithm. Across diverse benchmarks, \algo is competitive with conservative baselines and particularly effective on low-quality and moderate-coverage datasets. More broadly, our work suggests a shift in offline RL toward the \textit{era of experience}~\citep{silver2025welcome}, where agents are designed to adapt and refine behavior as uncertainty resolves through test-time interaction.\looseness=-1 

\textbf{Future work.} We see several promising directions for future work. Improving world models, through multi-step prediction or generative models, can help push the limits of Bayesian offline RL. Better uncertainty quantification remains key for planning, suggesting deeper connections to Bayesian inference are worth exploring. Finally, reducing sensitivity to hyperparameters and dataset characteristics, as well as replacing current test-environment-based tuning with offline policy selection, would make \algo more robust and practical; since \algo already fits a Bayesian world model, a natural direction is to carry out such validation under the learned posterior~\citep{fellows2025sorel}.\looseness=-1

\section*{Acknowledgements}

We especially thank Yihao Sun for insightful discussions on the effect of layer normalization on controlling compounding errors and on offline model-based RL more broadly. We especially thank Steven Morad for his recurrent RL implementation and, together with his lab students, for detailed comments on the paper. We also thank Ziyan Luo, Lu Li, Csaba Szepesvári, Bernd Frauenknecht, Aditya Mahajan, Rasool Fakoor, and Erick Delage for valuable feedback on various aspects of this work. We are grateful to the ICLR and ICML reviewers for their constructive comments, which have greatly improved the paper.

This work was funded by Canada CIFAR AI Chairs program. This work was enabled by compute resources, software and technical help provided by Mila (\href{https://mila.quebec/en}{mila.quebec}) and the Digital Research Alliance of Canada (\href{https://www.alliancecan.ca/en}{alliancecan.ca}).

\section*{Impact Statement}

This paper presents work whose goal is to advance the field of Machine Learning. There are many potential societal consequences of our work, none which we feel must be specifically highlighted here.

{\small
\bibliography{citation}
\bibliographystyle{icml2026}
}

\newpage
\appendix
\onecolumn

\makeatletter
\def\addcontentsline#1#2#3{%
  \addtocontents{#1}{\protect\contentsline{#2}{#3}{\thepage}{\@currentHref}}%
}
\makeatother

\begingroup
  \setcounter{tocdepth}{2}
 \makeatletter
  \let\old@starttoc\@starttoc
  \def\@starttoc#1{\begingroup\parskip=2pt\old@starttoc{#1}\endgroup}
  \makeatother
  
  \tableofcontents
\endgroup

\section{Related Work}

\subsection{Offline Conservative Model-Based Methods}

Model-based methods can be broadly grouped into background planning and decision-time planning~\citep[Chapter 8.8]{sutton2018reinforcement}. 
Background planning, such as Dyna-style methods~\citep{sutton1990integrated,janner2019trust}, uses model-generated rollouts to learn a global policy or value function, which is then queried for action selection.  
Decision-time planning, such as model predictive control (MPC), performs online look-ahead at test time to select actions for the current state. 
Our work belongs to \textit{background planning}. Below, we review these paradigms within the offline setting, focusing on how they incorporate explicit conservatism.

\textbf{Conservative background planning algorithms.} In this category of work, most methods enforce conservatism by constructing a \textit{pessimistic MDP} that penalizes imagined state-action pairs, which we refer to as \textbf{uncertainty-penalized pessimism} in \autoref{app:bg_conservatism}. The earliest examples are MOReL~\citep{kidambi2020morel}  and MOPO~\citep{yu2020mopo}, both approximating the uncertainty set with world ensembles. 
MOReL adopts strong pessimism by mapping any state-action pair with ensemble disagreement above a threshold to an absorbing state with a large penalty. This aggressive design disables value bootstrapping on uncertain regions and explains why MOReL supports long-horizon planning (up to 500 steps) even with severe compounding errors.
In contrast, MOPO applies a milder penalty to imagined rewards based on the aleatoric uncertainty, retaining bootstrapping and thus limiting rollouts to very short horizons (typically 1-5 steps). 

Building on MOPO, several follow-up works redesign the uncertainty quantifier: MOBILE~\citep{sun2023model} uses inconsistencies in Bellman operators across ensemble members; \citet{kim2023model} uses the inverse frequencies of state-action pairs; MoMo~\citep{srinivasan2024offline} adopts energy-based models; SUMO~\citep{qiao2025sumo} employs k-nearest neighbors. 
Other works introduce alternative conservative mechanisms: COMBO~\citep{yu2021combo} uses CQL regularizer; LEQ~\citep{park2025model} uses lower expectile regression. 
From the model side, several works improve the model learning or rollout sampling: \citet{luo2024reward} trains discriminators on $(s,a,s')$ to resample model rollouts by fidelity;  VIPO~\citep{chen2025vipo} augments the standard MLE loss with a value-consistency objective that aligns behavior-policy values under model and true dynamics.

A different line of work jointly trains adversarial world models and policies, rather than freezing the models after pretraining, which we refer to as \textbf{adversarial model-based pessimism} in \autoref{app:bg_conservatism}.
Several adversarial objectives have been explored: RAMBO~\citep{rigter2022rambo} minimizes value estimates; \citet{yang2022unified} minimizes the divergence between real and imagined state-action distributions; ARMOR~\citep{bhardwaj2023adversarial} minimizes value differences between the current and a reference policy; \citet{rigter2023one} extends the uncertainty set to incorporate aleatoric uncertainty.

Our work differs from these prior lines of research in principle, core components, and algorithmic design. We replace conservatism with Bayesianism as the guiding principle, substituting the standard Markov actor-critic with a history-dependent one required by Bayesian principle, and derive the \algo algorithm driven by long-horizon planning, fundamentally distinct from prior methods. Moreover, prior works typically rely on tuning two critical hyperparameters: the \textit{conservatism coefficient} $\lambda$ and \textit{rollout horizon} $H$, to achieve strong performance. In contrast, \algo eliminates the need for $\lambda$ by design and replaces $H$ with an uncertainty quantile $\zeta$, which remains fixed at $1.0$ in main experiments.

\textbf{Conservative decision-time planning algorithms.} 
In this line of work, a planner is used at test time to select actions from the current state. The planner is composed of a world model, a conservative policy obtained by behavior cloning~\citep{argenson2020model}, and a conservative value function learned by fitted Q evaluation~\citep{zhan2021model} or other pessimistic estimators~\citep{janner2021offline}. Given the conservative policy as a prior, the planner samples exploratory trajectories from world models and chooses the action with the highest estimated value. Decision-time planning can discover better actions at test time by performing targeted exploration from a specific state, particularly useful in unseen states. It is distinct from classical RL because it does not learn a global policy to maximize returns; instead, it directly searches for actions.

\subsection{Offline Bayesian-Inspired and Non-Conservative RL}

\textbf{Bayesian-inspired algorithms.} 
Several offline RL works draw inspiration from Bayesian ideas, such as using model posteriors and connections to Bayes-adaptive MDPs~\citep{duff2002optimal}. However, their formulations or algorithms typically differ from optimizing the epistemic POMDP in \autoref{eq:bayes_obj}, which is the focus of our work.
For example, although \citet{ghosh2022offline} introduces the Bayesian model-based formulation in \autoref{eq:bayes_obj}, their proposed APE-V algorithm adopts a \textit{conservative model-free} approximation. Rather than maintaining a posterior over dynamics models, APE-V uses an ensemble of belief-state Q-functions as a surrogate posterior over values, trained purely via TD errors. Each Q-function is trained using SAC-N~\citep{an2021uncertainty}, which enforces conservatism by minimizing over Q-ensemble predictions.

Similar to our algorithm, \textbf{MAPLE}~\citep{chen2021offline} and \textbf{MoDAP}~\citep{choi2024diversification} learn an ensemble of models for recurrent policy training. Unlike our approach, both algorithms perform short-horizon rollouts ($\le 20$ steps). Moreover, both store the hidden states of recurrent policies in the replay buffer and reuse them to initialize the policy’s context for rollouts and updates. This design is known to induce context staleness in recurrent RL~\citep{kapturowski2018recurrent}, whereas our algorithm avoids this issue by storing and sampling full histories directly.
MAPLE also reintroduces an uncertainty penalty and terminates rollouts based on a predefined state boundary, making it a conservative method. MoDAP remains penalty-free but fine-tunes the world models during policy learning to maintain ensemble diversity; therefore, its objective departs from the epistemic POMDP, which requires freezing the posterior during policy optimization. In contrast, our work aims to stay as close as possible to the epistemic POMDP: we avoid explicit conservatism, develop design choices for long-horizon rollouts, and do not fine-tune models during policy learning. 

Other works are Bayesian-inspired in different ways. CBOP~\citep{jeong2022conservative} uses a model posterior to weight multi-step TD targets in an MVE-style update~\citep{feinberg2018model} and applies lower-confidence penalties. BA-MCTS~\citep{chen2026bayes} proposes a Bayesian Monte Carlo planning method with an uncertainty penalty, used as a policy-improvement operator.

\textbf{Offline RL algorithms without explicit conservatism.} 
Although conservatism dominates modern offline RL, classic offline (batch) RL was originally developed without any conservatism~\citep{lagoudakis2003least,ernst2005tree,riedmiller2005neural}. These methods are based on \textit{fitted Q-iteration}, which directly applies Markov Q-learning to offline data. While effective on small-scale problems with sufficient data coverage~\citep{riedmiller2005neural}, such algorithms are known to fail in high-dimensional settings~\citep{fujimoto2019off}.
More recently, \citet{agarwal2020optimistic} provides an \emph{optimistic} perspective, showing that standard off-policy RL trained on the 50M transitions from DQN’s replay buffer can outperform the behavior policy. Likewise, \citet{yarats2022don} demonstrates that off-policy RL can surpass conservative methods on diverse-coverage datasets collected by unsupervised agents. 

In the model-based setting, MBPO~\citep{janner2019trust}, using short-horizon rollouts without conservatism, is known to underperform conservative counterparts such as MOPO in offline RL benchmarks~\citep{yu2020mopo}. However, MuZero Unplugged~\citep{schrittwieser2021online} shows that MuZero, without conservatism, can achieve strong performance on the RL Unplugged suite~\citep{gulcehre2020rl}, which contains 200M Atari transitions and DMC control datasets using replay buffer from near-optimal RL agents.  
\citet{zhai2024optimistic} observes that making MOPO optimistic instead of pessimistic can achieve strong performance on halfcheetah-random-v2, but worse on other D4RL tasks. 

In summary, prior work on offline RL without explicit conservatism largely relies on standard off-policy RL, which treats offline RL as optimizing a single MDP and thus follows an \textit{optimistic} principle~\citep{agarwal2020optimistic}. Such optimism can work well when the dataset has broad state-action coverage but typically fails under limited coverage. Our approach takes a different non-conservative path: instead of being optimistic, it follows a neutral Bayesian principle that lies \textbf{between optimism and pessimism}. This allows \algo to avoid the failure modes of optimistic methods and, for the first time to our knowledge, extends the effectiveness of non-conservative offline RL to \textit{low-quality} and \textit{moderate-coverage} datasets, while still benefiting from diverse coverage when available.

\textbf{Test-time adaptation and offline-to-online RL.} 
Finally, our method performs test-time adaptation, but it differs from prior approaches in its mechanism. For example, \citet{hong2023confidenceconditioned, swazinna2023userinteractive} adapt at deployment by dynamically tuning the degree of conservatism without retraining. In contrast, our Bayes-adaptivity does not adjust hyperparameters or conservatism at test time. Instead, it arises purely from the policy's in-context memory: by conditioning on observed history, the recurrent policy maintains an implicit belief about the true MDP, adapting its behavior intra-episode without any parameter updates.

While one could view in-context memory as inducing an implicit policy improvement~\citep{moeini2025survey}, we do not treat this mechanism as explicit online learning in this work (i.e., offline-to-online RL~\citep{nair2020awac}). In offline-to-online setting, \citet{sentenac2025balancing} analyzes bandit problems and shows that optimism can be preferable to pessimism during online learning, depending on the available interaction budget. While their focus is on a different regime, this observation is conceptually related to our finding that pessimism can hinder online adaptation. 

\subsection{Model-Generated Rollouts in RL}

\textbf{Effect of rollout length on value estimation.} 
\citet{sims2024edge} identifies the \textit{edge-of-reach} problem in offline conservative model-based RL (MBRL): when short-horizon rollouts are used, bootstrapping often occurs on states whose Q-values are never directly trained, leading to substantial overestimation once uncertainty penalties vanish (e.g., under true dynamics). Closely related to our work, they emphasize that such overestimation induced by short rollouts constitutes a distinct failure mode in offline MBRL, separate from classic compounding errors. To mitigate this, they replace dynamics-based uncertainty penalties with value-based ones via pessimistic Q-ensembles~\citep{an2021uncertainty}, while still relying on short-horizon rollouts.

Our work extends this line of reasoning in two ways. (1) We show that the same overestimation mechanism arises under Bayesian Bellman backups and extend \citet[Proposition 1]{sims2024edge} to Bayesian setting in \autoref{app:proof_bootstrap}. (2) Instead of conservative short-horizon rollouts, we use adaptive long-horizon rollouts that exploit the vanishing factor $\gamma^H$ to naturally reduce the bootstrapped error and remove the need for conservatism.

\textbf{Reducing compounding errors and the scale of rollout horizon.} 
Model-generated rollouts suffer from compounding prediction errors~\citep{talvitie2014model,lambert2022investigating}, which can grow quickly with the rollout horizon. These errors cause the performance gap between the policy evaluated under the learned model and under the true MDP, as formalized by theory such as the simulation lemma (see \citet[Lemma 9]{uehara2021pessimistic} and our \autoref{lemma:simulation_lemma}). To prevent large compounding errors from harming policy learning, most online and offline MBRL methods~\citep{janner2019trust,yu2020mopo,lu2021revisiting,hafner2023mastering,hansen2024td} restrict rollouts to very short horizons (typically \textbf{1-20 steps}). These approaches often share a minimalist setup: standard MLP world models without layer normalization, pure MLE training, and rollout procedures without uncertainty awareness.

Prior works attempt to reduce compounding errors and thus enable longer rollouts along three ways: \textit{model architecture}, \textit{training objective}, and \textit{inference strategy}.
On the architectural side, improving model smoothness can enhance generalization on unseen states~\citep{asadi2018lipschitz}. 
On the training-objective side, multi-step architecture predicts future states given an initial state and an action sequence~\citep{asadi2019combating}, enabling horizons up to 500 steps with Transformers in online RL~\citep{ma2024transformer} and up to 50 steps with RNNs in offline RL~\citep{lin2025any}. 
For inference strategies, MOREC~\citep{luo2024reward} resamples model rollouts using a fidelity estimator and scales horizons to 100 steps.\looseness=-1

\algo keeps the training objective fixed to standard one-step MLE, while contributing simple and effective components along the other two dimensions. Architecturally, we apply layer normalization~\citep{ba2016layer} to stabilize prediction magnitudes (\autoref{sec:ensemble_arch}), increasing model smoothness~\citep{asadi2018lipschitz} and thus generalization; at inference time, we truncate rollouts using an uncertainty threshold as a proxy for compounding error (\autoref{sec:how_longhorizon}). Together, these components enable \algo to successfully use horizons of \textbf{64-512} steps, which is rare in MBRL literature.

\textbf{Mechanism of adaptive horizon.} 
As discussed in \autoref{sec:how_longhorizon}, several prior works have used uncertainty thresholds to adaptively truncate rollouts. In the online RL setting, M2AC~\citep{pan2020trust} truncates rollouts for states whose uncertainty ranks in the top 25\% of the current-step batch and caps the horizon at $10$. Similarly, MACURA~\citep{frauenknecht2024trust} computes a 95\% uncertainty quantile from the batch of first-step predictions in current rollouts, also with a maximum horizon of $10$. Infoprop~\citep{frauenknecht2025rollouts} extends MACURA by using accumulated uncertainty along the rollout as an additional truncation criterion and increases the maximum horizon to around $50$.
In the offline RL setting, MOPP~\citep{zhan2021model} adopts an 85\% uncertainty quantile from the offline dataset to filter rollouts and imposes a maximum horizon of $16$. 
TATU~\citep{zhang2023uncertainty} uses truncation thresholds proportional to the maximum uncertainty in the offline dataset, with a coefficient of $0.5$ and a maximum horizon of $5$. 

A key difference is the \emph{role} of truncation. Prior work uses adaptive horizons conservatively to keep rollouts short and limit compounding model error. In contrast, \algo is motivated by the observation that, once explicit conservatism is removed, \emph{long horizons become necessary} to reduce overestimation by reducing reliance on bootstrapping. Thus, we use uncertainty-based truncation to preserve the longest reliable rollouts possible, rather than to enforce short ones.

\subsection{Bayesian and Partially Observable RL}

\textbf{Bayesian RL.}
Bayesian RL~\citep{vlassis2012bayesian,ghavamzadeh2015bayesian} models epistemic uncertainty (also known as ambiguity~\citep{ellsberg1961risk}) for purposes such as exploration~\citep{osband2016deep}, robustness~\citep{rajeswaran2017epopt,derman2020bayesian,rigter2021risk}, and generalization~\citep{ghosh2021generalization,jiang2023importance}. Uncertainty can be incorporated in model-free methods (e.g., Bayesian Q-learning~\citep{dearden1998bayesian}) or in model-based methods (e.g., BAMDPs~\citep{duff2002optimal}). Depending on the design preference, Bayesian RL can be ambiguity-seeking, ambiguity-neutral, or ambiguity-averse.
Our work is grounded in the epistemic POMDP formulation~\citep{ghosh2021generalization,ghosh2022offline}, an ambiguity-neutral, model-based framework for generalization through adaptation. This formulation stems from BAMDPs, which optimally balance the exploration-exploitation tradeoff but incur test-time exploratory costs, as illustrated in \autoref{sec:bandit} and also known as the \textit{cost of exploration}~\citep{vuorio2024bayesian}. 

Epistemic POMDPs are also related to \textbf{meta-RL}~\citep{beck2023survey}, which likewise seeks to maximize expected performance over a distribution of MDPs. Each MDP is called a task in meta-RL. Bayesian meta-RL~\citep{zintgraf2020varibad,dorfman2021offline} explicitly models this MDP posterior. The key difference is that in meta-RL the true environment is itself a pre-specified distribution over MDPs (a particular POMDP), so the task uncertainty is \textit{inherent}. In contrast, the epistemic POMDP assumes a single underlying MDP, and its task uncertainty is purely epistemic. As a result, meta-RL methods cannot be applied to solve the epistemic POMDP without modification.

\textbf{Recurrent model-free RL for online POMDPs.} Model-free RL offers a simple and effective way to tackle online POMDPs without explicitly learning belief-state representations~\citep{ni2022recurrent,yang2021recurrent}. Memory is typically implemented with recurrent neural networks (RNNs) such as LSTMs~\citep{hochreiter1997long} or GRUs~\citep{cho2014learning}, but recent work shows that these architectures struggle with long-term memory~\citep{parisotto2020stabilizing,ni2023transformers}. State-space models (SSMs) with linear recurrence~\citep{orvieto2023resurrecting,gu2023mamba} have emerged as a compute-efficient alternative to Transformers~\citep{vaswani2017attention}, balancing long-term memory with parallel optimization~\citep{blelloch1990prefix}. Recent applications of SSMs to recurrent RL~\citep{lu2023structured,lu2024rethinking,morad2024recurrent,luo2024efficient,luis2024uncertainty} demonstrate strong performance on online POMDP benchmarks~\citep{morad2023popgym,zintgraf2020varibad,ni2022recurrent}.

\subsection{Concurrent Work}

ADM-v2~\citep{lin2026admv} is a concurrent work that, like ours, demonstrates the effectiveness of long-horizon rollouts for offline model-based RL. ADM-v2 pursues an extreme \textbf{full-horizon} setting, performing rollouts up to the maximum episode length (e.g., $1000$ steps in D4RL locomotion tasks) and thus treating the learned world model as a surrogate simulator. Building on ADMPO~\citep{lin2025any}, it simplifies multi-step modeling and introduces a parallelized rollout mechanism, enabling efficient any-step prediction. With full-horizon rollouts, ADM-v2 can perform off-policy evaluation directly without value bootstrapping and achieves strong performance on popular benchmarks when combined with an uncertainty penalty, representing the first successful full-horizon method.

We view ADM-v2 as highly complementary to our work. While ADM-v2 demonstrates that full-horizon rollouts can be effective under a \textit{conservative} design, our method explores the opposite extreme: adhering to a Bayesian, \textit{non-conservative} objective while using adaptive (but not full) horizons. Interestingly, ADM-v2’s sensitivity analysis~\citep[Figure 16]{lin2026admv} shows that long horizons are most beneficial under strong penalties, whereas in \textit{lightly penalized} regimes, excessively long horizons can degrade performance. This aligns with our findings: without explicit conservatism, performance is more sensitive to compounding errors, motivating adaptive truncation rather than full-horizon rollouts. Together, these two works suggest that long-horizon planning is a powerful component in offline RL, whose effective use depends on the degree of conservatism.

\section{Extended Background on Conservative and Bayesian Principles}

\subsection{Formal Connections between Conservatism and Robustness}
\label{app:bg_conservatism}

In this subsection, we place prior conservative  algorithms in the (soft) robust MDP framework. For classic model-free and adversarial model-based pessimism, we make explicit how their updates correspond to particular uncertainty sets in \textit{robust} MDPs~\citep{wiesemann2013robust}. For uncertainty-penalized pessimism, we connect it to \textit{soft robust} MDP framework~\citep{zhang2024soft}.\footnote{The term ``soft robustness'' is used differently in prior work: \citet{derman2018soft} use it for a Bayesian formalism, while \citet{zhang2024soft} define it as a risk-sensitive MDP relaxing strict worst-case robustness.} We follow the notation introduced in \autoref{sec:prelim}.

\textbf{Classic model-free pessimism.} Many model-free methods enforce an in-support~\citep{fujimoto2019off,kumar2019stabilizing} or in-sample~\citep{kostrikov2021offline,xu2023offline} constraint on the Bellman backup. This amounts to updating a pessimistic value function $Q^{\mathrm{MF}}$ such that, for a transition tuple $(s,a,r,d,s') \in \D$,
\begin{equation}
\label{eq:model-free-pess}
Q^{\mathrm{MF}}(s,a) \gets r + \gamma (1-d) \max_{a' \in \A,\, \text{s.t.}\, (s',a')\in \D}{Q^{\mathrm{MF}}(s',a')}. 
\end{equation}
This update is equivalent to assigning all out-of-dataset state–action pairs the minimal reward $-r_{\max}$, thereby enforcing a worst-case behavior. The corresponding uncertainty set $\Mf_\D$ in the robust MDP framework (\autoref{eq:robust_obj}) can be written explicitly:

\begin{proposition}
\label{prop:model-free-pess}
If the pessimistic update in \autoref{eq:model-free-pess} converges to a fixed point, then the induced uncertainty set for the corresponding robust MDP is
\begin{equation}
\label{unc:model-free-pess}
\Mf_\D^{\mathrm{MF}} = \Biggl\{m \;\Bigg|\;
\begin{aligned}
&m(r,s'\mid s,a) = m_\D(r,s'\mid s,a), && \forall (s,a) \in \D \\
&m(r,s'\mid s,a) = p(r)\,\mathbbm{1}(s'=s_{\text{absorb}}),\;\; \forall p \in \Delta([-\rmax,\rmax]), && \forall (s,a) \notin \D
\end{aligned}
\Biggr\},
\end{equation}
where $m_\D$ is the empirical model and $s_{\text{absorb}} \not \in \D$ is an artificial absorbing state.
\end{proposition}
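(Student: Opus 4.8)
The plan is to show that the pessimistic model-free update in \autoref{eq:model-free-pess} is exactly the robust Bellman backup for the uncertainty set $\Mf_\D^{\mathrm{MF}}$, and then conclude by uniqueness of the robust fixed point. First I would write down the robust Bellman operator associated with $\Mf_\D^{\mathrm{MF}}$: for a value function $Q$, $(\mathcal T_{\mathrm{rob}} Q)(s,a) = \inf_{m \in \Mf_\D^{\mathrm{MF}}} \E{(r,s') \sim m(s,a)}{r + \gamma (1-\term(s,a,s')) \max_{a'} Q(s',a')}$, where the optimization over $m$ decouples across state-action pairs because $\Mf_\D^{\mathrm{MF}}$ is an $(s,a)$-rectangular set (each $(s,a)$-conditional is chosen independently). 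This rectangularity is the structural fact that makes the robust MDP well-posed and lets me analyze the backup pointwise.

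Next I would evaluate this operator separately on the two regions. For $(s,a)\in\D$, the set $\Mf_\D^{\mathrm{MF}}$ pins down $m(\cdot\mid s,a)=m_\D(\cdot\mid s,a)$, so the infimum is trivial and the backup returns the empirical-model expectation, which under the (deterministic-transition, deterministic-reward-per-tuple) reading of the dataset reduces to $r + \gamma(1-d)\max_{a'} Q(s',a')$ with $(s,a,r,d,s')\in\D$. For $(s,a)\notin\D$, the set allows $s' = s_{\text{absorb}}$ with any reward law $p\in\Delta([-\rmax,\rmax])$; since $s_{\text{absorb}}$ is absorbing with zero continuation value, the backup equals $\inf_{p}\E{r\sim p}{r} = -\rmax$, independent of $Q$. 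Hence the robust backup assigns the constant value $-\rmax$ on out-of-dataset pairs and the in-sample backup elsewhere. Now I would argue that when the pessimistic update \autoref{eq:model-free-pess} restricts the inner $\max$ to $a'$ with $(s',a')\in\D$, this is precisely the effect of having already set $Q^{\mathrm{MF}}(s'',a'') = -\rmax$ (the minimal possible value) for every out-of-dataset $(s'',a'')$: such entries can never attain the maximum, so omitting them from the $\max$ is equivalent to including them at value $-\rmax$. Therefore the fixed point of \autoref{eq:model-free-pess}, extended by $-\rmax$ on out-of-dataset pairs, coincides with a fixed point of $\mathcal T_{\mathrm{rob}}$.

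Finally I would invoke the standard contraction property: since $\gamma<1$ and the robust Bellman operator for an $(s,a)$-rectangular uncertainty set is a $\gamma$-contraction in the sup norm~\citep{wiesemann2013robust}, it has a unique fixed point, which is the robust value function $Q^*(\cdot;\Mf_\D^{\mathrm{MF}})$. Combined with the previous step, the converged $Q^{\mathrm{MF}}$ agrees with this robust value function on $\D$, so the robust MDP with uncertainty set $\Mf_\D^{\mathrm{MF}}$ induces the same pessimistic value function, as claimed.

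The main obstacle I anticipate is making the equivalence in the middle paragraph fully rigorous: one must handle the bookkeeping of the absorbing state (it is artificial, not in $\D$, so the in-sample $\max$ genuinely never sees it and its value is fixed at $0$ by convention), confirm that restricting the $\max$ over in-dataset successors is really without loss once out-of-dataset entries sit at the floor $-\rmax$, and check the edge cases where some state $s'$ has \emph{no} in-dataset action at all (then the backup should fall back to $-\rmax/(1-\gamma)$-style behavior, consistent with routing $s'$ effectively to $s_{\text{absorb}}$). A secondary technical point is justifying that the dataset's transitions can be treated as the deterministic empirical model $m_\D$ when the same $(s,a)$ appears multiple times; this is where the MLE definition in \autoref{eq:mle} and the "uniquely determined in-support" remark are used. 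None of these is deep, but they are the places where a careless argument would have a gap.
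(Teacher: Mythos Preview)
Your approach matches the paper's: both write the robust Bellman operator for $\Mf_\D^{\mathrm{MF}}$, exploit $(s,a)$-rectangularity to evaluate pointwise, compute the backup separately for in-dataset and out-of-dataset pairs, and then argue that the in-support restricted $\max$ is without loss once out-of-dataset values sit at the floor.

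The one genuine slip is your handling of $s_{\text{absorb}}$. You take it to be terminal (``zero continuation value'', ``its value is fixed at $0$ by convention''), giving $Q^*(s,a)=-\rmax$ for $(s,a)\notin\D$. But in $\Mf_\D^{\mathrm{MF}}$ as stated, $s_{\text{absorb}}$ is \emph{absorbing}, not terminal: since $s_{\text{absorb}}\notin\D$, every pair $(s_{\text{absorb}},a)$ is itself out of dataset, so the worst-case model loops at $s_{\text{absorb}}$ with reward $-\rmax$ forever. The paper computes this explicitly, obtaining $Q^*(s_{\text{absorb}},a)=-\rmax/(1-\gamma)$ and hence $Q^*(s,a)=-\rmax/(1-\gamma)$ for every out-of-dataset pair. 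This distinction matters for your key step. With the value $-\rmax$, the claim ``such entries can never attain the maximum'' can fail --- an in-dataset pair on a $-\rmax$-reward cycle has $Q^*<-\rmax$ --- so dropping out-of-dataset actions from the $\max$ is \emph{not} equivalent to keeping them at that floor. With the correct floor $-\rmax/(1-\gamma)$, no Q-value can fall below it and the argument goes through. You already write this number in your obstacles paragraph; it is the generic out-of-dataset value, not just an edge case, and substituting it throughout makes your proof coincide with the paper's.
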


\begin{proof}
First, let $m \in \Mf_\D^{\mathrm{MF}}$ and decompose $m(r,s'\mid s,a) = R(r\mid s,a) P(s'\mid s,a)$, and denote the empirical model as $m_\D(r,s'\mid s,a) = R_\D(r\mid s,a) P_\D(s'\mid s,a)$. 
By construction, $\Mf_\D^{\mathrm{MF}}$ places no uncertainty on transitions: for $(s,a)\in\D$, $P(s,a)$ equals the empirical transition $P_\D(s,a)$, while for $(s,a)\notin\D$, $P(s,a)$ deterministically transitions to $s_{\mathrm{absorb}}$. Thus, all epistemic uncertainty is in the reward function $R$. 

Applying the robust MDP framework~\citep{wiesemann2013robust}, the optimal value function $Q^*$ is Markov and satisfies the robust Bellman optimality equation:
{\small
\begin{align}
&Q^*(s,a) = \min_{R(s,a) \in \mathcal \Mf_\D^{\mathrm{MF}}(s,a)} \E{r \sim R(s,a)}{r} + \gamma \E{s'\sim P(s,a)}{\max_{a'\in \A}Q^*(s',a')}, \forall (s,a)\in \S\times\A, \\
\label{eq:mf_rmax}
&Q^*(s,a) -  \gamma \E{s'\sim P(s,a)}{\max_{a'\in \A} Q^*(s',a')} =  \min_{R(s,a) \in \mathcal \Mf_\D^{\mathrm{MF}}} \E{r \sim R(s,a)}{r} = 
\begin{cases}
\E{r \sim R_\D(s,a)}{r} & (s,a) \in \D, \\
-\rmax & (s,a) \not \in \D,
\end{cases}
\end{align}%
}where the last line uses the fact that $R(\cdot\mid s,a)$ may be any distribution on $[-\rmax,\rmax]$, so the worst case is attained by a Dirac mass at $-\rmax$.

Therefore, we can simplify \autoref{eq:mf_rmax} by cases.  (1) Absorbing state:  
\begin{align}
&\forall a\in \A, \quad Q^*(s_{\text{absorb}},a) -  \gamma \max_{a'\in \A} Q^*(s_{\text{absorb}},a')  = -\rmax.
\end{align}
Since $Q^*(s_{\text{absorb}},a)$ is a constant w.r.t. $a$, it follows that $Q^*(s_{\text{absorb}},a) = -\frac{\rmax}{1-\gamma}, \forall a$, reaches the minimal return.  (2) Unseen state-action pairs,
\begin{align}
\forall (s,a) \not\in \D, \quad Q^*(s,a) -  \gamma \max_{a'\in \A} Q^*(s_{\text{absorb}},a')  = -\rmax.
\end{align}
This implies $Q^*(s,a) = -\frac{\rmax}{1-\gamma}, \forall (s,a) \not\in \D$. (2) Seen state-action pairs,  
\begin{align}
\forall (s,a) \in \D, \quad Q^*(s,a)  &= \E{r \sim R_\D(s,a)}{r}+ \gamma \E{s'\sim P_\D(s,a)}{\max_{a'\in \A} Q^*(s',a') } \\
\label{eq:BCQ}
&= \E{r \sim R_\D(s,a)}{r}+ \gamma \E{s'\sim P_\D(s,a)}{\max_{a'\in\A, \text{s.t.}\, (s',a')\in \D} Q^*(s',a') }
\end{align}
The last line follows that $
Q^*(s,a_{\text{out}}) = -\frac{\rmax}{1-\gamma} \le Q^*(s,a_{\text{in}}), \forall (s,a_{\text{in}}) \in \D, \forall (s,a_{\text{out}}) \not \in \D.
$
Finally, \autoref{eq:BCQ} recovers the pessimism principle underlying \autoref{eq:model-free-pess}. This includes many model-free offline RL algorithms, such as BCQ~\citep[Equation 10]{fujimoto2019off}, BEAR~\citep[Definition 4.1]{kumar2019stabilizing}, EMaQ~\citep[Theorem 3.3]{ghasemipour2021emaq}, IQL~\citep[Corollary 2.1]{kostrikov2021offline}.
\end{proof}

\textbf{Adversarial model-based pessimism.} One class of model-based methods constructs an explicit uncertainty set around the empirical model~\citep{uehara2021pessimistic,rigter2022rambo}:
\begin{equation}
\Mf_\D^{\text{MB}} = \{ m \mid \E{(s,a)\sim \D}{\mathrm{div}(m(s,a), m_\D(s,a)) } \le \epsilon\},
\end{equation}
where popular choices of the divergence $\mathrm{div}(\cdot, \cdot)$ include total variation (TV) distance and KL divergence.

\textbf{Uncertainty-penalized pessimism: soft robust MDP.}
Another line of model-based methods incorporates explicit uncertainty penalties into value updates~\citep{yu2020mopo,kidambi2020morel,jeong2022conservative,sun2023model}. For imagined transitions $(\hat s, \hat a, \hat r, \hat d, \hat s')$, the pessimistic update takes the form:
\begin{equation}
\label{eq:unc_penalty}
Q^{\text{MB}}(\hat s,\hat a) \gets \hat r - \lambda U(\hat s, \hat a)+ \gamma (1-\hat d) \max_{\hat a'\in \A}{Q^{\text{MB}}(\hat s',\hat a')},
\end{equation}
where $U:\mathcal S\times \mathcal A \to \mathbb R^+$ is an uncertainty measure based on dataset $\D$ and learned world models, and $\lambda > 0$ controls the degree of pessimism. Similar uncertainty penalties have also been incorporated into model-free value functions~\citep{bai2022pessimistic,an2021uncertainty}.

We now provide a connection between \autoref{eq:unc_penalty} and the \textit{soft robust MDP} framework of~\cite[Section 5]{zhang2024soft}. While our derivation follows a similar line to theirs, we include it here to be self-contained. Consider a robust MDP with a \textit{policy-dependent} uncertainty set:
\begin{equation}
\max_{\pi}\min_{m}J(\pi,m)\quad\text{s.t.}\quad
\E{(s,a)\sim (m, \pi)}{\mathrm{div}(m(s,a), m^*(s,a))}\le \epsilon,
\end{equation}
where the divergence constraint describes the uncertainty set, taken in expectation under occupancy measure induced by $m$ and $\pi$. We use Lagrangian relaxation with a coefficient $\alpha \ge 0$ to transform the problem into a soft robust MDP: 
\begin{equation}
\max_{\pi}\min_{m} J(\pi, m) + \alpha \E{(s,a)\sim (m, \pi)}{\mathrm{div}(m(s,a), m^*(s,a))}.
\end{equation}
In dynamic programming form, for a given $(s,a)$ pair, the inner optimization becomes
\begin{equation}
\label{eq:inner_optim}
\min_{m(\cdot \mid s,a)}\ \E{s'\sim m(\cdot \mid s,a)}{V^{\text{MB}}(s')}\ + \alpha\,\mathrm{div}(m(\cdot \mid s,a),m^*(\cdot \mid s,a)),
\end{equation}
where $V^{\text{MB}}$ is the policy's state-value function. 
This inner problem can be transformed by duality, depending on the choice of divergence. For the KL divergence, one can apply Donsker and Varadhan's formula~\citep{donsker1975asymptotic}\footnote{For any probability distributions $x,x^*\in \Delta^k$ (the $k$-dimensional simplex), any vector $y\in \R^k$, and $\alpha > 0$, the duality formula is $-\alpha \log  (\langle x^*, \exp(-y / \alpha)  \rangle) =  \min_x \langle x, y\rangle+ \alpha\,\textsc{kl}(x \mid \mid x^*)$.} to state that \autoref{eq:inner_optim} is equivalent to
{\small
\begin{equation}
-\alpha \log \left(\E{s'\sim m^*(\cdot \mid s,a)}{\exp\left(-\frac{V^{\text{MB}}(s')}{\alpha}\right)} \right)= \E{s'\sim m^*}{V^{\text{MB}}(s')} -\frac{1}{2\alpha} \mathrm{Var}_{m^*}[V^{\text{MB}}(s')] + O\left(\frac{1}{\alpha^3}\right),
\end{equation}%
}where we use cumulant expansion.\footnote{For a random variable $X$, $\log (\E{}{\exp(t X)}) = t\E{}{X} + \frac{t^2}{2} \mathrm{Var}[X] + O(t^3)$. We substitute $t=-1/\alpha$.}

Therefore, the corresponding soft-robust Bellman optimality equation~\citep[Equation 15]{zhang2024soft}, ignoring higher-order terms, becomes
\begin{equation}
\label{eq:soft-robust_eq}
Q^{\text{MB}}(s,a) = R^*(s,a)  - \frac{\gamma}{2\alpha} \mathrm{Var}_{s'\sim P^*}[\max_{a'}Q^{\text{MB}}(s', a')] + \gamma \E{s'\sim P^*}{\max_{a'}Q^{\text{MB}}(s', a')}
\end{equation}

In practice, model-based RL methods approximate $m^* = (R^*, P^*)$ with an ensemble of learned models trained on $\mathcal D$. While the variance term in \autoref{eq:soft-robust_eq} reflects the aleatoric uncertainty of the true dynamics, ensemble-based variance also incorporates epistemic uncertainty by the law of total variance, thus blending both. Prior work has interpreted the penalty as aleatoric~\citep{yu2020mopo}, epistemic~\citep{sun2023model}, or both~\citep{rigter2023one}; here we focus on its epistemic interpretation. Accordingly, the ensemble variance provides a practical surrogate for the penalty in \autoref{eq:unc_penalty}.

\subsection{Connection between Bayesianism and Partial Observability}
\label{app:bg_bayesian}

\textbf{Epistemic POMDP as a special class of POMDP.} As noted by \citet[Appendix A]{ghosh2022offline}, the Bayesian objective (\autoref{eq:bayes_obj}) can be cast as a POMDP~\citep{cassandra1994acting}. We adapt their proof here. The POMDP's state space is $\S^+ = \S \times \Mf_\D$ with the same action space $\A$, where $\Mf_\D = \supp(\Pr_\D)$. 
The joint reward–transition function in the POMDP is 
$$\Pr (r_{t+1}, s_{t+1}^+ \mid s_t^+, a_t) = \mathbbm{1}(m_{t+1} = m_t) m_t(r_{t+1} , s_{t+1}  \mid s_t, a_t),$$ where the initial state is $s_0^+ \defeq (s_0, m_0)$ with $s_0 \sim \rho$ and $m_0 \sim \Pr_{\D}$. 
It is partially observable because the agent only observes $s_t \in s_t^+$, while the model $m_t \equiv m_0 \in \Mf_\D$ remains hidden but fixed throughout each episode.

\section{Analysis of Bootstrapped Error in Long-Horizon Rollouts}
\label{app:proof_bootstrap}

We formalize the insight in \autoref{sec:why_longhorizon} using a result adapted from \citet[Proposition 1]{sims2024edge}.

Consider a rollout generated by a deterministic world model $m$ and a deterministic policy $\pi$, $$\tau = (h_t,\hat a_t, \hat r_{t+1}, \hat s_{t+1}, \hat a_{t+1}, \dots, \hat s_{t+H}),$$ where the initial history $h_t$ is drawn from the dataset $\D$, $\hat a_{t+j} = \pi(\hat h_{t+j})$, $(\hat r_{t+j+1}, \hat s_{t+j+1}) = m(\hat s_{t+j}, \hat a_{t+j})$ and $\hat h_t = h_t$. 

Assume a tabular policy evaluation setting and let $Q^{\pi}$ denote the exact value function of policy $\pi$ under the MDP $m$. Define the Bellman evaluation operator $\mathcal T$: for a value function $Q$, 
$$
\mathcal T Q (\hat h_{t+j}, \hat a_{t+j})\defeq \hat r_{t+j+1} + \gamma Q(\hat h_{t+j+1}, \pi(\hat h_{t+j+1})).
$$
We perform the value update along the imagined rollout backward in time. Let $Q_0$ be the initial value function prior to value update and $Q_j$ be the value function at $j$-th iteration. For $j= 1, \dots, H$, define $Q_j(\hat h_{t+H-j}, \hat a_{t+H-j})\approx \mathcal T Q_{j-1} (\hat h_{t+H-j}, \hat a_{t+H-j})$, with a per-step \textbf{TD error} bounded by: 
$$
| Q_j(\hat h_{t+H-j}, \hat a_{t+H-j}) - \mathcal T Q_{j-1} (\hat h_{t+H-j}, \hat a_{t+H-j})| \le \delta_j.
$$

Following \citet{sims2024edge}, assume that $\hat h_{t+H}$ is an \textbf{edge-of-reach} history, i.e., a history used as Bellman targets but never itself updated. The \textbf{bootstrapped error} at this truncated point is
$$
\epsilon \defeq | Q_0(\hat h_{t+H},\pi(\hat h_{t+H})) - Q^\pi(\hat h_{t+H},\pi(\hat h_{t+H})) |.
$$
We can decompose the total error for $Q_j$: for $j = 1, \dots, H$,
\begin{equation}
\begin{split}
\xi_j &\defeq |Q_j(\hat h_{t+H-j}, \hat a_{t+H-j}) - Q^\pi(\hat h_{t+H-j}, \hat a_{t+H-j}) | \\
&\le |Q_j(\hat h_{t+H-j}, \hat a_{t+H-j}) -  \mathcal T Q_{j-1} (\hat h_{t+H-j}, \hat a_{t+H-j}) | \\
& + |\mathcal T Q_{j-1} (\hat h_{t+H-j}, \hat a_{t+H-j}) - Q^\pi(\hat h_{t+H-j}, \hat a_{t+H-j}) |\\
&\le \delta_j + | \gamma Q_{j-1} (\hat h_{t+H-j+1}, \hat a_{t+H-j+1}) - \gamma Q^\pi (\hat h_{t+H-j+1}, \hat a_{t+H-j+1})|\\
&= \delta_j + \gamma \xi_{j-1}.
\end{split}
\end{equation}
Unrolling the recursion and using the fact that $\xi_0 = \epsilon$, the value at the final iteration is bounded by: 
\begin{equation}
|Q_H(h_t,\hat a_t) - Q^\pi(h_t,\hat a_t) | = \xi_H  \le \sum_{j=0}^{H-1} \gamma^j \delta_j + \gamma^H \epsilon.
\end{equation}

The bound above is sign-agnostic, but under offline policy improvement the bootstrapped term at the truncation point tends to be \emph{optimistic}. Near-greedy policies evaluate the critic on out-of-distribution actions that require extrapolation, where neural critics can assign spuriously high values. Even though edge-of-reach histories are not directly updated, their values can drift through function approximation and shared parameters, allowing optimistic errors to propagate. As a result, the terminal bootstrap often dominates value overestimation for short horizons, while increasing $H$ mitigates this effect by exponentially down-weighting it.

\section{Formal Existence Proof of the Advantage of Bayesianism}
\label{app:theory}

In this section, we compare the conservative and Bayesian principles in offline RL and highlight conditions under which the Bayesian approach yields provable advantages. After restating the conservative objective and its reliance on data coverage, we introduce a relaxed notion of coverage from a Bayesian view. This allows us to lower-bound the performance gap between Bayesian and robust solutions. We begin by defining the optimal policies in their respective objectives:
\begin{align}
&\pi^*(m^*)\in\argmax_\pi J(\pi, m^*), \tag{ideal policy} \\
&\pi^*(\Mf_{\D}) \in \argmax_\pi J(\pi; \Mf_{\D}), \tag{robust-optimal policy} \\
& \pi^*(\Pr_{\D}) \in \argmax_{\pi}\E{m\sim \Pr_{\D}} {J(\pi, m)}. \tag{Bayes-optimal policy}
\end{align}

We then introduce the notion of \textit{robust sub-optimality gap}, measuring the gap of the robust-optimal policy relative to the ideal one:
\begin{align}
    S_{\Mf_{\D}}(m^*) &:= J(\pi^*(m^*), m^*) - J(\pi^*(\Mf_{\D}), m^*).
\end{align}
To assess when $\pi^*(\Mf_\D)$ is competitive, one seeks an upper bound on this gap. Its tightness depends on dataset quality: the closer the data coverage is to $\pi^*(m^*)$, the smaller the gap.
Prior work formalizes this dependence via coverage assumptions~\citep{uehara2021pessimistic,li2024settling}. We unify these notions through the following definition.

\subsection{Comparison on Concentrability Coefficients}
\label{app:concen}
In this subsection, we define the robust and Bayesian concentrability coefficients and derive the numerical relationship between the two.

\begin{definition}[Model-dependent concentrability]
Given an MDP model $m$ and a policy $\pi$, let $d_{m}^{\pi}$ be the state-action occupancy measure of $\pi$ on $m$, and let $\beta \in \Delta(\S\times \A)$ be the offline distribution induced by the dataset $\D$ on $m^*$.
The concentrability coefficient of $\pi$ in MDP $m$ under offline distribution $\beta$ is defined as:
\begin{align}
\label{eq:concen}
\mathcal C(\pi, m) \defeq \frac{\E{(s,a)\sim d_{m}^{\pi}}{\textsc{tv}(m(s,a), m^*(s,a))^2}}{\E{(s,a)\sim \beta}{\textsc{tv}(m(s,a), m^*(s,a))^2}},
\end{align}
where $\textsc{tv}$ denotes the total variation distance between distributions.
\end{definition}
Intuitively, $\mathcal C(\pi,m)$ quantifies the mismatch between $m$ and $m^*$ under the policy distribution versus the dataset distribution. 

We then extend the \textit{model-based concentrability} of \citet[Definition~1]{uehara2021pessimistic}, originally defined only for transition dynamics, to also incorporate the reward function.
We refer to this generalization as robust concentrability.

\begin{definition}[\textbf{Robust concentrability}~\citep{uehara2021pessimistic}]
Let $\Mf_{\D}$ be a realizable hypothesis class of models consistently built from the dataset $\D$, 
\begin{align}
\label{eq:robust_concen}
\mathcal C(\pi) \defeq \sup_{m\in\Mf_{\D}} \mathcal C(\pi, m).
\end{align} 
\end{definition}

If $m^*\in\Mf_{\D}$, the robust concentrability is upper bounded by   the classic density-ratio-based concentrability, as shown in \citet[Section 4]{uehara2021pessimistic}:
$$
\mathcal C(\pi) \le \sup_{(s,a)\in\S\times\A} \frac{{d^{\pi}_{m^*}(s,a)}}{{\beta(s,a)}}.
$$

We now consider an even \textit{weaker} notion of coverage by extending model-dependent concentrability with a Bayesian posterior over models.
\begin{definition}[\textbf{Bayesian concentrability}]
\begin{align}
\label{eq:bayes_concen}
    \mathcal C_{\text{Bayes}}(\pi) \defeq \frac{\E{m\sim \Pr_{\D}}{\E{(s,a)\sim d_{m}^{\pi}}{\textsc{tv}(m(s,a), m^*(s,a))^2}}}{\E{m\sim \Pr_{\D}}{\E{(s,a)\sim \beta}{\textsc{tv}(m(s,a), m^*(s,a))^2}}}.
\end{align} 
\end{definition}

We now show that Bayesian concentrability is always upper bounded by its robust counterpart.

\begin{proposition}[Bayesian concentrability is upper-bounded by robust concentrability]
\label{prop: bayes vs robust coverage}
Assume that $\E{m\sim \Pr_{\D}}{g(m)} > 0$, then
\begin{equation*}
C_{\text{Bayes}}(\pi) \le \sup_{m \in \supp(\Pr_\D)} \mathcal C(\pi, m).
\end{equation*}
\end{proposition}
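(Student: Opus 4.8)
The plan is to recognize the inequality as an instance of the mediant (weighted-average) inequality: a ratio of two expectations is bounded above by the supremum of the pointwise ratio of their integrands. First I would name the two families of nonnegative integrands that appear in \autoref{eq:bayes_concen},
\[
f(m) \defeq \E{(s,a)\sim d_{m}^{\pi}}{\textsc{tv}(m(s,a), m^*(s,a))^2}, \qquad g(m) \defeq \E{(s,a)\sim \beta}{\textsc{tv}(m(s,a), m^*(s,a))^2},
\]
so that \autoref{eq:concen} reads $\mathcal C(\pi, m) = f(m)/g(m)$ for every $m$ with $g(m)>0$, while \autoref{eq:bayes_concen} reads $\mathcal C_{\text{Bayes}}(\pi) = \E{m\sim\Pr_\D}{f(m)} / \E{m\sim\Pr_\D}{g(m)}$. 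Both $f$ and $g$ are nonnegative since the squared total variation distance is.

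Next I would put $M \defeq \sup_{m\in\supp(\Pr_\D)} \mathcal C(\pi, m)$, which is exactly the right-hand side of the claim. If $M = +\infty$ there is nothing to prove, so assume $M < \infty$. Then for $\Pr_\D$-almost every $m$ with $g(m) > 0$ the definition of $M$ gives the pointwise bound $f(m) \le M\, g(m)$, and on the (possibly empty) exceptional set where $g(m) = 0$ finiteness of $M$ forces $f(m) = 0$ as well, since otherwise $\mathcal C(\pi, m)$ would be infinite; hence $f(m) \le M\, g(m)$ holds $\Pr_\D$-almost everywhere. Integrating this pointwise inequality against the posterior and using monotonicity of expectation gives $\E{m\sim\Pr_\D}{f(m)} \le M\, \E{m\sim\Pr_\D}{g(m)}$, and dividing by $\E{m\sim\Pr_\D}{g(m)} > 0$ --- the standing hypothesis of the proposition --- yields $\mathcal C_{\text{Bayes}}(\pi) \le M$. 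When $\Mf_\D = \supp(\Pr_\D)$, the bound coincides with $\mathcal C_{\text{Bayes}}(\pi) \le \mathcal C(\pi)$ for the robust concentrability of \autoref{eq:robust_concen}.

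I do not expect any genuine obstacle: the statement reduces to an elementary averaging argument once $\mathcal C_{\text{Bayes}}(\pi)$ and $\mathcal C(\pi,m)$ are exhibited as a ratio of expectations and a pointwise ratio of the same two nonnegative functions of $m$. The only step requiring a line of care is the degenerate case $g(m) = 0$ on a $\Pr_\D$-null set of models --- a model agreeing with $m^*$ everywhere on $\supp(\beta)$ yet probed off-support by its own occupancy $d_{m}^{\pi}$ --- which, as noted above, would already drive the supremum to $+\infty$ and is therefore subsumed by the trivial case $M = +\infty$. No further measurability or concentration subtleties arise beyond the implicit assumption that all the expectations in \autoref{eq:bayes_concen} are well defined.
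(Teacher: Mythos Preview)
Your proof is correct and follows essentially the same mediant/weighted-average argument as the paper, which rewrites $\E{m}{f(m)} = \E{m}{g(m)\cdot f(m)/g(m)}$ and bounds the resulting weighted average by the supremum of $f/g$. If anything, you are slightly more careful than the paper in explicitly treating the degenerate case $g(m)=0$ and the trivial case $M=+\infty$.
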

\begin{proof}
Denote
$$
f(m) := \E{(s,a)\sim d_{m}^{\pi}}{\textsc{tv}(m(s,a), m^*(s,a))^2},
$$
$$
g(m) := \E{(s,a)\sim \beta}{\textsc{tv}(m(s,a), m^*(s,a))^2},
$$
so that
$$
\mathcal C(\pi, m) = \frac{f(m)}{g(m)}, 
\quad \mathcal C_{\text{Bayes}}(\pi) \defeq \frac{\E{m\sim \Pr_{\D}}{f(m)}}{\E{m\sim \Pr_{\D}}{g(m)}}.
$$

$$
\LHS =  \frac{\E{m\sim \Pr_{\D}}{g(m) \frac{f(m)}{g(m)}}}{\E{m\sim \Pr_{\D}}{g(m)}} \le \sup_{m\in \supp(\Pr_\D)} \frac{f(m)}{g(m)} = \RHS,
$$
by the elementary inequality $\sum_i w_i x_i \le \sum_i w_i \max_j x_j = (\sup_j x_j) \sum_i w_i$ for $w\geq 0$.
\end{proof}

The next example sharpens this bound to a strict inequality, making the Bayesian coverage assumption $\mathcal C_{\text{Bayes}}(\pi)  < \infty$ strictly weaker than its robust analogue $\mathcal C(\pi) < \infty$. 

\begin{example}[Strictness of Bayesian concentrability bound]
Let $\beta$ have incomplete coverage over $\S\times \A$. Following the notations from the proof of \autoref{prop: bayes vs robust coverage}, consider two models:
\begin{enumerate}
    \item Model $m_1$ is a small perturbation of  $m^*$ on all of
    $\S\times \A$. Then, the numerator $f(m_1) <\infty$, the denominator $g(m_1)>0$, making $\mathcal C(\pi,m_1)< \infty$. 
    \item Model $m_2$ is equivalent to $m^*$ on $\supp(\beta)$, i.e., $m_2(s,a) = m^*(s,a), \forall (s,a) \in \supp(\beta)$, 
    but there exists an off-support pair
    $(s^\dagger,a^\dagger) \not \in \supp(\beta)$ with $d_{m_2}^\pi(s^\dagger,a^\dagger)>0$ and
    $\textsc{tv}(m_2(s^\dagger,a^\dagger),m^*(s^\dagger,a^\dagger))>0$.
    In that case, $g(m_2)=0,f(m_2) > 0,$ so $ \mathcal C(\pi,m_2) =\infty.$
\end{enumerate}

If the posterior $\Pr_\D$ assigns weights $\Pr_\D(m_1) = 1-\varepsilon$ and $\Pr_\D(m_2) =  \varepsilon$ with any $\varepsilon \in (0,1)$, then
\[
\E{m\sim\Pr_{\D}}{f(m)} = (1-\varepsilon)f(m_1)+\varepsilon f(m_2)<\infty,
\quad
\E{m\sim\Pr_{\D}}{g(m)} = (1-\varepsilon)g(m_1) > 0,
\]
and $\mathcal C_{\text{Bayes}}(\pi)<\infty$, while 
$\sup_{m\in \supp(\Pr_\D)} \mathcal C(\pi,m)=\infty$. 
\end{example}

\subsection{\autoref{thm:final_bound} and Proof}
\label{app:proof}

To formalize the benefit of taking a Bayesian approach over a conservative one, we define the \textit{Bayesian sub-optimality gap} as:
\begin{align}
\label{eq:bayes_gap}
S_{\Pr_{\D}}(m^*) &= J(\pi^*(m^*), m^*) - J(\pi^*(\Pr_{\D}), m^*),
\end{align}
which we aim to compare with the robust gap via their difference:
\begin{align}
\Delta_{\Pr_{\D}, \Mf_{\D}}(m^*):= S_{\Mf_{\D}}(m^*) - S_{\Pr_{\D}}(m^*) = J(\pi^*(\Pr_{\D}), m^*) - J(\pi^*(\Mf_{\D}), m^*).
\end{align}
The theorem below establishes a lower bound on this difference.

\begin{theorem}
\label{thm:final_bound}
    Assume that $\gamma > 1/2$. Then, we can construct a dataset $\D$, a set of MDPs $\Mf_{\D}$, and a posterior $\Pr_{\D}$ such that $m^* \in \supp(\Pr_\D)\subseteq \Mf_\D$, and for some $\delta_0> 0$, it holds with probability at least $\delta_0$ that, 
    {\small
    \begin{align*}
        \Delta_{\Pr_{\D}, \Mf_{\D}}(m^*) >  \frac{8r_{\max}}{(1-\gamma)^2}\sqrt{\frac{\ln(|\Mf_{\D}|/\delta_0)}{|\D|}}\left(\sqrt{\mathcal C(\pi^*(\Mf_{\D}))} -\sqrt{\max(\mathcal C_{\text{Bayes}}(\pi^*(m^*)), \mathcal C_{\text{Bayes}}(\pi^*(\Pr_{\D})))}\right).
    \end{align*}
    }
\end{theorem}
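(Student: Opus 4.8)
The plan is to exhibit a concrete instance rather than argue abstractly. I take the skewed two-armed bandit of \autoref{sec:bandit} --- fix $p^*_0=1/2$ and let the behavior policy play only arm $0$ --- and promote it to a pair of such bandits, chosen so that the posterior $\Pr_\D$ straddles $p^*_0$ (puts mass both on a model whose unseen arm is worse and on one where it is better), while $\Mf_\D\supseteq\supp(\Pr_\D)$ additionally contains a model that makes the unseen arm bad enough to pin down the robust-optimal policy. One then writes $\Delta_{\Pr_\D,\Mf_\D}(m^*)=S_{\Mf_\D}(m^*)-S_{\Pr_\D}(m^*)$ and treats the two gaps separately: a generic \emph{upper} bound on the Bayesian gap and an instance-specific \emph{lower} bound on the robust gap.

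For the upper bound on $S_{\Pr_\D}(m^*)$, I would telescope $S_{\Pr_\D}(m^*)=\bigl[J(\pi^*(m^*),m^*)-\E{m\sim\Pr_\D}{J(\pi^*(m^*),m)}\bigr]+\bigl[\E{m\sim\Pr_\D}{J(\pi^*(m^*),m)}-\E{m\sim\Pr_\D}{J(\pi^*(\Pr_\D),m)}\bigr]+\bigl[\E{m\sim\Pr_\D}{J(\pi^*(\Pr_\D),m)}-J(\pi^*(\Pr_\D),m^*)\bigr]$, observe that the middle bracket is $\le 0$ by optimality of $\pi^*(\Pr_\D)$ for the Bayesian objective (\autoref{eq:bayes_obj}), and hence bound $S_{\Pr_\D}(m^*)\le \E{m\sim\Pr_\D}{\lvert J(\pi^*(m^*),m)-J(\pi^*(m^*),m^*)\rvert}+\E{m\sim\Pr_\D}{\lvert J(\pi^*(\Pr_\D),m)-J(\pi^*(\Pr_\D),m^*)\rvert}$. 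Each term I control with the simulation lemma (\autoref{lemma:simulation_lemma}) and Cauchy--Schwarz, namely $\lvert J(\pi,m)-J(\pi,m^*)\rvert\le \tfrac{2\rmax}{(1-\gamma)^2}\sqrt{\E{(s,a)\sim d_m^\pi}{\textsc{tv}(m(s,a),m^*(s,a))^2}}$; pulling the posterior expectation inside the square root by Jensen and invoking the definition of Bayesian concentrability (\autoref{eq:bayes_concen}) turns this into $\tfrac{2\rmax}{(1-\gamma)^2}\sqrt{\mathcal C_{\text{Bayes}}(\pi)\,\E{m\sim\Pr_\D}{\E{(s,a)\sim\beta}{\textsc{tv}(m(s,a),m^*(s,a))^2}}}$ for $\pi\in\{\pi^*(m^*),\pi^*(\Pr_\D)\}$. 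Taking $\Mf_\D$ to be the finite MLE confidence set built from $\D$, a standard likelihood-concentration bound gives the common denominator $\E{m\sim\Pr_\D}{\E{(s,a)\sim\beta}{\textsc{tv}(m(s,a),m^*(s,a))^2}}\le \tfrac{c\ln(|\Mf_\D|/\delta_0)}{|\D|}$ on a good event, and $\sqrt{a}+\sqrt{b}\le 2\sqrt{\max(a,b)}$ collapses the two summands into $S_{\Pr_\D}(m^*)\le \tfrac{8\rmax\sqrt{c}}{(1-\gamma)^2}\sqrt{\tfrac{\ln(|\Mf_\D|/\delta_0)}{|\D|}}\sqrt{\max\bigl(\mathcal C_{\text{Bayes}}(\pi^*(m^*)),\mathcal C_{\text{Bayes}}(\pi^*(\Pr_\D))\bigr)}$.

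For the lower bound on $S_{\Mf_\D}(m^*)$ I argue directly from the construction. Because $\Mf_\D$ contains a model with the unseen arm arbitrarily bad, every minimax policy $\pi^*(\Mf_\D)$ commits to the seen arm, with value $\tfrac{p^*_0\rmax}{1-\gamma}$ on every $m\in\Mf_\D$; taking $m^*$ to be the member of the pair whose unseen arm is the better one gives $S_{\Mf_\D}(m^*)=\tfrac{(p^*_1-p^*_0)_+\rmax}{1-\gamma}$, a fixed positive quantity of order $\tfrac{\rmax}{1-\gamma}$. The good event, which holds with probability at least a constant $\delta_0$, is chosen to deliver simultaneously (i) the likelihood-concentration bound above and (ii) an \emph{anti-concentration} lower bound $\lvert\hat p_0-p^*_0\rvert\gtrsim 1/\sqrt{|\D|}$ that prevents the in-support denominators $\E{(s,a)\sim\beta}{\textsc{tv}(m(s,a),m^*(s,a))^2}$ from collapsing to $0$ (this is exactly why only $\delta_0$, not $1-\delta_0$, can be guaranteed). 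On that event all coverage ratios are finite, and because $\beta$ and $d_m^{\pi^*(\Mf_\D)}$ are supported on the seen arm while $d_m^{\pi^*(m^*)}$ and $d_m^{\pi^*(\Pr_\D)}$ put weight on the unseen arm, posterior averaging keeps $\mathcal C_{\text{Bayes}}$ strictly below $\mathcal C(\pi^*(\Mf_\D))$ --- the mechanism of \autoref{prop: bayes vs robust coverage} and its example. It then remains to choose the free parameters --- the sample-size regime, the reward gap $p^*_1-p^*_0$, and the posterior weights, using $\gamma>1/2$ to bound the Bayes-optimal policy's test-time exploration cost so that $S_{\Pr_\D}(m^*)$ stays below the upper bound displayed above --- so that $S_{\Mf_\D}(m^*)$ strictly exceeds $\tfrac{8\rmax}{(1-\gamma)^2}\sqrt{\tfrac{\ln(|\Mf_\D|/\delta_0)}{|\D|}}\sqrt{\mathcal C(\pi^*(\Mf_\D))}$, at which point subtracting the two bounds yields the stated inequality.

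The main obstacle is not any individual estimate but making the construction's coverage ratios land in the right window. On one hand $\mathcal C(\pi^*(\Mf_\D))$ must exceed $\max\bigl(\mathcal C_{\text{Bayes}}(\pi^*(m^*)),\mathcal C_{\text{Bayes}}(\pi^*(\Pr_\D))\bigr)$ so that the parenthesized factor is positive and the statement is not vacuous; on the other hand the product $\sqrt{\ln(|\Mf_\D|/\delta_0)/|\D|}\cdot\sqrt{\mathcal C(\pi^*(\Mf_\D))}$ must stay below $\tfrac{1-\gamma}{4}$, since the quantity being lower-bounded satisfies $\Delta_{\Pr_\D,\Mf_\D}(m^*)\le \tfrac{2\rmax}{1-\gamma}$. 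Threading these two requirements forces a careful joint choice of $|\Mf_\D|$, $\delta_0$, $|\D|$ and the gap $p^*_1-p^*_0$, plus the verification that the concentration and anti-concentration events genuinely intersect on the same $\delta_0$-probability event and that the constants ($8$ and the $(1-\gamma)^{-2}$ power) survive the simulation-lemma and Jensen steps unchanged.
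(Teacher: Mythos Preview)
Your treatment of the Bayesian gap $S_{\Pr_\D}(m^*)$ coincides with the paper's \autoref{thm:bayes_subopt}: the same telescope, the same simulation-lemma/Jensen/MLE-PAC chain, and the same $\sqrt{\max(\cdot,\cdot)}$ collapse. That half is fine.

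The lower bound on $S_{\Mf_\D}(m^*)$ is where your approach diverges from the paper's and where it breaks. You force $\pi^*(\Mf_\D)$ onto the seen arm by planting an ``arbitrarily bad'' model in $\Mf_\D$, and set $\beta(1)=0$. But then $d_m^{\pi^*(\Mf_\D)}$ and $\beta$ are both supported on arm~$0$, so for every $m$ with $m(0)\neq m^*(0)$ the ratio in \autoref{eq:concen} reduces to $\mathcal C(\pi^*(\Mf_\D),m)=1$ (numerator and denominator are the same TV term), giving $\mathcal C(\pi^*(\Mf_\D))=1$. Meanwhile $\pi^*(m^*)$ plays the unseen arm, on which posterior models are $O(1)$ away from $m^*$ while on the seen arm they are $O(1/\sqrt{|\D|})$-close; hence $\mathcal C_{\text{Bayes}}(\pi^*(m^*))=\Theta(|\D|)\gg 1$. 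Your claim that ``posterior averaging keeps $\mathcal C_{\text{Bayes}}$ strictly below $\mathcal C(\pi^*(\Mf_\D))$'' is therefore the reverse of what your own construction yields, and \autoref{prop: bayes vs robust coverage} does not rescue it since that proposition compares robust and Bayesian coefficients for the \emph{same} policy, whereas the theorem involves $\mathcal C(\pi^*(\Mf_\D))$ against $\mathcal C_{\text{Bayes}}$ evaluated at two \emph{different} policies. In your setup the parenthesized factor is necessarily negative and the bound is vacuous; no tuning of $|\D|$, posterior weights, or the reward gap fixes this, because pinning the robust policy to the covered arm is precisely what collapses $\mathcal C(\pi^*(\Mf_\D))$.

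The paper's \autoref{thm:robust_subopt} avoids the trap by not pinning the robust policy at all. It takes $\beta(1)=\beta>0$ (both arms covered, the second only barely), restricts to a two-point family $\Mf=\{m_{-1},m_1\}$ with Bernoulli parameters $1/2\pm\epsilon$ on the rare arm, and runs a Le~Cam hypothesis-testing argument: if any data-dependent policy --- in particular $\pi^*(\Mf)$ --- had suboptimality $\le\epsilon$ with probability $\ge 7/8$, it would identify the hidden sign $\theta$, contradicting the lower bound $p_e\ge\tfrac14\exp(-\textsc{kl})$ once $\epsilon\lesssim 1/\sqrt{|\D|\beta}$. The condition $\gamma>1/2$ enters here, not to control exploration cost as you suggest, but to ensure that an $\epsilon$-suboptimal policy must place mass $\ge\gamma>1/2$ on the correct arm and hence reveals $\theta$ via $\argmax_a\hat\pi(a)$. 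Setting $\beta=\tfrac{(1-\gamma)^4}{64r_{\max}^2\mathcal C(\pi^*(\Mf))}$ then ties the achievable $\epsilon$ to $\sqrt{\mathcal C(\pi^*(\Mf))}$ with the $\tfrac{8r_{\max}}{(1-\gamma)^2}$ constant, and subtracting \autoref{thm:bayes_subopt} gives the stated inequality.
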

\begin{proof}[Proof sketch]
The proof proceeds in two steps, each being established in \autoref{thm:bayes_subopt} and \autoref{thm:robust_subopt} respectively. (1) We upper-bound the Bayesian gap $S_{\Pr_\D}(m^*)$ (\autoref{thm:bayes_subopt}). 
This bound makes explicit the ``price of Bayesianism'': coverage adequate for the ideal policy $\pi^*(m^*)$ alone does not suffice. We also need reasonable coverage for the Bayes-optimal policy, i.e., $\mathcal C_{\text{Bayes}}(\pi^*(\Pr_{\D})) < \infty$, even though posterior averaging makes $\mathcal C_{\text{Bayes}}(\cdot)$ more likely to be finite.
(2) We lower-bound the robust gap $S_{\Mf_{\D}}(m^*)$ (\autoref{thm:robust_subopt}), by constructing an MDP with inadequate frequentist coverage relative to the behavior policy. 
\end{proof}
\begin{remark}
This theorem shows that when $\mathcal C(\pi^*(\Mf_{\D})) > \max(\mathcal C_{\text{Bayes}}(\pi^*(m^*)), \mathcal C_{\text{Bayes}}(\pi^*(\Pr_{\D})))$, the Bayes-optimal policy performs better than the robust-optimal policy in the true MDP. By construction, the lower bound in \autoref{thm:robust_subopt} holds when $\epsilon$ -- the parametric gap between two well-chosen MDPs -- is smaller than $c\cdot \mathcal C(\pi^*(\Mf_{\D}))/|\D|$. This is more likely to occur when the dataset is small or when the concentrability $\mathcal C(\pi^*(\Mf_{\D}))$ is large. In other words, adopting a frequentist (or conservative) approach rather than Bayesianism implies that the dataset does not provide sufficient statistical information when: (1) it is small, or (2) its coverage differs from that of the ground truth optimal policy. Those are the two features measuring the ``quality'' of an offline dataset. The Bayesian approach potentially yields better performance than conservatism because, for a given amount of data, the Bayesian concentrability is smaller than the robust one, making the lower bound irrelevant for the Bayesian setting.  
\end{remark}

\begin{lemma}[\textbf{Upper bound on Bayesian sub-optimality gap}]
\label{thm:bayes_subopt}
    Assume that $m^*\in \supp(\Pr_\D)\subseteq \Mf_\D$. Then, for any $\delta \in (0,1)$, with probability at least $1-\delta$, 
    \begin{align*}
        &S_{\Pr_\D}(m^*) \leq \frac{8r_{\max}}{(1-\gamma)^2}\sqrt{\frac{\ln(|\Mf_\D|/\delta)}{|\D|}}\sqrt{\max(\mathcal C_{\text{Bayes}}(\pi^*(m^*)), \mathcal C_{\text{Bayes}}(\pi^*(\Pr_{\D})))}.
    \end{align*}
\end{lemma}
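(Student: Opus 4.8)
The plan is to telescope the Bayesian sub-optimality gap through the Bayesian objective $J(\cdot;\Pr_{\D})=\E{m\sim\Pr_{\D}}{J(\cdot,m)}$, control the resulting model-mismatch terms with the simulation lemma, and close the bound with a finite-class maximum-likelihood concentration estimate. Concretely, I would write
\begin{align*}
S_{\Pr_{\D}}(m^*) &= \underbrace{J(\pi^*(m^*),m^*)-J(\pi^*(m^*);\Pr_{\D})}_{(\mathrm{I})}
+\underbrace{J(\pi^*(m^*);\Pr_{\D})-J(\pi^*(\Pr_{\D});\Pr_{\D})}_{(\mathrm{II})}\\
&\quad+\underbrace{J(\pi^*(\Pr_{\D});\Pr_{\D})-J(\pi^*(\Pr_{\D}),m^*)}_{(\mathrm{III})},
\end{align*}
where $(\mathrm{II})\le 0$ because $\pi^*(\Pr_{\D})$ maximizes the Bayesian objective. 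Rewriting $(\mathrm{I})$ and $(\mathrm{III})$ as posterior averages, each is bounded by $\E{m\sim\Pr_{\D}}{|J(\pi,m)-J(\pi,m^*)|}$ with $\pi=\pi^*(m^*)$ and $\pi=\pi^*(\Pr_{\D})$ respectively.

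For a fixed $\pi$, the next step is to apply the simulation lemma (\citet[Lemma 9]{uehara2021pessimistic}; \autoref{lemma:simulation_lemma}) in the orientation that yields $|J(\pi,m)-J(\pi,m^*)|\le\frac{2r_{\max}}{(1-\gamma)^2}\E{(s,a)\sim d_{m}^{\pi}}{\textsc{tv}(m(s,a),m^*(s,a))}$, so that the occupancy measure is the one under the hypothesised model $m$, matching the numerator of $\mathcal C(\pi,m)$. I would then apply Jensen twice: first $\textsc{tv}\le\sqrt{\textsc{tv}^2}$ inside the occupancy expectation, then, after taking $\E{m\sim\Pr_{\D}}{\cdot}$, concavity of $\sqrt{\cdot}$ to pull the root outside the posterior average. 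By the definition of Bayesian concentrability (\autoref{eq:bayes_concen}), this gives $(\mathrm{I})\le\frac{2r_{\max}}{(1-\gamma)^2}\sqrt{\mathcal C_{\text{Bayes}}(\pi^*(m^*))\,D}$ and $(\mathrm{III})\le\frac{2r_{\max}}{(1-\gamma)^2}\sqrt{\mathcal C_{\text{Bayes}}(\pi^*(\Pr_{\D}))\,D}$, where $D:=\E{m\sim\Pr_{\D}}{\E{(s,a)\sim\beta}{\textsc{tv}(m(s,a),m^*(s,a))^2}}$ is the shared denominator. Summing and using $\sqrt a+\sqrt b\le 2\sqrt{\max(a,b)}$ leaves $S_{\Pr_{\D}}(m^*)\le\frac{4r_{\max}}{(1-\gamma)^2}\sqrt{\max(\mathcal C_{\text{Bayes}}(\pi^*(m^*)),\mathcal C_{\text{Bayes}}(\pi^*(\Pr_{\D})))}\cdot\sqrt{D}$.

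The last ingredient is to bound $D\le 4\ln(|\Mf_{\D}|/\delta)/|\D|$ with probability at least $1-\delta$. Since $\Mf_{\D}$ is finite and realizable ($m^*\in\supp(\Pr_{\D})\subseteq\Mf_{\D}$) and the posterior is supported only on near-maximum-likelihood models, a Hoeffding-plus-union-bound argument over the finite class controls the squared Hellinger distance between each model in $\supp(\Pr_{\D})$ and $m^*$ under $\beta$ uniformly at rate $O(\ln(|\Mf_{\D}|/\delta)/|\D|)$; converting via $\textsc{tv}^2\le 2H^2$ and averaging over $m\sim\Pr_{\D}$ preserves the bound for $D$. Plugging $\sqrt D\le 2\sqrt{\ln(|\Mf_{\D}|/\delta)/|\D|}$ into the previous display yields the stated coefficient $\frac{8r_{\max}}{(1-\gamma)^2}$.

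I expect the main obstacle to be twofold: (i) invoking the simulation lemma in the direction that produces $d_{m}^{\pi}$ rather than $d_{m^*}^{\pi}$, since only this form meshes with the concentrability ratio of \autoref{eq:bayes_concen} and lets the denominator cancel cleanly; and (ii) the maximum-likelihood step — arguing that \emph{every} model in the posterior support, not merely the MLE $m_{\D}$, satisfies the data-distribution total-variation bound, which is precisely what licenses moving the expectation over $\Pr_{\D}$ inside and collapsing $D$ to the concentration estimate. Once the simulation-lemma constant and the Hellinger–total-variation comparison are pinned down, tracking the numerical constants so that the two Jensen steps and the denominator bound combine into the factor $8$ is routine.
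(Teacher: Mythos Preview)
Your proposal is correct and follows essentially the same route as the paper's proof: the identical three-term telescoping with $(\mathrm{II})\le 0$ by Bayes-optimality, the simulation lemma applied in the orientation that produces $d_m^{\pi}$ (the paper's \autoref{lemma:simulation_lemma}), the same two Jensen steps to extract the Bayesian concentrability ratio, and the MLE PAC bound (the paper's \autoref{lemma:mle_pac}, from \citet{agarwal2020flambe}) applied uniformly over $\supp(\Pr_{\D})$ to control $D$. Your two anticipated obstacles---the simulation-lemma orientation and the ``every model in the posterior support is an MLE'' step---are exactly the points the paper pauses on, and the constant tracking is the same (the paper uses $\sqrt 2\le 2$ after invoking $\textsc{tv}^2\le 2\ln(|\Mf_{\D}|/\delta)/|\D|$, you use the looser $\textsc{tv}^2\le 4\ln(|\Mf_{\D}|/\delta)/|\D|$ via Hellinger; both yield the factor $8$).
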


\begin{proof}[Proof of \autoref{thm:bayes_subopt}]
By definition of the Bayesian sub-optimality gap (\autoref{eq:bayes_gap}), 
\begin{align*}
    &S_{\Pr_\D}(m^*) \\
    &= J(\pi^*(m^*), m^*) - J(\pi^*(\Pr_\D), m^*)\\
    &=  J(\pi^*(m^*), m^*) - \E{m\sim\Pr_\D}{J(\pi^*(m^*), m)} + \E{m\sim\Pr_\D}{J(\pi^*(m^*), m)} \\
    &\qquad- \E{m\sim\Pr_\D}{J(\pi^*(\Pr_\D), m)} +\E{m\sim\Pr_\D}{J(\pi^*(\Pr_{\D}), m)} - J(\pi^*(\Pr_\D), m^*)\\
    &\leq (J(\pi^*(m^*), m^*) - \E{m\sim\Pr_\D}{J(\pi^*(m^*), m)}) + (\E{m\sim\Pr_{\D}}{J(\pi^*(\Pr_{\D}), m)} - J(\pi^*(\Pr_{\D}), m^*)),
\end{align*}
where the last inequality holds by definition of $\pi^*(\Pr_\D)$ being a solution of $\max_{\pi}\E{m\sim\Pr_\D}{J(\pi, m)}$. 
The first term measures how the performance of the ideal policy $\pi^*(m^*)$ under the true MDP deviates from its Bayesian average.  
The second term measures the analogous deviation for the Bayes-optimal policy $\pi^*(\Pr_\D)$.  
We aim to upper-bound each of these regret terms.

\textbf{Step 1: Upper bound $J(\pi^*(m^*), m^*) - \E{m\sim\Pr_\D}{J(\pi^*(m^*), m)}$. }
\begin{align*}
    &J(\pi^*(m^*), m^*) - \E{m\sim\Pr_\D}{J(\pi^*(m^*), m)} =\E{m\sim\Pr_\D}{J(\pi^*(m^*), m^*)  - J(\pi^*(m^*), m)}\\
    &\leq \frac{2r_{\max}}{(1-\gamma)^2} \E{m\sim\Pr_\D}{\E{(s,a)\sim d^{\pi^*(m^*)}_m}{  \textsc{tv}(m(s,a), m^*(s,a))}},
\end{align*}
according to \autoref{lemma:simulation_lemma}. Therefore, by Jensen's inequality, 
{\small
\begin{align*}
    J(\pi^*(m^*), m^*) - \E{m\sim\Pr_\D}{J(\pi^*(m^*), m)}&\leq \frac{2r_{\max}}{(1-\gamma)^2}\E{m\sim\Pr_\D}{\sqrt{\E{(s,a)\sim d^{\pi^*(m^*)}_m}{  \textsc{tv}(m(s,a), m^*(s,a))^2}}}\\
    &\leq \frac{2r_{\max}}{(1-\gamma)^2} \sqrt{\E{m\sim\Pr_\D}{\E{(s,a)\sim d^{\pi^*(m^*)}_m}{  \textsc{tv}(m(s,a), m^*(s,a))^2}}},
\end{align*}%
}and by construction of the Bayesian concentrability coefficient (\autoref{eq:bayes_concen}):
\begin{align*}
    &J(\pi^*(m^*), m^*) - \E{m\sim\Pr_\D}{J(\pi^*(m^*), m)}  \\
    &\leq \frac{2r_{\max}}{(1-\gamma)^2} \sqrt{\mathcal C_{\text{Bayes}}(\pi^*(m^*))\E{m\sim\Pr_\D}{\E{(s,a)\sim \beta}{  \textsc{tv}(m(s,a), m^*(s,a))^2}}}.
\end{align*}
Finally, since by construction $\Pr_\D$ is supported only on models that achieve MLE under $\D$, every $m\in \supp(\Pr_\D)$ satisfies the PAC bound of \autoref{lemma:mle_pac}, yielding
\begin{align*}
    J(\pi^*(m^*), m^*) - \E{m\sim\Pr_\D}{J(\pi^*(m^*), m)} \leq    \frac{4r_{\max}}{(1-\gamma)^2} \sqrt{\mathcal C_{\text{Bayes}}(\pi^*(m^*))}\sqrt{\frac{\ln(|\Mf_{\D}|/\delta)}{|\D|}}.
\end{align*}

\textbf{Step 2: Upper bound $\E{m\sim\Pr}{J(\pi^*(\Pr_{\D}), m)} - J(\pi^*(\Pr_{\D}), m^*)$. }
We use the same reasoning as above, but applied this time to the Bayes-optimal policy, and deduce the following bound:
\begin{align*}
    \E{m\sim\Pr}{J(\pi^*(\Pr_{\D}), m)} - J(\pi^*(\Pr_{\D}), m^*) \leq  \frac{4r_{\max}}{(1-\gamma)^2} \sqrt{\mathcal C_{\text{Bayes}}(\pi^*(\Pr_\D))}\sqrt{\frac{\ln(|\Mf_{\D}|/\delta)}{|\D|}}.
\end{align*}
We finally obtain: 
\begin{align*}
    S_{\Pr_\D}(m^*) &\leq  \frac{4r_{\max}}{(1-\gamma)^2}\sqrt{\frac{\ln(|\Mf_{\D}|/\delta)}{|\D|}} \left( \sqrt{\mathcal C_{\text{Bayes}}(\pi^*(m^*))}+   \sqrt{\mathcal C_{\text{Bayes}}(\pi^*(\Pr_\D))}\right)\\
    &\leq \frac{8r_{\max}}{(1-\gamma)^2}\sqrt{\frac{\ln(|\Mf_{\D}|/\delta)}{|\D|}}\sqrt{\max(\mathcal C_{\text{Bayes}}(\pi^*(m^*)) , \mathcal C_{\text{Bayes}}(\pi^*(\Pr_\D))}.
\end{align*}
\end{proof}

\begin{lemma}[\textbf{Existence of a lower bound on robust sub-optimality gap}]
\label{thm:robust_subopt}
Assume that $\gamma > 1/2$. We can construct an MDP instance $m^* = m_1$ inducing an offline dataset $\D$ and a set of models $\Mf = \{m_{-1}, m_1\}$, such that the optimal robust policy $\pi^*({\Mf})$ satisfies w.p. at least $\delta_0$:
\begin{align*}
     J(\pi^*(m^*), m^*) - J(\pi^*({\Mf}), m^*) >  \frac{8r_{\max}}{(1-\gamma)^2}\sqrt{\frac{\ln(|\Mf|/\delta_0)}{|\D|}}\sqrt{\mathcal C(\pi^*(\Mf))}.
\end{align*}
\end{lemma}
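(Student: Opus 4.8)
The plan is to exhibit an explicit low-quality bandit instance on which the robust-optimal policy is forced to commit to a poorly-covered in-support arm, so that its concentrability is $1/\alpha$ while its sub-optimality is already of the maximal order $r_{\max}/(1-\gamma)$; the claimed inequality then follows by choosing the behavior-policy parameter $\alpha$ appropriately.

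Concretely, I would take a single-state (bandit) MDP $m^* = m_1$ with three arms $\{A,B,C\}$ and rewards in $\{\pm r_{\max}\}$: arm $C$ deterministically returns $+r_{\max}$ (the true optimum), arm $A$ returns $\pm r_{\max}$ with probability $\tfrac12$ (mean $0$), and arm $B$ deterministically returns $-r_{\max}$. Let the behavior policy pull $A$ with probability $\alpha$ and $B$ with probability $1-\alpha$ at every step and never pull $C$, so that $\beta = \alpha\,\mathbbm{1}_A + (1-\alpha)\,\mathbbm{1}_B$ and $\D$ contains $\approx\alpha|\D|$ pulls of $A$, $\approx(1-\alpha)|\D|$ of $B$, and none of $C$. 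I then define $m_{-1}$ to equal the empirical model $m_\D$ on the support (Bernoulli$(\hat p_A)$ on $A$, deterministic $-r_{\max}$ on $B$), making it trivially consistent with $\D$, and on the unobserved arm $C$ I set $m_{-1}$ to deterministically return $-r_{\max}$. Put $\Mf = \{m_1, m_{-1}\}$. Fixing a small constant $\delta_0$, the MLE PAC bound (\autoref{lemma:mle_pac}) implies that, on an event $\mathcal E$ of probability at least $1-\delta_0 \ge \delta_0$, the true model $m^*$ also lies in the MLE confidence ball, so $\Mf$ is a valid realizable dataset-consistent class with $m^*\in\Mf$, and moreover $|\hat p_A - \tfrac12|$ is small; I work on $\mathcal E$ from here on.

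Next I would identify the relevant optimal policies. Arm $C$ is uniquely best under $m^*$, so $\pi^*(m^*)$ commits to $C$ and $J(\pi^*(m^*), m^*) = r_{\max}/(1-\gamma)$. For the robust policy, committing to $A$ has worst-case return $\min(0,\hat\mu_A)/(1-\gamma) \approx 0$ (where $\hat\mu_A$ is arm $A$'s mean under $m_{-1}$, which is $o(r_{\max})$ on $\mathcal E$), committing to $B$ or to $C$ has worst-case return $-r_{\max}/(1-\gamma)$ (arm $C$ is pulled down to $-r_{\max}$ by $m_{-1}$), and --- the delicate step --- no randomized or history-dependent policy beats ``commit to $A$'' in the minimax sense: arm $A$ is the unique best arm under $m_{-1}$, its rewards are (nearly) indistinguishable from those under $m_1$, so probing arm $C$ to identify the model is not worth the worst-case loss it incurs (this is where I expect the hypothesis $\gamma>1/2$ and the $\pm r_{\max}$ reward bound to enter). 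Hence $\pi^*(\Mf)$ commits to $A$, $J(\pi^*(\Mf),m^*)=0$, and $S_\Mf(m^*) = r_{\max}/(1-\gamma)$.

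It then remains to compute the robust concentrability of $\pi^*(\Mf)$ and close the arithmetic. The occupancy of ``commit to $A$'' is the point mass on $A$, and $m_{-1}$ differs from $m^*$ only on $A$ and $C$, which carry $\beta$-mass $\alpha$ and $0$ respectively; writing $\delta_A := \textsc{tv}(m_{-1}(\cdot,A),m^*(\cdot,A)) = |\hat p_A - \tfrac12| > 0$, one gets $\mathcal C(\pi^*(\Mf)) = \mathcal C(\pi^*(\Mf),m_{-1}) = \delta_A^2/(\alpha\,\delta_A^2) = 1/\alpha$ (the $m=m^*$ term being $0/0$, taken as $0$). Substituting $S_\Mf(m^*) = r_{\max}/(1-\gamma)$ and $\mathcal C(\pi^*(\Mf)) = 1/\alpha$, the claimed bound $S_\Mf(m^*) > \tfrac{8r_{\max}}{(1-\gamma)^2}\sqrt{\ln(2/\delta_0)/|\D|}\,\sqrt{1/\alpha}$ reduces to $(1-\gamma)^2\,\alpha\,|\D| > 64\ln(2/\delta_0)$, which holds whenever $\alpha > 64\ln(2/\delta_0)/((1-\gamma)^2|\D|)$ --- a choice that still lets $\mathcal C(\pi^*(\Mf)) = 1/\alpha$ grow like $\Theta((1-\gamma)^2|\D|)$, i.e., become arbitrarily large. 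The main obstacle is the minimax characterization of $\pi^*(\Mf)$: because $\Mf$ is a non-rectangular (fixed-per-episode) uncertainty set, adaptive policies could in principle exploit information gained from probing the unseen arm, and ruling this out --- showing the informational value of such probing is dominated by its worst-case cost over $\{m_1,m_{-1}\}$ --- is the step I expect to require the $\gamma>1/2$ assumption and the most care, while the concentrability and sub-optimality computations and the final inequality are routine.
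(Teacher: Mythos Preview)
Your route is genuinely different from the paper's and has a real gap at the step you yourself flagged. The paper builds a \emph{two}-armed bandit in which \emph{both} arms are covered --- arm $-1$ yields $\mathcal B(1/2)$ under both models, arm $1$ yields $\mathcal B(1/2+\theta\epsilon)$ for $\theta\in\{-1,1\}$ --- puts small behavior mass $\beta$ on arm $1$, and then runs a Le Cam/Bretagnolle--Huber two-point argument: with $\epsilon$ calibrated so that the KL between the two sample laws is $O(1)$, no data-based rule can distinguish $\theta=\pm 1$ with probability $>7/8$, hence any policy estimate (in particular $\pi^*(\Mf)$) suffers regret $>\epsilon$ on some $\theta$ with probability at least $1/8=\delta_0$. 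The hypothesis $\gamma>1/2$ enters only in this reduction: a policy with regret $\le\epsilon$ under $m_\theta$ must put mass $\ge\gamma>1/2$ on arm $\theta$, which converts low regret into a successful test. Crucially, the paper never has to identify $\pi^*(\Mf)$.

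Your minimax step does not close the way you anticipate, and $\gamma>1/2$ is not what rescues it. In your construction arm $C$ is \emph{perfectly revealing} (deterministic $+r_{\max}$ vs.\ $-r_{\max}$), so a single pull identifies the model --- the information source is $C$, not the near-indistinguishable arm $A$. When $\hat\mu_A>0$ (an event of probability about $1/2$), the policy that probes $C$ with tiny probability $q\approx\hat\mu_A/(r_{\max}(2-\gamma))$ at step $0$ and otherwise commits to $A$ has $\min_m J\approx \hat\mu_A/((2-\gamma)(1-\gamma))>0$, strictly beating commit-$A$'s minimax value $0$; so commit-$A$ is \emph{not} robust-optimal there, and you lose control of both $J(\pi^*(\Mf),m^*)$ and $\mathcal C(\pi^*(\Mf))$. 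A clean fix is to take $m_{-1}$ on arm $A$ as $\mathcal B(1/2-\eta)$ for a fixed small $\eta>0$ rather than the empirical model; then arm $A$'s mean under $m_{-1}$ is deterministically negative, and an easy induction (any robust-optimal $\pi^*$ must play $A$ at step $0$ to match $v^*$ under $m_{-1}$, and since arm-$A$ reward histories have full support under both models the same argument recurs at every step) forces $\pi^*(\Mf)=$ commit-$A$. Your arithmetic then goes through --- with no role for $\gamma>1/2$ and no need for the MLE PAC event --- giving a more elementary but also more fragile alternative to the paper's information-theoretic argument.
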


\begin{proof}[Proof of \autoref{thm:robust_subopt}]

Similarly as \cite{li2024settling}, we build two MDPs (here, two sequential two-armed bandits) and a behavior policy such that the conservative value estimated from the resulting dataset is far from the optimal return. 

\textbf{The example.} Consider two sequential bandits $\Mf:= \{m_{-1}, m_1\}$ such that $m^*=m_1$ is the ground-truth model. We parameterize both by $\theta\in\{-1,1\}$. They share the same action space $\A := \{-1,1\}$, the same reward for the negative action, but different reward for the positive action, namely, $R_{\theta}(-1)\sim \mathcal{B}(1/2)$ while $R_{\theta}(1)\sim \mathcal{B}(1/2+\theta\epsilon)$ with $\epsilon>0$ that will be determined later. 
Clearly, the ideal policy for each MDP is $\pi^*_{\theta}= \theta$. With notational abuse, let the behavior policy be $\beta(1) = \beta = 1-\beta(-1)$
from which we collect $|\D|$ i.i.d. samples. As will be justified in the following, we set $\beta = \frac{(1-\gamma)^4}{64r_{\max}^2\mathcal C(\pi^*(\Mf))} \in [0,1].$

\textbf{Relation to the bandit example in \autoref{sec:bandit}.} 
Here, action $-1$ plays the role of the sampling action with Bernoulli parameter $1/2$, and action $1$, the “uncovered action” in the skewed bandit dataset of \autoref{sec:bandit}.
We deliberately avoid setting $\beta(1)=0$, since that would violate the data coverage assumption and make $\mathcal C(\pi^*(\Mf))$ infinite.\footnote{In the limit $\mathcal C(\pi^*(\Mf))\to\infty$, we recover the example in \autoref{sec:bandit}.} Yet, the dataset is still skewed because it can be generated from $m_{1}$ under a suboptimal policy (but close-to-optimal policy for $m_{-1}$). In that case, it is statistically hard to identify the correct model \textit{when $\epsilon$ is too small}, which is how we choose it for the lower bound to hold. 

\textbf{Proof.} 
With a slight abuse of notation, denote a policy by $\pi := \pi(1) = 1 - \pi(-1)$. The discounted value may be expressed according to the underlying MDP as: 
\begin{align*}
    J(\pi, \theta) &= \sum_{t=0}^{\infty}\gamma^t\left(\pi\cdot(1/2 + \theta\epsilon) + (1- \pi)\cdot 1/2)\right)
    = \sum_{t=0}^{\infty}\gamma^t \left(\frac{1}{2} +\theta \epsilon\pi\right) = \frac{1 + 2\theta\epsilon\pi}{2(1-\gamma)}.
\end{align*}
For each model $\theta\in \{-1, 1\}$, the suboptimality gap of a policy $\pi$ is thus:
\begin{align*}
    \delta^{\pi}(1) &=  \frac{1 + 2\epsilon}{2(1-\gamma)} -  \frac{1 + 2\epsilon\pi}{2(1-\gamma)}
    = \frac{2\epsilon(1 - \pi)}{2(1-\gamma)} = \frac{\epsilon(1 - \pi)}{1-\gamma},\\
    \delta^{\pi}(-1) &= \frac{1}{2(1-\gamma)} -  \frac{1 - 2\epsilon\pi}{2(1-\gamma)} = \frac{\epsilon\pi}{1-\gamma}.
\end{align*}
More synthetically, and since $\gamma\in [0,1)$, we get:
\begin{align}
\label{eq: subop theta}
    \delta^{\pi}(\theta) = \frac{\epsilon}{1-\gamma}(1 - \pi(\theta)).
\end{align}
By contradiction, suppose that we can find a policy estimate from the dataset such that: 
\begin{align*}
    \Pr_{\theta}(J(\pi^*(\theta), \theta) - J(\hat{\pi}, \theta) \leq \epsilon) = \Pr_{\theta}(\delta^{\hat\pi}(\theta) \leq \epsilon) \geq \frac{7}{8}.
\end{align*}
Then, from \autoref{eq: subop theta},  we should have with probability greater than $7/8$ that:\begin{align*}
    \frac{\epsilon}{1-\gamma}(1 - \hat{\pi}(\theta)) \leq \epsilon\iff \hat{\pi}(\theta) \frac{\epsilon}{1-\gamma}\geq \frac{\epsilon}{1-\gamma} -\epsilon \iff \hat\pi(\theta)\geq 1 - (1-\gamma) = \gamma.
\end{align*}
By assumption, $\gamma>1/2$. If the above statement were true, then we could construct the following estimator 
$\hat{\theta}$ of $\theta$ based on $\hat\pi$:
\begin{align}
\label{eq: estimate}
    \hat\theta = \argmax_{a\in\{-1,1\}} \hat\pi(a)
\end{align}
and thus, 
$\Pr_{\theta}(\hat\theta = \theta) = \Pr_{\theta}(\hat\pi(\theta) > 1/2) \geq \Pr_{\theta}(\hat\pi(\theta) \geq \gamma) \geq 7/8. $

We analyze the hypothesis testing of identifying the true MDP given the generated data. Formally, let the test $\phi(\D) = 0$ mean ``decide $m^* = m_{-1}$'' and $\phi(\D) = 1$ mean ``decide $m^* = m_1$''. 
Consider the minimax probability of error $p_e$ between $m_{-1}$ and $m_1$:
$$
p_e \;=\; \inf_{\phi}\max(\Pr_{{-1}}(\phi(\D)\neq 0),\; \Pr_{1}(\phi(\D)\neq 1)),
$$
where $\Pr_{\theta}(\D)$ denotes the sampling distribution of $\D$ under model $\theta$. 
By \citep{bickel2009springer}[Thm. 2.2], the following lower bound holds: 
\begin{align}
\label{eq: bret huber}
    p_e \geq \frac{1}{4} \exp(-\textsc{kl}(\Pr_{-1}(\D) \mid\mid \Pr_{1}(\D))).
\end{align}
Each data sample $\D_i$ is generated according to a mixture of two Bernoullis $\D_i\sim \beta\mathcal{B}(1/2+\theta\epsilon) + (1- \beta)\mathcal{B}(1/2)$. 
Let $n_{\mathrm{w}}$ be the number of $1$-reward samples, i.e., successful events $\D_i = 1$ (respectively, $n_{\mathrm{l}}$ the number of $0$-reward samples). Then:
\begin{align*}
    \Pr(\D_i = 1) &= \beta\left(\frac{1}{2} +\theta \epsilon\right) + \left(1 - \beta\right)\frac{1}{2} = \frac{1}{2} + \beta\theta\epsilon,\\
    \Pr(\D_i = 0) &= \beta\left(1 - \left(\frac{1}{2} +\theta \epsilon\right)\right) + \left(1 - \beta\right)\left(1-\frac{1}{2}  \right) = \beta\left(\frac{1}{2} -\theta \epsilon\right) + \frac{1}{2} - \frac{\beta}{2}= \frac{1}{2} -{\beta\theta\epsilon},
\end{align*}
and the likelihood for the whole dataset is: 
\begin{align}
\label{eq: likelihood dataset}
    \Pr_{\theta}(\D) = \left(\frac{1}{2} + \beta\theta\epsilon\right)^{n_{\mathrm{w}}}\left(\frac{1}{2} - \beta\theta\epsilon\right)^{n_{\mathrm{l}}}.
\end{align}
Based on Eq.~\eqref{eq: likelihood dataset}, we can compute the divergence: 
\begin{align*}
    \textsc{kl}(\Pr_{1}(\D)\,\|\,\Pr_{-1}(\D)) &= n_{\mathrm{w}}\textsc{kl}\left(\mathcal{B}\left(\frac{1}{2} +{\beta\epsilon}\right), \mathcal{B}\left(\frac{1}{2} -\beta\epsilon\right)\right) +
    n_{\mathrm{l}}\textsc{kl}\left(\mathcal{B}\left(\frac{1}{2} -\beta\epsilon\right), \mathcal{B}\left(\frac{1}{2} +\beta\epsilon\right)\right),
\end{align*}
by the additive property of KL between independent samples. Remarking that 
\begin{align*}
    &\textsc{kl}\left(\mathcal{B}\left(\frac{1}{2} +\epsilon\beta\right), \mathcal{B}\left(\frac{1}{2} -\epsilon\beta\right) \right)\\
    &= \left(\frac{1}{2} +\epsilon\beta\right)\log\left(\frac{\frac{1}{2} +\epsilon\beta}{\frac{1}{2} -\epsilon\beta}\right) + \left(1- \frac{1}{2} -\epsilon\beta\right)\log\left(\frac{1 - \frac{1}{2} -\epsilon\beta}{1 - \frac{1}{2} +\epsilon\beta}\right)\\
    &= \left(\frac{1}{2} +\epsilon\beta\right)\log\left(\frac{\frac{1}{2} +\epsilon\beta}{\frac{1}{2} -\epsilon\beta}\right) + \left(\frac{1}{2} -\epsilon\beta\right)\log\left(\frac{\frac{1}{2} -\epsilon\beta}{ \frac{1}{2} +\epsilon\beta}\right)\\
    &= \textsc{kl}\left(\mathcal{B}\left(\frac{1}{2} -\epsilon\beta\right), \mathcal{B}\left(\frac{1}{2} +\epsilon\beta\right)\right),
\end{align*}
it results that
\begin{align*}
    \textsc{kl}(\Pr_{-1}(\D)\,\|\,\Pr_{1}(\D)) &= (n_{\mathrm{w}}+ n_{\mathrm{l}})\textsc{kl}\left(\mathcal{B}\left(\frac{1}{2} +\epsilon\beta\right), \mathcal{B}\left(\frac{1}{2} -\epsilon\beta\right)\right) \\
    &= |\D|\left(\frac{1}{2} +\epsilon\beta\right)\log\left(\frac{\frac{1}{2} +\epsilon\beta}{\frac{1}{2} -\epsilon\beta}\right) +|\D|\left(\frac{1}{2} -\epsilon\beta\right)\log\left(\frac{\frac{1}{2} -\epsilon\beta}{ \frac{1}{2} +\epsilon\beta}\right)\\
    &= |\D|\left(\frac{1}{2} +\epsilon\beta - \frac{1}{2} +\epsilon\beta\right)\log\left(\frac{\frac{1}{2} +\epsilon\beta}{\frac{1}{2} -\epsilon\beta}\right)\\
    &= {2|\D|\epsilon\beta}\log\left(\frac{1 +2\epsilon\beta}{1 -2\epsilon\beta}\right).
\end{align*}
For a small enough $\epsilon$, the series expansion of the logarithm term yields:
\begin{align*}
    \log\left(\frac{1+2\epsilon\beta}{1-2\epsilon\beta}\right) = 2\cdot 2\epsilon\beta + o((2\epsilon\beta)^2) = 4\epsilon\beta + o((\epsilon\beta)^2)
\end{align*}
so that
\begin{align*}
    \textsc{kl}(\Pr_{-1}(\D)\,\|\,\Pr_{1}(\D)) = 2|\D|\epsilon\beta(4\epsilon\beta + o((\epsilon\beta)^2)) = {8|\D|(\epsilon\beta)^2} + o((\epsilon\beta)^3). 
\end{align*}
We can eventually deduce that for a small enough $\epsilon$:
\begin{align}
\label{eq: kl bound}
\textsc{kl}(\Pr_{-1}(\D)\,\|\,\Pr_{1}(\D)) &\leq {c|\D|\epsilon^2\beta}. 
\end{align}

The binary testing lower bound \eqref{eq: bret huber} together with \autoref{eq: kl bound} establishes that the misidentification probability $p_e$ is at least $1/8$ as long as 
\begin{align}
\label{eq: exp bound}
\exp\left(-c|\D|\epsilon^2\beta\right)\geq 1/2,    
\end{align}
which is equivalent to $\epsilon \leq \sqrt{\frac{\ln(2)}{c|\D|\beta}}$.

To conclude, we have assumed that there is a policy estimate $\hat{\pi}$ such that $\Pr_{\theta}(\delta^{\hat\pi}(\theta) \leq \epsilon) \geq 7/8$, namely,
$\Pr_{-1}(\delta^{\hat\pi}(-1) > \epsilon) < 1/8$ and $\Pr_{1}(\delta^{\hat\pi}(1) > \epsilon) < 1/8.$ 
Then, in view of our previous arguments, the estimator $\hat\theta$ as defined in \autoref{eq: estimate} must satisfy $\Pr_{-1}(\hat\theta \neq \theta) < 1/8$ and $\Pr_{-1}(\hat\theta \neq \theta) < 1/8$, which contradicts the misidentification lower-bound $p_e\geq 1/8$ when $\epsilon \leq \sqrt{\frac{\ln(2)}{c|\D|\beta}}$.
Remarking that in our example, $|\Mf|=2$ and $\delta_0=1/8$, we set $c=\ln(2)/\ln(16)$ to conclude that any policy estimate $\hat{\pi}$ inevitably satisfies $\Pr_{\theta}(J(\pi^*(\theta), \theta) - J(\hat{\pi}, \theta) \geq \epsilon)  \geq \frac{7}{8}$\footnote{For that specific value $c=\ln(2)/\ln(16)$, any $\epsilon < \min\{\frac{1}{\beta}\sqrt{1/4-8\beta}), 1/(2\beta)\}$ would make the inequality \eqref{eq: exp bound} valid, as long as $\beta<1/32$. Since $\mathcal C(\pi^*(\Mf)) \geq 1$ and $r_{\max}=1$, condition $\beta<1/32$ is automatically fulfilled. Additionally, for $\beta<1/32$, it holds that $\sqrt{\frac{\ln(2)}{c|\D|\beta}} \leq \frac{1}{\beta}\sqrt{1/4-8\beta}$, so $\epsilon\leq \sqrt{\frac{\ln(2)}{c|\D|\beta}}$ is a more restrictive bound on $\epsilon$.}. In particular, the statement holds for the optimal robust policy $\pi^*(\Mf)$. 
\end{proof}

\subsection{Auxiliary Lemmas for \autoref{thm:final_bound}}

We first recall a standard PAC bound for maximum likelihood estimation (MLE), adapted from \citet[Theorem 21]{agarwal2020flambe}.

\begin{lemma}[\textbf{MLE PAC-bound}]
\label{lemma:mle_pac}
    Let $\beta\in\Delta(\S\times\A)$ be the offline distribution induced by $\D$, and $$\hat m = \argmax_{m \in \Mf} \E{(s,a,r,s')\sim \D}{\log m(r, s' \mid s,a)} $$ be the MLE model within a finite uncertainty set $\Mf$. Suppose $m^* \in \Mf$. Then, for any $\delta \in (0,1)$, with probability at least $1-\delta$:
    \begin{align*}
     \E{(s,a)\sim \beta}{\textsc{tv}(\hat m(s,a), m^*(s,a))^2}\leq \frac{2\log(|\Mf|/\delta)}{|\D|}.
    \end{align*}
\end{lemma}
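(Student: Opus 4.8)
The plan is to follow the classical maximum-likelihood deviation argument (e.g., \citet[Theorem 21]{agarwal2020flambe}), whose engine is an exponential-moment (Chernoff) bound on the empirical log-likelihood ratio controlled by the squared Hellinger distance, with the conversion to total variation deferred to the very end. First I would record the pointwise comparison between the two divergences: writing the (unnormalized) squared Hellinger distance as $H^2(p,q)\defeq\int(\sqrt{p}-\sqrt{q})^2$, a Cauchy--Schwarz estimate on $\tfrac12\int|\sqrt p-\sqrt q|\,|\sqrt p+\sqrt q|$ together with $\int(\sqrt p+\sqrt q)^2\le 4$ yields $\textsc{tv}(p,q)^2\le H^2(p,q)$ for every pair of distributions. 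Hence it suffices to prove the claimed bound for $\E{(s,a)\sim\beta}{H^2(\hat m(s,a),m^*(s,a))}$ and then pass to $\textsc{tv}$ pointwise inside the expectation.

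Next I would set up the exponential moment. For a fixed $m\in\Mf$ and a single draw $(s,a)\sim\beta$, $(r,s')\sim m^*(s,a)$, define $Z(m)\defeq\tfrac12\log\frac{m(r,s'\mid s,a)}{m^*(r,s'\mid s,a)}$. The key identity is that $\E{}{e^{Z(m)}}$ equals the $\beta$-averaged Bhattacharyya affinity $\bar\rho(m)\defeq\E{(s,a)\sim\beta}{\rho(m(s,a),m^*(s,a))}$, where $\rho(p,q)\defeq\int\sqrt{p\,q}$, because the inner conditional expectation of $\sqrt{m/m^*}$ under $m^*$ is exactly $\rho(m(s,a),m^*(s,a))$. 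Since $\bar\rho(m)=1-\tfrac12\E{(s,a)\sim\beta}{H^2(m(s,a),m^*(s,a))}\le\exp(-\tfrac12\E{(s,a)\sim\beta}{H^2(m(s,a),m^*(s,a))})$, the per-sample exponential moment is bounded by an exponential in the average Hellinger distance.

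The third step is Chernoff plus a union bound. Because the $|\D|$ samples are i.i.d., the dataset log-likelihood ratio $\sum_i Z_i(m)=\tfrac12\sum_i\log\frac{m}{m^*}$ has exponential moment $\bar\rho(m)^{|\D|}$, so Markov's inequality gives, with probability at least $1-\delta'$, the bound $\tfrac12\sum_i\log\frac{m}{m^*}\le|\D|\log\bar\rho(m)+\log(1/\delta')$. Taking $\delta'=\delta/|\Mf|$ and union-bounding over the finite class $\Mf$ makes this hold simultaneously for all $m\in\Mf$ with probability at least $1-\delta$. Finally I would instantiate at the data-dependent estimator $\hat m$: by MLE optimality and $m^*\in\Mf$, the empirical log-likelihood ratio of $\hat m$ against $m^*$ is nonnegative, so $0\le|\D|\log\bar\rho(\hat m)+\log(|\Mf|/\delta)$. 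Rearranging and using $-\log\bar\rho(\hat m)=-\log(1-\tfrac12\E{(s,a)\sim\beta}{H^2(\hat m(s,a),m^*(s,a))})\ge\tfrac12\E{(s,a)\sim\beta}{H^2(\hat m(s,a),m^*(s,a))}$ gives $\E{(s,a)\sim\beta}{H^2(\hat m(s,a),m^*(s,a))}\le\tfrac{2\log(|\Mf|/\delta)}{|\D|}$, and the pointwise comparison $\textsc{tv}^2\le H^2$ from the first step completes the argument.

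The main obstacle is the legitimacy of evaluating a \emph{high-probability, per-model} deviation inequality at the \emph{random} MLE $\hat m$: the deviation bound is proved for each fixed $m$, whereas $\hat m$ depends on $\D$. The union bound over the finite set $\Mf$ is exactly what resolves this, and it is why finiteness of $\Mf$ enters the statement through the $\log|\Mf|$ term. A secondary point to track carefully is the bookkeeping of constants across divergence conventions: the factor $2$ in the statement arises from pairing the unnormalized Hellinger distance (so that $\bar\rho=1-\tfrac12\overline{H^2}$ and $-\log(1-x)\ge x$ supplies the $\tfrac12$) with the clean pointwise bound $\textsc{tv}^2\le H^2$; adopting the normalized Hellinger convention would relocate the factor but leave the argument unchanged.
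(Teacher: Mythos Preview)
Your proposal is correct and follows exactly the route the paper indicates: the paper does not give its own proof of this lemma but simply cites \citet[Theorem~21]{agarwal2020flambe}, and your argument (Chernoff on the half log-likelihood ratio via the Bhattacharyya affinity, union bound over the finite class $\Mf$, MLE optimality using $m^*\in\Mf$, then the pointwise comparison $\textsc{tv}^2\le H^2$) is precisely that FLAMBE derivation with the constants tracked to match the stated bound. The only implicit assumption you rely on that the lemma statement leaves tacit is that the $|\D|$ samples are i.i.d.\ from $\beta\times m^*$, which is the same idealization the cited result uses.
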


We next establish a \textit{general} simulation lemma below. Unlike the classic simulation lemma~\citep[Lemma 9]{uehara2021pessimistic}, which applies only to stationary policies and considers transition errors alone, our result (i) extends to history-dependent policies via general Bellman recursions, and (ii) incorporates discrepancies in both transition and reward functions.

\begin{lemma}[\textbf{General simulation lemma}]
\label{lemma:simulation_lemma}
    For any history-dependent policy $\pi: \mathcal{H}_t \to \Delta(\A)$ and any two MDP models $m = (P, R), \hat{m} = (\hat{P}, \hat{R})$, it holds that:
    \begin{align*}
        \vert J(\pi, m) - J(\pi, \hat{m})\vert  &\leq \frac{2r_{\max}}{(1-\gamma)^2}\E{(s,a)\sim d_{\hat{m}}^{\pi}}{\textsc{tv}(m(s,a), \hat{m}(s,a))}. 
    \end{align*}
\end{lemma}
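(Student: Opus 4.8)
The plan is to prove the General Simulation Lemma by the classical value-difference (telescoping) argument, adapted in two ways: to history-dependent policies, and to the \emph{joint} reward--transition kernel $m(s,a)\in\Delta([-r_{\max},r_{\max}]\times\S)$ rather than transitions alone. First I would fix notation: for a model $m$, let $V^\pi_m(h_t)$ be the value of $\pi$ under $m$ started from a history $h_t$ whose final coordinate is the current state $s_t$, and let $W^\pi_m(h_t,a_t):=\E{(r,s')\sim m(s_t,a_t)}{\,r+\gamma\,V^\pi_m(h_t\oplus a_t\oplus r\oplus s')\,}$ be the associated action value, so that $V^\pi_m(h_t)=\E{a_t\sim\pi(h_t)}{W^\pi_m(h_t,a_t)}$ and $\|V^\pi_m\|_\infty,\|W^\pi_m\|_\infty\le r_{\max}/(1-\gamma)$. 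Writing the successor history as $h_{t+1}=h_t\oplus a_t\oplus r_{t+1}\oplus s_{t+1}$ makes explicit that $h_{t+1}$ carries $r_{t+1}$, which is exactly why the joint kernel, and not merely $P$, must appear in the bound.

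Second, I would derive the one-step decomposition. Fixing $h_t,a_t$, I add and subtract $\E{(r,s')\sim\hat m(s_t,a_t)}{r+\gamma V^\pi_m(h_{t+1})}$ inside $W^\pi_m(h_t,a_t)-W^\pi_{\hat m}(h_t,a_t)$, obtaining a model-mismatch term $\mathrm{(I)}(h_t,a_t):=\E{(r,s')\sim m(s_t,a_t)}{g}-\E{(r,s')\sim\hat m(s_t,a_t)}{g}$ with $g(r,s')=r+\gamma V^\pi_m(h_t\oplus a_t\oplus r\oplus s')$, plus a recursion term $\mathrm{(II)}=\gamma\,\E{(r,s')\sim\hat m(s_t,a_t)}{V^\pi_m(h_{t+1})-V^\pi_{\hat m}(h_{t+1})}$. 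Since $\lvert g(r,s')\rvert\le r_{\max}+\gamma\,r_{\max}/(1-\gamma)=r_{\max}/(1-\gamma)$, the elementary inequality $\lvert\E{P}{g}-\E{Q}{g}\rvert\le 2\|g\|_\infty\,\textsc{tv}(P,Q)$ bounds $\lvert\mathrm{(I)}(h_t,a_t)\rvert\le \tfrac{2r_{\max}}{1-\gamma}\,\textsc{tv}(m(s_t,a_t),\hat m(s_t,a_t))$, a quantity depending on $(h_t,a_t)$ only through $(s_t,a_t)$.

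Third, I would unroll. Setting $\Delta_t(h_t):=V^\pi_m(h_t)-V^\pi_{\hat m}(h_t)$, the decomposition reads $\Delta_t(h_t)=\E{a_t\sim\pi(h_t)}{\mathrm{(I)}(h_t,a_t)}+\gamma\,\E{a_t\sim\pi(h_t),\,(r,s')\sim\hat m(s_t,a_t)}{\Delta_{t+1}(h_{t+1})}$; iterating from $t=0$ and averaging over $h_0\sim\rho$ yields the exact identity $J(\pi,m)-J(\pi,\hat m)=\sum_{t\ge0}\gamma^t\,\E{(h_t,a_t)\sim(\hat m,\pi)}{\mathrm{(I)}(h_t,a_t)}$, where the trajectory is generated under $\hat m$. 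Taking absolute values, applying the triangle inequality and the bound on $\mathrm{(I)}$, and collapsing $\sum_{t\ge0}\gamma^t d^\pi_{\hat m,t}=\tfrac{1}{1-\gamma}\,d^\pi_{\hat m}$ (the normalized discounted occupancy under $\hat m$) gives $\lvert J(\pi,m)-J(\pi,\hat m)\rvert\le \tfrac{2r_{\max}}{(1-\gamma)^2}\,\E{(s,a)\sim d^\pi_{\hat m}}{\textsc{tv}(m(s,a),\hat m(s,a))}$, the claim.

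I do not expect a genuine obstacle here, only careful bookkeeping. The delicate points are: making the recursion for $\Delta_t$ rigorous when $V$ and $\pi$ depend on the full past, so one must work with value functions on $\mathcal{H}_t$ rather than on $\S$; keeping the change of measure in the telescoping consistently under $\hat m$, so that exactly $d^\pi_{\hat m}$ (not $d^\pi_m$) appears; and tracking the two factors of $1/(1-\gamma)$ — one from $\|g\|_\infty$, one from the geometric sum — so that they combine into $1/(1-\gamma)^2$. If desired, terminal states can be folded into a zero-reward absorbing state, so the infinite-horizon sums are well defined and the argument covers terminated and truncated trajectories uniformly.
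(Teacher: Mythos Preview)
Your proposal is correct and follows essentially the same approach as the paper's proof: both define history-dependent value functions, perform the add-and-subtract decomposition into a model-mismatch term (your $\mathrm{(I)}$, the paper's $\Delta^\pi_1$) and a recursion term (your $\mathrm{(II)}$, the paper's $\Delta^\pi_2$), bound the mismatch via $\lvert\E{P}{g}-\E{Q}{g}\rvert\le 2\|g\|_\infty\,\textsc{tv}(P,Q)$ with $\|g\|_\infty\le r_{\max}/(1-\gamma)$, unroll under $\hat m$, and collapse to the discounted occupancy $d^\pi_{\hat m}$. The only cosmetic difference is that you introduce the action-value $W^\pi_m$ explicitly whereas the paper decomposes directly at the level of $\Delta^\pi(h_t)$ after expanding the expectation over $a_t\sim\pi(h_t)$.
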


\begin{proof}
    Given an MDP $m$ and a policy $\pi$, define the value function starting from a history $h_t\in\Hc_t$:
    \begin{align*}
        V^{\pi}_m(h_t) \defeq \E{\pi,m}{\sum_{k=t}^{\infty}\gamma^{k-t} r_{k+1}|h_t}.
    \end{align*}
    It satisfies the history-based Bellman recursion, where $h_{t+1} = (h_t, a_t,r_{t+1},s_{t+1})$, 
    \begin{align}
        V^{\pi}_m(h_t) &= \E{a_t\sim\pi(h_t)}{ \E{r_{t+1}\sim R(s_t,a_t), s_{t+1}\sim P(s_t,a_t)}{r_{t+1} + \gamma V^{\pi}_m(h_{t+1})}}\nonumber\\
        &= \E{a_t\sim\pi(h_t)}{\E{(r_{t+1}, s_{t+1})\sim m(s_t,a_t)}{r_{t+1} + \gamma V^{\pi}_m(h_{t+1})} }.\label{eq:general_bellman}
    \end{align}
    Define the value difference between models $m$ and $\hat m$ given a history $h_t\in\Hc_t$:
    \begin{align*}
        \Delta^{\pi}(h_t) := V^{\pi}_m(h_t) - V^{\pi}_{\hat m}(h_t). 
    \end{align*}
    Thus the return gap is
    \begin{align*}
        \vert J(\pi, m) - J(\pi, \hat{m})\vert = \vert\E{s_0\sim\rho}{\Delta^{\pi}(s_0)}\vert.
    \end{align*}
    Applying recursion \eqref{eq:general_bellman} to both models $m$ and $\hat m$, we get:
    \begin{align*}
        \Delta^{\pi}(h_t) = \E{a_t\sim\pi(h_t)}{\E{m(s_t,a_t)}{r_{t+1} + \gamma V^{\pi}_m(h_{t+1})} - \E{\hat{m}(s_t,a_t)}{r_{t+1} + \gamma V^{\pi}_{\hat{m}}(h_{t+1})}}.
    \end{align*}
    Decompose this into two parts:
    \begin{align}
        \Delta^{\pi}_1(h_t,a_t)&\defeq \E{m(s_t,a_t)}{r_{t+1} + \gamma V^{\pi}_m(h_{t+1})} - \E{\hat m(s_t,a_t)}{r_{t+1} + \gamma V^{\pi}_m(h_{t+1})},\nonumber\\
        \Delta_2^{\pi}(h_t,a_t) &\defeq \E{\hat m(s_t,a_t)}{r_{t+1} + \gamma V^{\pi}_m(h_{t+1})} - \E{\hat{m}(s_t,a_t)}{r_{t+1} + \gamma V^{\pi}_{\hat{m}}(h_{t+1})},\nonumber\\
        \Delta^{\pi}(h_t) &= \E{a_t\sim\pi(h_t)}{\Delta^{\pi}_1(h_t,a_t) + \Delta^{\pi}_2(h_t,a_t)}.   \label{eq:delta}
    \end{align}
    Applying the fundamental property of TV distance~\citep{levin2017markov}, the first term: 
    \begin{align}
         \Delta^{\pi}_1(h_t, a_t) &\leq 2 \textsc{tv}(m(s_t,a_t), \hat{m}(s_t,a_t))\cdot\lVert r_{t+1} + \gamma V^{\pi}_m(h_{t+1})\rVert_{\infty}\nonumber\\
         &\leq 2 \textsc{tv}(m(s_t,a_t), \hat{m}(s_t,a_t))\frac{r_{\max}}{1-\gamma},\label{eq:delta1}
    \end{align}
    where the second inequality stems from the fact that:
    \begin{align*}
        \lVert r_{t+1} + \gamma V^{\pi}_m(h_{t+1})\rVert_{\infty}\leq r_{\max} + \gamma \frac{r_{\max}}{1-\gamma} = \frac{r_{\max}}{1-\gamma}. 
    \end{align*}
    The second term can be recognized as:
    \begin{align}
    \label{eq:delta2}
        \Delta_2(h_t, a_t) &= \gamma \E{(r_{t+1}, s_{t+1})\sim \hat m(s_t,a_t)}{\Delta^{\pi}(h_{t+1})}.
    \end{align}
    Combining \autoref{eq:delta} with \Cref{eq:delta1,eq:delta2} yields: 
    \begin{align*}
        \Delta^{\pi}(h_t) 
        &\leq \E{a_t\sim\pi(h_t)}{\frac{2r_{\max}}{1-\gamma} \textsc{tv}(m(s_t,a_t), \hat m(s_t,a_t))}
        + \gamma \E{a_t\sim\pi(h_t)}{\E{\hat m(s_t,a_t)}{\Delta^{\pi}(h_{t+1})}}.
    \end{align*}
    For convenience, denote the one-step error difference at time $t$ by:
    \begin{align*}
        \epsilon(s_t,a_t):= \frac{2r_{\max}}{1-\gamma}  \textsc{tv}(m(s_t,a_t), \hat m(s_t,a_t)), 
    \end{align*}
    so that 
    \begin{align*}
        \Delta^{\pi}(h_t)\leq \E{a_t\sim\pi(h_t)}{\epsilon(s_t,a_t) }
        + \gamma \E{a_t\sim\pi(h_t)}{\E{(r_{t+1}, s_{t+1})\sim \hat m(s_t,a_t)}{\Delta^{\pi}(h_{t+1})}}.
    \end{align*}
Iterating from $t=0$ and taking $\mathbb E_{s_0\sim\rho}$,
\begin{align*}
    &\E{s_0\sim\rho}{\Delta^{\pi}(s_0)} \\
    &\leq \E{s_0\sim\rho, a_0\sim\pi(s_0)}{\epsilon(s_0,a_0)}  +\gamma \E{s_0\sim\rho, a_0\sim\pi(s_0), (r_1,s_1) \sim \hat{m}(s_0,a_0)}{\Delta^{\pi}(h_1)}\\
    &= \E{h_0\sim P^{\pi}_{\hat{m},0}, a_0\sim\pi(h_0)}{\epsilon(s_0,a_0)}  + \gamma\E{h_1\sim P^{\pi}_{\hat{m},1}}{\Delta^{\pi}(h_1)}\\
    &\leq \E{h_0\sim P^{\pi}_{\hat{m},0}, a_0\sim\pi(h_0)}{\epsilon(s_0,a_0) } + \gamma \left( \E{h_1\sim P^{\pi}_{\hat{m},1}, a_1\sim\pi(h_1)}{\epsilon(s_1,a_1) }
        + \gamma \E{h_2\sim P^{\pi}_{\hat{m},2}}{\Delta^{\pi}(h_{2})}\right)\\
    &= \E{h_0\sim P^{\pi}_{\hat{m},0}, a_0\sim\pi(h_0)}{\epsilon(s_0,a_0) } + \gamma  \E{h_1\sim P^{\pi}_{\hat{m},1}, a_1\sim\pi(h_1)}{\epsilon(s_1,a_1) }
        + \gamma^2 \E{h_2\sim P^{\pi}_{\hat{m},2}}{\Delta^{\pi}(h_{2})}\\
    &\vdots \\
    &\leq \sum_{t=0}^{\infty} \gamma^t\E{h_t\sim P^{\pi}_{\hat{m},t}, a_t\sim\pi(h_t)}{\epsilon(s_t,a_t)} \\
    &= \frac{1}{1-\gamma}\,\E{(s,a)\sim d^{\pi}_{\hat m}}{\epsilon(s,a)},
\end{align*}
where in the last line, we use the discounted occupancy
 $d_{\hat m}^{\pi}$ under $\hat m$:
$$
d^{\pi}_{\hat m}(s,a)=(1-\gamma)\sum_{t=0}^\infty \gamma^t \Pr(s_t=s,a_t=a\mid \pi,\hat m).
$$
Finally, by symmetry (the same argument applied to $-\E{s_0\sim\rho}{\Delta^{\pi}(s_0)}$),
\begin{align*}
    |\E{s_0\sim\rho}{\Delta^{\pi}(s_0)}| &\leq \left\vert\frac{1}{1-\gamma}\E{(s,a)\sim d^{\pi}_{\hat m}}{\epsilon(s,a)}\right\vert\\
    &= \frac{1}{1-\gamma}\E{(s,a)\sim d^{\pi}_{\hat m}}{\frac{2r_{\max}}{1-\gamma}  \textsc{tv}(m(s,a), \hat m(s,a))}\\
    &= \frac{2r_{\max}}{(1-\gamma)^2}\E{(s,a)\sim d^{\pi}_{\hat m}}{\textsc{tv}(m(s,a), \hat{m}(s,a))}.
\end{align*}
\end{proof}

\subsection{Related Work on Offline RL Theory}

Theoretical works on offline RL aim to establish performance guarantees without online exploration. The key difficulty is limited data coverage: when the dataset sufficiently covers all state-action pairs, standard PAC-style guarantees can be obtained without requiring conservatism~\citep{munos2008finite}. Under partial coverage, however, conservative algorithms that penalize policies deviating from well-supported regions are essential to ensure robust learning. Many prior results establish minimax optimality with information-theoretic bounds under \textit{worst-case} assumptions over possible MDPs~\citep{jin2021pessimism, rashidinejad2021bridging, li2024settling}. The lower-bound part in \autoref{thm:robust_subopt} draws on the same information-theoretic principle as \citet{li2024settling}, but we provide a simpler counterexample tailored to our sequential bandit problem in \autoref{sec:bandit}.

Complementary to this line of work, our work adopts a Bayesian viewpoint that focuses on the \textit{average case} over possible MDPs, an aspect underexplored in the theoretical literature. An exception is the Bayesian offline RL setting studied in \citet[Section 8]{uehara2021pessimistic} under a Markov policy, optimized via mirror descent with posterior sampling. Their analysis derives a Bayesian sub-optimality gap and interprets the resulting soft robustness as ``implicit pessimism''. However, because their analysis is restricted to Markov policies, whereas the Bayes-optimal policy is generally history-dependent, their regret bound is looser than ours in \autoref{thm:bayes_subopt}.

Recent work~\citep{fellows2025sorel} addresses the same Bayesian offline RL problem with history-dependent policies. Their main result (Theorem 1) provides a Bayesian sub-optimality bound expressed by an expected KL divergence, called the posterior information loss, and takes the worst case over all policies. Their derivation starts from a TV distance, which they upper bound by a KL divergence to leverage the product rule of logarithms. In that respect, our \autoref{thm:bayes_subopt} gives a tighter upper bound as it is based on TV distance rather than KL, which requires extending the simulation lemma to history-dependent policies (see \autoref{lemma:simulation_lemma}). Additionally, the regret bound provided in \citet[Theorem 1]{fellows2025sorel} involves a supremum over all policies. Although we believe their bound can be established directly on the Bayes-optimal policy rather than the worst case, building on their current result would require a full data-coverage assumption. In contrast, our lower bound in \autoref{thm:bayes_subopt} only requires partial data coverage. Finally, \citet{fellows2025sorel} focuses on bounding the Bayes regret. At the same time, we additionally provide a lower bound on the robust sub-optimality gap in \autoref{thm:robust_subopt}, showing when the Bayesian approach can provably outperform conservative ones.

\section{\algo Algorithm Details}
\label{app:algorithm}

\textbf{RL loss function.} In recurrent off-policy RL, optimization is typically performed on full trajectories (rather than i.i.d. transition tuples) for compute efficiency. In our setting, each training trajectory is a \textit{concatenation} of a real prefix and an imagined rollout: starting from an initial history $h_t \in \D$, a world model and the policy generate future steps until truncation at $t' \le T$.
Formally, let $$\tau = (s_{0:t} \oplus s_{t+1:t'} ,a_{0:t-1} \oplus a_{t:t'-1}, r_{1:t} \oplus \hat r_{t+1:t'}, d_{1:t} \oplus \hat d_{t+1:t'}),$$
where $\oplus$ denotes \textit{concatenation}. 
Our RL loss on $\tau$ balances contributions from real and imagined segments:
{\small
\begin{equation}
\label{eq:rl_loss}
L(Q_\omega, \pi_\nu; \tau, \kappa) \defeq \frac{\kappa}{t}\sum_{j=0}^{t-1} l(Q_\omega, \pi_\nu; h_j, a_j, r_{j+1}, d_{j+1}, s_{j+1}) + \frac{1-\kappa}{t'-t}\sum_{j=t}^{t'-1} l(Q_\omega, \pi_\nu; \hat h_j, \hat a_j, \hat r_{j+1}, \hat d_{j+1}, \hat s_{j+1}).
\end{equation}}Here, $\hat h_j$ denotes the imagined history (with $\hat h_t = h_t$) and $\kappa \in (0,1)$ is the real data ratio. The per-step loss $l(Q_\omega, \pi_\nu; h_j, a_j, r_{j+1}, d_{j+1}, s_{j+1})$ is standard off-policy loss without explicit conservatism, such as DQN~\citep{mnih2013playing} for discrete control and SAC~\citep{haarnoja2018soft} for continuous control.

\textbf{Rollout stopping criteria.} In our rollout subroutine (\autoref{algo:rollout}), a rollout finishes when any of three conditions hold:
\begin{equation}
\mathrm{done}_{t+1} \defeq  (\unc(\hat s_t, \hat a_t) > \uncthres) \lor (t+1 \ge T) \lor \term(\hat s_t,\hat a_t, \hat s_{t+1}).
\end{equation}
\begin{enumerate}[noitemsep, topsep=0pt, leftmargin=*]
    \item Uncertainty truncation: as described in \autoref{sec:how_longhorizon}, instead of enforcing a fixed horizon $H$, we truncate rollouts adaptively using an uncertainty threshold calibrated on the real dataset. This allows rollouts to extend as long as the model remains confident.
    \item Timout truncation: to remain consistent with test-time evaluation, we impose a hard cap at the environment’s maximum episode length $T$, regardless of rollout length. 
    \item Ground-truth termination: we retain the environment's rule-based terminal function to provide true terminal signals $\hat d_{t+1}$, following prior model-based RL methods~\citep{yu2020mopo}. Including this prior knowledge makes our algorithm directly comparable to model-based baselines, which are our main focus.  
\end{enumerate}
Importantly, only the terminal signal disables bootstrapping in RL, while both uncertainty- and timeout-based truncations preserve bootstrapping\footnote{\url{https://gymnasium.farama.org/tutorials/gymnasium_basics/handling_time_limits/}.}, which aligns with the Bayesian objective.

\section{Implementation Details}
\label{app:implementation}

\subsection{Reproducibility Statement}

We release our full codebase in the supplementary material, built on JAX~\citep{jax2018github} and Equinox~\citep{kidger2021equinox}. For further reproducibility, we also release pretrained world ensemble checkpoints to make agent training on top of the released checkpoints straightforward. 

\subsection{Dataset Details}
\label{app:dataset}

\textbf{The bandit dataset.}
As introduced in \autoref{sec:bandit}, we construct a skewed two-armed bandit dataset. We collect $10$ trajectories of length $T=100$, yielding $|\D|=1000$ action–reward pairs. We use one-hot encoding on actions as inputs for reward models and agents. The dataset is split into training and validation sets with a $4{:}1$ ratio. Since $\D$ only covers arm $0$ with $p^*_0=0.5$, the true parameter of arm $1$, $p^*_1$, is completely unseen. 

At test time, we vary $p^*_1 \in \{0.01, 0.3, 0.55, 0.7, 0.99\}$, where each choice defines a distinct bandit problem. Each problem is evaluated over $20$ independent episodes, and all problems are assessed in parallel, yielding $5 \times 20 = 100$ evaluation runs. 
The normalized return during test time is computed by: $\frac{1}{T}\sum_{t=0}^{T-1} r_{t+1}$, where $r_{t+1} \sim \mathcal B(p^*_{a_t})$.

\textbf{D4RL locomotion benchmark.} 
We evaluate on the standard D4RL locomotion benchmark~\citep{fu2020d4rl}, comprising 12 datasets formed by the Cartesian product of tasks (halfcheetah, hopper, walker2d) and dataset types (random-v2, medium-v2, medium-replay-v2, medium-expert-v2). This benchmark is the most widely used in offline RL research.
The underlying environments are OpenAI Gym tasks: HalfCheetah-v2 ($\mathcal{S} \subset \mathbb{R}^{17}, \mathcal{A} \subset \mathbb{R}^{6}$), Hopper-v2 ($\mathcal{S} \subset \mathbb{R}^{11}, \mathcal{A} \subset \mathbb{R}^{3}$), and Walker2d-v2 ($\mathcal{S} \subset \mathbb{R}^{17}, \mathcal{A} \subset \mathbb{R}^{6}$). The maximum episode step $T$ is $1000$. Hopper-v2 and Walker2d-v2 have termination functions, while HalfCheetah-v2 does not. 

Dataset sizes vary: 100k-200k transitions for medium-replay, 1M for random and medium, and 2M for medium-expert. These datasets also differ qualitatively. The random dataset is collected with a uniformly random policy; medium-replay corresponds to the replay buffer of an agent trained to a medium-level policy; medium itself is generated directly from a medium-level policy; and medium-expert is a mixture of trajectories from both a medium policy and an expert policy. Based on these properties, we categorize random as \textit{low-quality}, medium-replay and medium-expert as \textit{moderate coverage}, and medium as \textit{narrow coverage}.

Performance is reported in terms of normalized scores, following the D4RL and NeoRL conventions:
$$
\text{normalized score} = \frac{\text{score} - \text{random score}}{\text{expert score} - \text{random score}} \times 100.
$$
It is worth noting that the expert policies in these locomotion tasks are not strictly optimal. As a result, it is possible for algorithms to achieve normalized scores greater than $100$ (e.g., around $120$). Therefore, we consistently categorize all medium-\verb|*| datasets as medium-quality.

\textbf{NeoRL locomotion benchmark.} The locomotion benchmark in the NeoRL~\citep{qin2022neorl} has been widely used in recent model-based offline RL methods. The setup closely mirrors the D4RL locomotion benchmark, with nearly identical environments: HalfCheetah-v3 ($\mathcal{S} \subset \mathbb{R}^{18}, \mathcal{A} \subset \mathbb{R}^{6}$), Hopper-v3 ($\mathcal{S} \subset \mathbb{R}^{12}, \mathcal{A} \subset \mathbb{R}^{3}$), and Walker2d-v3 ($\mathcal{S} \subset \mathbb{R}^{18}, \mathcal{A} \subset \mathbb{R}^{6}$). Compared to D4RL, NeoRL increases the state dimensionality by one in each environment.

For each environment, NeoRL provides three datasets (Low, Medium, High), collected using policies of the corresponding performance levels. This leads to 9 datasets in total.
Compared to their D4RL counterparts, these policies are more deterministic, leading to smaller data coverage, which we categorize as \textit{narrow coverage} in this paper. 
Dataset sizes range from roughly 200k to 1M transitions. To avoid ambiguity, throughout the paper we denote NeoRL datasets with capitalized environment names and the v3 suffix (e.g., \textit{HalfCheetah-v3-Medium}), while D4RL datasets are written in lowercase with the v2 suffix (e.g., \textit{halfcheetah-medium-v2}).

\textbf{D4RL Adroit benchmark.} 
The Adroit benchmark is widely regarded as substantially more challenging than locomotion tasks. It involves controlling a 28-DoF robotic arm to perform high-dimensional manipulation tasks: Pen ($\mathcal{S} \subset \mathbb{R}^{45}, \mathcal{A} \subset \mathbb{R}^{24}$, $T=100$), Door ($\mathcal{S} \subset \mathbb{R}^{39}, \mathcal{A} \subset \mathbb{R}^{28}$, $T=200$), Hammer ($\mathcal{S} \subset \mathbb{R}^{46}, \mathcal{A} \subset \mathbb{R}^{26}$, $T=200$). Among these, Pen includes a  termination function, while Door and Hammer do not. Rewards combine a dense component based on distances with a sparse component that grants a large bonus once a distance threshold is satisfied.

In addition to the high dimensionality and sparsity of rewards, most Adroit datasets are either small or of limited quality. The human demonstration datasets (human-v1) contain only 5k–10k transitions, whereas the cloned datasets (cloned-v1), constructed by mixing behavior cloning with human demonstrations, contain 500k–1M transitions. This leads to 6 datasets in total. Accordingly, we categorize human datasets as \textit{narrow coverage} and cloned datasets as \textit{moderate coverage}.

Dataset performance levels vary significantly, as shown in the $\pi_\D$ column of \autoref{tab:d4rl_adroit}. Pen-human-v1 and pen-cloned-v1 achieve normalized scores in the range of $70$–$90$, whereas door-human-v1, door-cloned-v1, hammer-human-v1, and hammer-cloned-v1 yield scores below $10$. Accordingly, we categorize pen-\verb|*| datasets as \textit{medium-quality}, and the remaining ones as \textit{low-quality}.
This disparity explains why most algorithms fail to achieve meaningful results on the door and hammer datasets.
Finally, we exclude the relocate tasks from our experiments, as both prior baselines and our method consistently obtain near-zero scores in this setting.

\textbf{D4RL AntMaze benchmark.} 
AntMaze is a particularly challenging benchmark in D4RL due to its sparse reward structure. It involves controlling a MuJoCo Ant ($\S\subset \R^{29}, \A\subset \R^8$) to navigate in different maze layouts (umaze, medium, large). The agent receives reward $1$ only when reaching a goal position in the X–Y plane within a threshold, after which the episode terminates, so the maximum return is $1$. Goals are randomly sampled from a small region but remain unobserved to the agent, making evaluation noisy for RL algorithms. The maximum episode length is $T=700$ for umaze and $T=1000$ for medium and large mazes.

For each layout, two datasets (play and diverse) are provided, each containing $1$M transitions. The \textit{diverse} variants introduce a wider set of start positions compared to \textit{play}.
However, all AntMaze datasets are generated by a hierarchical controller: a high-level breadth-first search (BFS) planner selects waypoints, which are then executed by a low-level learned policy. As a result, trajectories are highly structured and largely restricted to narrow corridors that connect goals without collisions. This planner-driven generation induces narrow coverage, a property noted in prior work~\citep{zhang2024q}.

Performance in AntMaze is generally low relative to other D4RL domains. In particular, umaze-diverse has an average score of only $1.0$, which we categorize as \textit{low-quality}. The remaining AntMaze datasets achieve scores above $10.0$, which we categorize as \textit{medium-quality} (though they could also be viewed as low-quality when considered outside the AntMaze domain).

Following prior model-based RL work (LEQ~\citep{park2025model}\footnote{\url{https://github.com/kwanyoungpark/LEQ}.}, ADMPO~\citep{lin2025any}\footnote{\url{https://github.com/HxLyn3/ADMPO}.}), we adopt the same terminal functions in AntMaze. As in LEQ, we also shift the reward by $-1.0$. Since termination reveals information about the reward function (termination implies success), we do not compare against other model-based or model-free algorithms. Instead, our baselines in AntMaze are the methods reported in LEQ and ADMPO.

\subsection{Details on World Model Ensemble}
\label{app:ensemble}

We provide implementation details on world ensemble, as introduced in \autoref{sec:prelim} and \autoref{sec:ensemble_arch}. 

\textbf{Ensemble architecture.} Following common practice in offline RL~\citep{yu2020mopo}, our world ensemble is a set of neural networks $\ensemble = \{m_{\theta^{n}}\}_{n=1}^N$ where each model outputs a Gaussian distribution over next state $s'$ and reward $r$: $m_{\theta^{n}}(s,a) = \mathcal N(\mu_{\theta^{n}}(s,a), \sigma_{\theta^{n}}(s,a))$, with parameters $\boldsymbol{\theta} = \{\theta^n\}_{n=1}^N$ independently initialized at random.
Following MOBILE~\citep{sun2023model,offinerlkit}\footnote{\url{https://github.com/yihaosun1124/mobile}.}, each MLP $m_\theta(s,a)$ has $4$ hidden layers with a width of $200$ for the locomotion benchmarks, and $5$ layers with a width of $400$ for Adroit benchmark. As described in \autoref{sec:ensemble_arch}, we apply LayerNorm after each hidden layer; this design choice is further ablated in \autoref{sec:analysis}. The basic block consists of (Linear $\to$ LayerNorm $\to$ leaky ReLU) when LayerNorm is enabled, and (Linear $\to$ leaky ReLU) when it is disabled. For the bandit task, we use a small reward-model ensemble that has $2$ hidden layers with a width of $16$.  The weights of each MLP are initialized independently using the default \texttt{equinox.nn.Linear} scheme, i.e., $\text{Unif}[-\frac{1}{\sqrt{\text{dim}_{\text{in}}}}, \frac{1}{\sqrt{\text{dim}_{\text{in}}}}]$, where $\text{dim}_{\text{in}}$ denotes the input feature dimension.\looseness=-1

\textbf{Ensemble selection.} For each dataset, we train an initial pool of 128 MLPs and select the top $N$ models to form the world model ensemble $\ensemble$, which remains fixed during subsequent policy training. 
Model selection is based on a held-out validation set. For continuous control tasks, we rank the models by MSE on a validation set consisting of 1000 transitions from $\D$, following MOPO~\citep{yu2020mopo}. For the bandit task, we rank by negative log-likelihood (NLL) that captures uncertainty, since the true reward follows Bernoulli distribution. In our main experiments (\autoref{sec:benchmarking}), we use $N = 100$. For the sensitivity analysis (\autoref{sec:analysis}), we  vary $N \in \{5, 20\}$.

\textbf{Ensemble training.} Each model is trained using standard maximum likelihood estimation (MLE). For continuous control tasks,  we sample transition tuples $(s,a,r,s')$ from $\D$ using a batch size of $256$ to estimate the MLE loss. The inputs $(s,a)$ and outputs $(r,s')$ are standardized by subtracting the mean and dividing by the standard deviation computed over the dataset; during inference, predictions are inverse-transformed to restore the original scale.
We use AdamW optimizer~\citep{loshchilov2019decoupled} with a weight decay coefficient of \num{5e-5}, and a learning rate of \num{1e-3} for locomotion tasks and \num{3e-4} for Adroit tasks. Training is terminated early if the validation MSE fails to improve by more than $0.01$ relative within five consecutive epochs, following the early stopping procedure in MOBILE. In the bandit task, the model learning rate is \num{1e-3}, batch size is $128$, and improvement threshold is $0.001$ absolute.

\subsection{Details on Planning}

At the start of planning, following \autoref{sec:how_longhorizon}, we sample a batch of initial histories $h_t \sim \D$, drawn uniformly across time steps. The batch size is fixed at $100$, regardless of the ensemble size $N$. Since the values of $N$ used  in our experiments $\{5,20,100\}$ are divisors of $100$, we distribute the histories evenly across ensemble members. 
The recurrent policy $\pi_\nu$ initializes its hidden state $z_t$ with $h_t$ and then interacts with each world model to generate corresponding rollout until reaching the stopping criterion described in \autoref{app:algorithm}.  
The uncertainty threshold $\uncthres$ for truncation is fixed at $\zeta = 1.0$ in our main experiments and varied to $\{0.9,0.99,0.999\}$ in the sensitivity analysis. 
The entire planning process is parallelized on a GPU and thus introduces only negligible time costs.

For the bandit task, since there are no transition dynamics and hence no compounding error, we directly optimize the Bayesian objective: planning starts at $t=0$ and truncates at $t=T$.

\subsection{Details on Recurrent Off-Policy RL}

\textbf{Off-policy loss implementation.} For continuous control tasks, we follow a recent recurrent off-policy RL algorithm RESeL~\citep{luo2024efficient} to use REDQ~\citep{chen2021randomized} as the per-step loss for $l(\cdot)$, used in the overall RL loss defined by \autoref{eq:rl_loss}. REDQ builds on SAC~\citep{haarnoja2018soft,haarnoja2018soft2}\footnote{Our REDQ implementation is adapted from the SAC-N Equinox codebase: \url{https://github.com/Howuhh/sac-n-jax}.}, maintaining an ensemble of 10 critic MLPs and sampling 2 of them to form bootstrapped targets in the critic loss, while the actor maximizes the average Q-value over all ensemble members. 
REDQ uses in-target minimization to reduce overestimation, and may induce mild underestimation. In our use of REDQ, it \textit{mainly}  serves as a critic-stabilization component, because our notion of ``explicit conservatism'' refers to the deliberate injection of pessimism into offline policy learning. See \autoref{app:ablation} for an ablation replacing REDQ with standard SAC to show that REDQ plays a secondary role on \algo. 

For the bandit task, we adopt dueling DQN~\citep{wang2016dueling} following memoroid~\citep{morad2024recurrent} as the discrete control algorithm. Exploration is $\epsilon$-greedy, annealed from $1.0$ to $0.1$ over the first $10\%$ of gradient steps.

\textbf{Agent architecture implementation.} 
As introduced in \autoref{sec:recurrent_rl}, our agent consists of a recurrent actor $\pi_\nu: \mathcal H_t \to \Delta(\A)$ and a recurrent critic $Q_\omega: \mathcal H_t \times \A \to \R^{10}$. The critic outputs an ensemble of 10 Q-values, following the REDQ design adopted in RESeL. Both actor and critic maintain their own RNN encoders, $\nu_\phi: \mathcal H_t \to \mathcal Z$ and $\omega_\phi: \mathcal H_t \to \mathcal Z$, which share the same architecture but are optimized independently. As mentioned earlier, we adopt the \textit{memoroid} framework~\citep{morad2024recurrent}\footnote{\url{https://github.com/proroklab/memoroids}.} to use the linear recurrent unit (LRU)~\citep{orvieto2023resurrecting} as the backbone encoder for both $\nu_\phi$ and $\omega_\phi$. The actor and critic MLP heads have 2 or 3 hidden layers with a hidden size of $256$. For the bandit task, there is only a critic MLP head with 2 hidden layers.

The LRU begins with a nonlinear preprocessing layer that projects the raw input history $h_t$ into a 256-dimensional feature space (preserving the time dimension). This is followed by a stack of two LRU layers, each performing a linear recurrence update parameterized by a complex-valued diagonal matrix. Each layer maintains a hidden state of size $128$, resulting in a recurrent representation $z_t \in \C^{2 \times 128}$. From this recurrent state, the model produces a real-valued output vector $\tilde{z}_t \in \R^{128}$ via a nonlinear projection. During training and inference, $\tilde{z}_t$ is fed into the actor or critic MLP heads, while the complex hidden state $z_t$ is preserved for recurrent updates during policy inference.

We fix the recurrent architecture, including hidden sizes, across all experiments. However, \algo is compatible with any RNN encoder, and we leave a study of architectural variations to future work. For the ablation study with Markov agent, we remove the LRU encoder and only train the actor-critic MLP.

\textbf{Tape-based batching.} 
Classic recurrent RL relies on \emph{segment-based batching}~\citep{ni2022recurrent}, where sequences are padded to a fixed length, forming 3D tensors of shape $(\text{batch size}, \text{sequence length}, \text{dim})$ with NaN masks for shorter sequences. This wastes memory and lowers sample efficiency.
Memoroid~\citep{morad2024recurrent} introduces \emph{tape-based batching}, which exploits the monoid algebra of linear RNNs such as LRUs. Instead of padding, variable-length sequences are concatenated into a single 2D ``tape'', making the effective batch size equal to the sum of raw sequence lengths. Inline resets of hidden states prevent leakage across sequences within a tape~\citep{lu2023structured}, and computation over the tape is parallelized using associative scan in JAX~\citep{jax2018github}. To enable JIT compilation, a fixed tape length is enforced by dropping any trailing timesteps that exceed this length. We refer readers to~\citet{morad2024recurrent} for full details.

In our implementation, we adopt tape-based batching and we set the tape length to be larger than the maximal episode length $T$ in each task. To reduce computation time, we choose a relatively small tape length, similar to prior work in online POMDPs~\citep{morad2024recurrent,luo2024efficient}. 

\textbf{Training hyperparameters.} \autoref{tab:hparams_fixed} summarizes the modules and hyperparameters fixed in our experiments. 
Consistent with IQL~\citep{kostrikov2021offline} and MOBILE~\citep{sun2023model}, we apply cosine learning rate decay only to the actor network (both the RNN encoder and MLP head), but not to the critic, in order to promote stability during the later stages of training. 

The REDQ (SAC) entropy coefficient $\alpha$ is auto-tuned with a target entropy of $-\mathrm{dim}(\A)$~\citep{haarnoja2018soft2} for all datasets except for D4RL and NeoRL Hopper datasets and D4RL AntMaze domain. In the D4RL hopper domain, we find our algorithm is sensitive to $\alpha$, consistent with prior recurrent RL work~\citep{luo2024efficient}. To address this, we follow MOBILE and fix $\alpha=0.2$ across all hopper datasets (four in D4RL and three in NeoRL), without further tuning. 

In AntMaze, sparse-reward navigation makes SAC sensitive to the choice of $\alpha$, e.g., ADMPO~\citep{lin2025any} uses a fixed $\alpha=0.05$ and disables the entropy term backup in critic loss. We follow their insights but instead tune the target entropy $\in \{-\mathrm{dim}(\A), -5\,\mathrm{dim}(\A), -10\,\mathrm{dim}(\A)\}$. We find $-10\,\mathrm{dim}(\A)$ works best overall and report it in our main results.

For the bandit task, we sweep the RNN encoder learning rate $\eta_\phi \in \{\num{1e-6}, \num{3e-6}, \num{1e-5}, \num{3e-5}, \num{1e-4}\}$ in the critic network. Consistent with observations from ReSEL~\citep{luo2024efficient} and our continuous control results, a small learning rate is crucial for stable training. We use $\eta_\phi=\num{3e-6}$ in our bandit experiments.

{\renewcommand{\arraystretch}{1.3}
\begin{table}[H]
    \centering
\footnotesize
    \vspace{1em}
    \caption{Fixed hyperparameters used in our recurrent agents. The last block (actor and policy entropy) is only used in continuous control.}
    \vspace{-0.5em}
    \begin{tabular}{cc}
    \toprule
     Module or Hyperparameter    &  Value \\
     \midrule
      Actor and critic RNN encoders  & 2-layer LRUs~\citep{orvieto2023resurrecting} \\
      RNN hidden state size & 256 \\ 
      Actor and critic heads & 2-layer MLPs (3 layers in Adroit) \\ 
      Basic block of MLP head & (Linear $\to$ LayerNorm $\to$ leaky ReLU) \\
      MLP head hidden size & 256 \\
     \midrule
      Batch size (i.e., tape length) & 2048 (1024 in Adroit, 1000 in bandit) \\
      Update-to-data (UTD) ratio & 0.05 (0.02 in bandit) \\ 
      Gradient steps & 2M (3M in AntMaze, 20k in bandit)  \\ 
      Replay buffer size & \makecell{Full size, i.e., \\(60M in AntMaze, 1M in bandit, 40M otherwise)} \\
      Discount factor $\gamma$ & 0.99 \\
      Critic head's learning rate & \num{1e-4} \\
      Gradient norm clipping & 1000 (10000 in Adroit, 1 in bandit and AntMaze) \\ 
      \midrule
      Actor head's learning rate & \num{1e-4} \\
      Actor's learning rate decay & Cosine decay to 0.0 \\
      Entropy coef. $\alpha$'s learning rate & \num{1e-4} \\
      Entropy coef. $\alpha$ & \makecell{Auto-tuned with target $-\mathrm{dim}(\A)$ \\(AntMaze uses $-10\,\mathrm{dim}(\A)$ while Hopper uses fixed $\alpha=0.2$)} \\
    \bottomrule
    \end{tabular}
    \label{tab:hparams_fixed}
\end{table}
}

\begingroup
\rowcolors{2}{gray!10}{white}
\begin{table}[H]
\footnotesize
    \centering
    \vspace{1em}
    \caption{Best hyperparameters per dataset in the D4RL locomotion benchmark. We sweep $\eta_\phi \in \{\num{3e-7}, \num{1e-6}, \num{3e-6}, \num{1e-5}, \num{3e-5}\}$ and $\kappa \in \{0.05, 0.5, 0.8\}$.}
    \vspace{-0.5em}
    \begin{tabular}{ccc}
    \toprule
    Dataset & RNN encoder lr $\eta_\phi$ & Real data ratio $\kappa$\\ 
    \midrule
    halfcheetah-random-v2 & \num{3e-5} & $0.8$ \\
    hopper-random-v2 & \num{1e-5} & $0.5$ \\
    walker2d-random-v2 & \num{3e-5} & $0.5$ \\
    halfcheetah-medium-replay-v2 & \num{1e-5} & $0.05$ \\
    hopper-medium-replay-v2 & \num{3e-7} & $0.5$ \\
    walker2d-medium-replay-v2 & \num{1e-6} & $0.5$ \\
    halfcheetah-medium-v2 & \num{3e-5} & $0.8$ \\
    hopper-medium-v2 & \num{1e-6} & $0.8$ \\
    walker2d-medium-v2 & \num{3e-6} & $0.5$ \\
    halfcheetah-medium-expert-v2 & \num{3e-5} & $0.8$ \\
    hopper-medium-expert-v2 & \num{3e-6} & $0.5$ \\
    walker2d-medium-expert-v2 & \num{1e-6} & $0.8$ \\
    \bottomrule
    \end{tabular}
    \label{tab:sweep_d4rl_loco}
\end{table}

\begin{table}[H]
\footnotesize
    \centering
    \vspace{1em}
    \caption{Best hyperparameters per dataset in the NeoRL locomotion benchmark. 
    We sweep $\eta_\phi \in \{\num{3e-7}, \num{1e-6}, \num{3e-6}, \num{1e-5}, \num{3e-5}\}$ 
    and $\kappa \in \{0.5, 0.8\}$.}
    \vspace{-0.5em}
    \begin{tabular}{ccc}
    \toprule
    Dataset & RNN encoder lr $\eta_\phi$ & Real data ratio $\kappa$ \\ 
    \midrule
    HalfCheetah-v3-Low & \num{1e-5} & $0.8$ \\
    Hopper-v3-Low & \num{3e-6} & $0.8$ \\
    Walker2d-v3-Low & \num{3e-7} & $0.5$ \\
    HalfCheetah-v3-Medium & \num{3e-5} & $0.5$ \\
    Hopper-v3-Medium & \num{3e-6} & $0.5$ \\
    Walker2d-v3-Medium & \num{3e-7} & $0.8$ \\
    HalfCheetah-v3-High & \num{3e-5} & $0.5$ \\
    Hopper-v3-High & \num{3e-6} & $0.5$ \\
    Walker2d-v3-High & \num{3e-7} & $0.8$ \\
    \bottomrule
    \end{tabular}
    \label{tab:sweep_neorl}
\end{table}

\begin{table}[H]
\footnotesize
    \centering
    \vspace{1em}
    \caption{Best hyperparameters per dataset in the D4RL Adroit benchmark. 
    We sweep  $\eta_\phi \in \{\num{1e-6}, \num{3e-6}, \num{1e-5}, \num{3e-5}, \num{1e-4}\}$ 
    and $\kappa \in \{0.5, 0.8\}$. For the remaining Adroit datasets, all hyperparameter settings yield near-zero performance. }
    \vspace{-0.5em}
    \begin{tabular}{ccc}
    \toprule
    Dataset & RNN encoder lr $\eta_\phi$ & Real data ratio $\kappa$ \\ 
    \midrule
    pen-human-v1 & \num{3e-5} & $0.5$ \\
    pen-cloned-v1 & \num{1e-5} & $0.8$ \\
    hammer-cloned-v1 & \num{1e-5} & $0.5$ \\
    \bottomrule
    \end{tabular}
    \label{tab:sweep_adroit}
\end{table}

\begin{table}[H]
\footnotesize
    \centering
    \vspace{1em}
    \caption{Best hyperparameters per dataset in the D4RL AntMaze benchmark. 
    We sweep  $\eta_\phi \in \{\num{1e-6}, \num{3e-6}, \num{1e-5}, \num{3e-5}\}$ 
    and $\kappa \in \{0.5, 0.8, 0.95\}$. For the large maze datasets, all hyperparameter settings yield near-zero performance. 
    }
    \vspace{-0.5em}
    \begin{tabular}{ccc}
    \toprule
    Dataset & RNN encoder lr $\eta_\phi$ & Real data ratio $\kappa$ \\ 
    \midrule
    antmaze-umaze-v2 & \num{3e-6} & $0.95$ \\
    antmaze-umaze-diverse-v2 & \num{3e-6} & $0.95$ \\
    antmaze-medium-play-v2 & \num{1e-5} & $0.95$ \\
    antmaze-medium-diverse-v2 & \num{1e-6} & $0.95$ \\
    \bottomrule
    \end{tabular}
    \label{tab:sweep_antmaze}
\end{table}
\endgroup

\subsection{Computation Details}
\label{app:compute}

This subsection provides additional details on rollout costs, model training time, and parameter size to complement the summary in \autoref{sec:computation}.

\begin{table}[h]
\vspace{1em}
    \caption{\textbf{Time cost (seconds) of the Rollout function} (\autoref{algo:rollout}) for different ensemble sizes $N$ and numbers of parallel rollouts $K$. In our main experiments, $N = K = 100$.}
    \vspace{-0.5em}
    \centering
    \begin{tabular}{c|ccc}
              & $K=5$  & $K=20$ & $K=100$ \\
    \midrule
      $N=5$   & 2.7s  & 3.0s  & 4.7s \\ 
     $N=20$   & N/A  & 3.6s  & 5.0s \\ 
     $N=100$   & N/A  & N/A & 5.3s \\ 
    \end{tabular}
    \label{tab:rollout_cost}
\end{table}

\textbf{Rollout costs.} \autoref{tab:rollout_cost} reports the time cost of our Rollout implementation for halfcheetah-medium-expert-v2. We assign each ensemble member an equal number of rollouts, so we only benchmark configurations where $K$ is divisible by $N$. Thanks to full vectorization over ensemble members and rollouts using \texttt{jax.vmap}, rollout inference is very efficient: increasing $N$ or $K$ has only a minor effect on runtime. Consequently, rollout cost is negligible, and agent training time is dominated by gradient updates rather than rollout generation.

\begin{table}[h]
    \centering
    \vspace{1em}
    \caption{\textbf{World model training time (per seed)} for different total ensemble size $N_{\text{total}}$. In practice, we use $N_{\text{total}} = 128$.}
    \begin{tabular}{ccc}
       $N_{\text{total}} = 8$ & $N_{\text{total}} = 32$ & $N_{\text{total}} = 128$  \\
      \midrule
       1.2 hrs (1x)  & 1.7 hrs (1.42x) & 6.0 hrs (5x) \\
    \end{tabular}
    \vspace{1em}
    \label{tab:model_cost}
\end{table}

\textbf{Model training time.} For completeness, \autoref{tab:model_cost} reports the training time for several choices of $N_{\text{total}}$ on halfcheetah-medium-expert-v2 to provide additional compute context. The training time is \emph{roughly sublinear} in $N_{\text{total}}$. In practice, we train one ensemble of size $N_{\text{total}} = 128$ and select the top $N=5,20,100$.

\textbf{Parameter size.} In the locomotion benchmarks, the world model ensemble with $N=100$ contains fewer than 15M parameters in total. The recurrent encoder has fewer than 600k parameters, the actor MLP head fewer than 200k, and the critic MLP head (with an ensemble size of 10) fewer than 1.5M.

\section{Further Results and Discussion}
\label{app:results}

\subsection{Full Benchmarking Results}

\vspace{1em}
\begingroup
\setlength{\tabcolsep}{2pt}       %
\renewcommand{\arraystretch}{1.1} %
\begin{table}[H]
\caption{Comparison of offline RL methods on the \textbf{NeoRL locomotion} benchmark. We report mean normalized scores for all baselines, with {\scriptsize $\pm$std} for competitive baselines. The \textbf{best mean score} is bolded, and \texthl{hlcolor}{marked} methods are statistically similar under a $t$-test. Our results use 6 seeds, each evaluated at the final step with 20 episodes.\looseness=-1}
\vspace{-0.5em}
\label{tab:neorl_loco}
\centering
\resizebox{\columnwidth}{!}{%
\begin{tabular}{lrrrrrrrrrrrrr}
\toprule
 & \multicolumn{2}{c}{Model-free} & \multicolumn{8}{c}{Conservative model-based} & \multicolumn{2}{c}{Bayesian-inspired} & \multicolumn{1}{c}{Ours} \\
\cmidrule(lr){2-3} \cmidrule(lr){4-11} \cmidrule(lr){12-13} \cmidrule(lr){14-14}
Dataset & EDAC & CQL & MOPO & COMBO & ROMI & MOBILE & LEQ & ADMPO & ScorePen & VIPO & MAPLE & MoDAP & \algo \\
\midrule
hc-Low  & \valunc{31.3}{} & \valunc{38.2}{} & \valunc{40.1}{} & \valunc{32.9}{} & \valunc{35.8}{3.9} & \valunc{54.7}{3.0} & \valunc{33.4}{1.6} & \valunc{52.8}{1.2} & \valunc{49.6}{1.2} & \highlight{\valunc{\textbf{58.5}}{0.1}} & \valunc{33.4}{} & \valunc{53.9}{1.1} & \valunc{53.1}{1.1} \\
hp-Low  & \valunc{18.3}{} & \valunc{16.0}{} & \valunc{6.2}{}  & \valunc{17.9}{} & \valunc{22.4}{0.5} & \valunc{17.4}{3.9} & \valunc{24.2}{2.3} & \valunc{22.3}{0.1} & \valunc{21.1}{2.3} & \highlight{\valunc{\textbf{30.7}}{0.3}} & \valunc{22.7}{} & \highlight{\valunc{26.1}{4.7}} & \highlight{\valunc{30.3}{2.9}} \\
wk-Low  & \valunc{40.2}{} & \valunc{44.7}{} & \valunc{11.6}{} & \valunc{31.7}{} & \valunc{36.4}{2.3} & \valunc{37.6}{2.0} & \valunc{65.1}{2.3} & \valunc{55.9}{3.8} & \valunc{51.4}{1.4} & \highlight{\valunc{\textbf{67.6}}{0.7}} & \valunc{33.9}{} & \valunc{51.3}{7.8} & \valunc{43.9}{5.9} \\
\midrule
hc-Med  & \valunc{54.9}{} & \valunc{54.6}{} & \valunc{62.3}{} & \valunc{50.8}{} & \valunc{57.7}{1.4} & \valunc{77.8}{1.4} & \valunc{59.2}{3.9} & \valunc{69.3}{1.7} & \valunc{77.4}{1.0} & \highlight{\valunc{80.9}{0.2}} & \valunc{69.5}{} & \highlight{\valunc{81.0}{2.3}} & \highlight{\valunc{\textbf{81.1}}{0.8}} \\
hp-Med  & \valunc{44.9}{} & \valunc{64.5}{} & \valunc{1.0}{}  & \valunc{56.3}{} & \valunc{46.6}{6.8} & \valunc{51.1}{13.3} & \highlight{\valunc{\textbf{104.3}}{5.2}} & \valunc{51.5}{5.0} & \valunc{90.9}{1.3} & \valunc{66.3}{0.2} & \valunc{27.7}{} & \valunc{44.2}{15.3} & \highlight{\valunc{95.7}{11.5}} \\
wk-Med  & \valunc{57.6}{} & \valunc{57.3}{} & \valunc{39.9}{} & \valunc{53.8}{} & \valunc{54.9}{2.0} & \valunc{62.2}{1.6} & \valunc{45.2}{19.4} & \valunc{70.1}{2.4} & \valunc{65.8}{1.6} & \highlight{\valunc{\textbf{76.8}}{0.1}} & \valunc{40.7}{} & \valunc{70.8}{3.1} & \valunc{50.5}{8.8} \\
\midrule
hc-High & \valunc{81.4}{} & \valunc{77.4}{} & \valunc{65.9}{} & \valunc{62.2}{} & \valunc{77.4}{2.7} & \valunc{83.0}{4.6} & \valunc{71.8}{8.0} & \valunc{84.0}{0.8} & \valunc{81.4}{1.0} & \highlight{\valunc{\textbf{89.4}}{0.6}} & \textemdash & \highlight{\valunc{84.1}{8.3}} & \valunc{68.3}{23.6} \\
hp-High & \valunc{52.5}{} & \valunc{76.6}{} & \valunc{11.5}{} & \valunc{63.2}{} & \valunc{65.9}{4.5} & \highlight{\valunc{87.8}{26.0}} & \highlight{\valunc{95.5}{13.9}} & \valunc{87.6}{4.9} & \valunc{86.3}{1.3} & \highlight{\valunc{\textbf{107.7}}{0.5}} & \textemdash & \valunc{52.4}{3.2} & \valunc{96.8}{7.7} \\
wk-High & \valunc{75.5}{} & \valunc{75.3}{} & \valunc{18.0}{} & \valunc{71.8}{} & \valunc{75.1}{1.8} & \valunc{74.9}{3.4} & \valunc{73.7}{1.1} & \highlight{\valunc{\textbf{82.2}}{1.9}} & \valunc{78.0}{1.8} & \highlight{\valunc{81.7}{1.0}} & \textemdash & \valunc{73.6}{2.8} & \valunc{62.7}{14.5} \\
\midrule
AVG     & \valunc{50.7}{} & \valunc{56.1}{} & \valunc{28.5}{} & \valunc{49.0}{} & \valunc{52.5}{} & \valunc{60.7}{} & \valunc{63.6}{} & \valunc{64.0}{} & \valunc{66.9}{} & \valunc{\textbf{73.3}}{} & \textemdash & \valunc{59.7}{} & \valunc{64.7}{} \\
\bottomrule
\end{tabular}
}
\end{table}
\endgroup

\begingroup
\setlength{\tabcolsep}{4pt}       %
\renewcommand{\arraystretch}{1.1} %
\begin{table}[H]
\vspace{1em}
\caption{Comparison of offline RL methods on the \textbf{D4RL Adroit} benchmark. We report mean normalized scores for all baselines, with {\scriptsize $\pm$std} for competitive baselines. The \textbf{best mean score} is bolded, and \texthl{hlcolor}{marked} methods are statistically similar under a $t$-test. Our results use 6 seeds, each evaluated at the final step with 20 episodes. In addition, we include the average dataset performance, denoted as $\pi_\D$, to provide a reference for \textbf{data quality}.}
\label{tab:d4rl_adroit}
\vspace{-0.5em}
\centering
\resizebox{\columnwidth}{!}{%
\begin{tabular}{lrrrrrrrrrrrr}
\toprule
 &  & \multicolumn{4}{c}{Model-free} & \multicolumn{6}{c}{Conservative model-based} & \multicolumn{1}{c}{Ours} \\
\cmidrule(lr){3-6} \cmidrule(lr){7-12} \cmidrule(lr){13-13}
Dataset & $\pi_\D$ & BC & EDAC & IQL & ReBRAC & MOPO & MOBILE & ARMOR & ROMI & MoMo & VIPO & \algo  \\
\midrule
pen-human     & 88.7 &  \valunc{71.0}{6.3} & \valunc{52.1}{8.6} & \valunc{78.5}{8.2} & \highlight{\valunc{\textbf{103.2}}{8.5}} & \valunc{10.7}{} & \valunc{30.1}{14.6} & \valunc{72.8}{13.9} & \valunc{63.8}{8.4} & \valunc{74.9}{} & \valunc{52.6}{7.7} & \valunc{20.8}{13.2} \\
pen-cloned    & 68.7 &  \valunc{51.9}{15.2} & \valunc{68.2}{7.3} & \valunc{83.4}{8.2} & \highlight{\valunc{\textbf{102.8}}{7.8}} & \valunc{54.6}{} & \valunc{69.0}{9.3} & \valunc{51.4}{15.5} & \valunc{57.7}{3.9} & \valunc{74.1}{} & \valunc{71.1}{9.5} & \highlight{\valunc{91.3}{21.2}} \\
door-human    & 7.7 &  \valunc{2.3}{4.0}  & \highlight{\valunc{10.7}{6.8}} & \valunc{3.3}{1.8}  & \valunc{-0.1}{0.0}  & \valunc{-0.2}{} & \valunc{-0.2}{0.1} & \valunc{6.3}{6.0}  & \valunc{0.0}{0.0} & \highlight{\valunc{\textbf{11.3}}{}} & \valunc{2.0}{0.3}  & \valunc{0.0}{0.0}  \\
door-cloned   &  3.2 & \valunc{-0.1}{0.0} & \highlight{\valunc{9.6}{8.3}}  & \valunc{3.1}{1.8}  & \valunc{0.1}{0.1}   & \highlight{\valunc{15.3}{}} & \highlight{\valunc{\textbf{24.0}}{22.8}} & \valunc{-0.1}{0.0} & \highlight{\valunc{14.3}{5.8}} & \valunc{5.8}{}  & \textemdash      & \valunc{0.0}{0.0}  \\
hammer-human  & 1.5 &  \highlight{\valunc{\textbf{3.0}}{3.4}}  & \valunc{0.8}{0.4}  & \highlight{\valunc{1.8}{0.8}}  & \valunc{0.2}{0.2}   & \valunc{0.3}{}  & \valunc{0.4}{0.2}  & \highlight{\valunc{1.9}{1.6}}  & \highlight{\valunc{2.2}{0.2}} & \valunc{1.7}{}  & \valunc{1.1}{0.9}  & \valunc{0.0}{0.0}  \\
hammer-cloned & 0.7&  \valunc{0.6}{0.2}  & \valunc{0.3}{0.0}  & \valunc{1.5}{0.7}  & \valunc{5.0}{3.8}   & \valunc{0.5}{}  & \valunc{1.5}{0.4}  & \valunc{0.7}{0.6}  & \valunc{0.0}{0.0} & \valunc{0.7}{}  & \valunc{2.1}{0.2}  & \highlight{\valunc{\textbf{14.4}}{9.7}} \\
\midrule
AVG           & N/A &  \valunc{21.5}{} & \valunc{23.6}{} & \valunc{28.6}{} & \valunc{\textbf{35.2}}{}  & \valunc{13.5}{} & \valunc{20.8}{} & \valunc{22.2}{} & \valunc{23.0}{} & \valunc{28.1}{} & \textemdash      & \valunc{21.1}{} \\
\bottomrule
\end{tabular}
}
\end{table}
\endgroup

\begingroup
\setlength{\tabcolsep}{4pt}       %
\renewcommand{\arraystretch}{1.1} %
\begin{table}[H]
\vspace{1em}
\caption{Comparison of offline model-based RL methods on the \textbf{D4RL AntMaze} benchmark. We report mean normalized scores for all baselines, with {\scriptsize $\pm$std} for competitive baselines. The \textbf{best mean score} is bolded, and \texthl{hlcolor}{marked} methods are statistically similar under a $t$-test. Our results use 6 seeds, each evaluated at the final step with 100 episodes.
In addition, we include the average dataset performance, denoted as $\pi_\D$, to provide a reference for \textbf{data quality}.}
\label{tab:d4rl_antmaze}
\vspace{-0.5em}
\centering
\resizebox{0.9\columnwidth}{!}{%
\begin{tabular}{lrrrrrrrrrr}
\toprule
Dataset & $\pi_\D$ & CBOP & MOBILE & RAMBO & COMBO & MOBILE$^\dagger$ & ROMI & ADMPO & LEQ & \algo  \\
\midrule
umaze     &  28.8 & \valunc{0.0}{0.0} & \valunc{0.0}{0.0} & 25.0 & 80.3 & \valunc{77.0}{} & \valunc{70.3}{11.4} & \valunc{88.4}{1.2} & \highlight{\valunc{\textbf{94.4}}{6.3}} & \valunc{66.1}{20.1} \\
umaze-diverse    & 1.0 &  \valunc{0.0}{0.0} & \valunc{0.0}{0.0} & 0.0 &  57.3 & \valunc{20.4}{} & \valunc{32.8}{13.0} & \highlight{\valunc{\textbf{81.7}}{8.6}} & \highlight{\valunc{71.0}{12.3}} & \highlight{\valunc{74.4}{12.1}} \\
medium-play    & 19.6 &  \valunc{0.0}{0.0} & \valunc{0.0}{0.0} & 16.4 &  0.0 & \highlight{\valunc{64.6}{}} & \highlight{\valunc{51.3}{19.7}} & \valunc{23.9}{6.3} & \highlight{\valunc{\textbf{58.8}}{33.0}} & \valunc{12.8}{19.2} \\
medium-diverse   & 11.3 &  \valunc{0.0}{0.0} & \valunc{0.0}{0.0} & 23.2 & 0.0 & \valunc{1.6}{} & \highlight{\valunc{30.1}{10.0}} & \valunc{24.1}{5.7} & \highlight{\valunc{\textbf{46.2}}{23.2}} & \valunc{19.4}{12.4} \\
large-play  & 10.5 & \valunc{0.0}{0.0} & \valunc{0.0}{0.0} & 0.0 & 0.0 & \valunc{2.6}{} & \valunc{0.0}{0.0} & \valunc{8.3}{4.1} & \highlight{\valunc{\textbf{58.6}}{9.1}} & \valunc{0.0}{0.0} \\
large-diverse & 10.6 &  \valunc{0.0}{0.0} & \valunc{0.0}{0.0} & 2.4 &  0.0 & \valunc{7.2}{} & \valunc{6.4}{4.2} & \valunc{0.0}{0.0} & \highlight{\valunc{\textbf{60.2}}{18.3}} & \valunc{0.5}{1.1} \\
\midrule
AVG           & N/A &  \valunc{0.0}{} & \valunc{0.0}{} & 11.2 &  22.9 & \valunc{28.9}{} & \valunc{31.1}{} & \valunc{37.7}{}  & \valunc{\textbf{64.9}}{} & \valunc{28.9}{} \\
\bottomrule
\end{tabular}
}
\vspace{1em}
\end{table}
\endgroup

\textbf{Source of benchmarked baseline results.} 
For the D4RL locomotion benchmark in \autoref{tab:d4rl_loco}, we adopt the results of CQL, EDAC, and MOPO from the MOBILE paper~\citep[Table 1]{sun2023model}, where the MOPO results correspond to the tuned variant (denoted as MOPO$^*$ therein). Results for the remaining baselines are taken directly from their respective original publications. For MAPLE, we report the improved variant that employs an ensemble size of $142$ (instead of the default $14$) as described in~\citep{chen2021offline}, to enable a fairer comparison with our method, which uses an ensemble size of $100$. Finally, we note that prior works may differ in the total number of gradient steps used for training compared to ours (2M steps in this benchmark); for example, MOPO, MOBILE, ADMPO, and VIPO report results after 3M steps, while LEQ, SUMO, and ROMI use 1M steps. We retain these numbers as reported, since we believe this setting reflects the most faithful comparison with previously published results.

For the NeoRL locomotion benchmark in \autoref{tab:neorl_loco}, we use EDAC results from the MOBILE paper~\citep{sun2023model}, CQL and MOPO from the NeoRL paper~\citep{qin2022neorl}, and COMBO from the MoDAP paper~\citep{choi2024diversification}. All other baselines are taken from their original publications. For MoDAP on Hopper-v3-High, we report the improved result obtained with an ensemble size of $40$, as presented by the authors to demonstrate the benefit of larger ensembles.

For the D4RL Adroit benchmark in \autoref{tab:d4rl_adroit}, we report BC, IQL, and ReBRAC results from the CORL paper~\citep{tarasov2023corl}, and MOPO from the MOBILE paper~\citep{sun2023model}. Results for other baselines are taken from their original publications. Standard deviations for MOPO and MoMo were not available, so we omit them.

For the D4RL AntMaze benchmark in \autoref{tab:d4rl_antmaze}, we report CBOP, MOBILE, RAMBO, COMBO, and LEQ from the LEQ paper~\citep{park2025model}, as well as an alternative tuning of MOBILE, denoted MOBILE$^\dagger$, and ADMPO from the ADMPO paper~\citep{lin2025any}. We omit standard deviations for RAMBO, COMBO, and MOBILE$^\dagger$ since they are not reported in the corresponding source papers.

\textbf{Statistical tests for benchmarking results.} To assess statistical significance, we conduct Welch’s one-sided $t$-test ($p <0.05$). In the reported tables, we highlight all methods whose average scores are not significantly different from the best-performing method.

\subsection{Full Results on Value Overestimation and Horizon Scales}
\label{app:unc_thres}

\autoref{fig:horizon_1}--\autoref{fig:horizon_4} report the full ablation results on truncation thresholds to complement \autoref{fig:horizon_main}, with the maximum rollout horizon for each batch of training rollouts shown in the last columns.

A key observation is that using a quantile threshold of $\zeta=0.9$ corresponds \textit{roughly} to a horizon cap of $1$–$10$, which mirrors the short fixed horizons commonly adopted in prior work. This shows that such prior choices are not suitable for guiding the design of Bayesian RL, where long horizons are essential.

\textbf{Summary on the effect of $\zeta$.} We further count the number of complete failures (i.e., scores $\le 5.0$) under different thresholds using \autoref{tab:ablation_full}. With $\zeta=0.9$, 16 datasets fail; with $\zeta=0.99$, 6 datasets fail; and with $\zeta=0.999$, none fail. This suggests that a safe range for $\zeta$ lies between $0.999$ and $1.0$. 
Although $\zeta=0.999$ often performs similarly to $1.0$ (as the resulting adaptive horizons are close), we observe clear advantages of $1.0$ on tasks such as D4RL walker2d-random-v2 and pen-cloned-v1. \textbf{Thus, we recommend using $\zeta=1.0$ as a starting point for our algorithm.}

\textbf{Summary on horizon scales.} 
Although \algo uses adaptive horizons, we report the empirical horizon scales (75th-percentile and maximum) to illustrate the typical horizon length required under our Bayesian formulation.
In D4RL and NeoRL locomotion tasks ($T=1000$), \algo uses 75th-percentile horizons of $2^4$-$2^6$ in 4 tasks, $2^7$ in 5 tasks, $2^8$ in 9 tasks, and $2^9$ in 3 tasks; the corresponding maximum horizons of $2^6$-$2^8$ in 4 tasks, $2^9$ in 4 tasks, $1000$ in 11 tasks. 
In Adroit ($T=100$ or $200$), the 75th-percentile horizon is $2^6$-$2^7$, and the maximum horizon reaches the episode length $T$.
In AntMaze ($T=700$ or $1000$), the 75th-percentile horizon is $2^4$-$2^8$; the maximum horizon reaches $2^8$-$2^9$.  
Overall, these statistics show that \algo selects 75th-percentile rollout horizons of \textbf{64-512 steps} and maximum horizons of \textbf{256-1000 steps}, in 21 out of 23 tasks with $T=1000$.

\begin{figure}[h]
    \centering
    \includegraphics[width=0.8\linewidth]{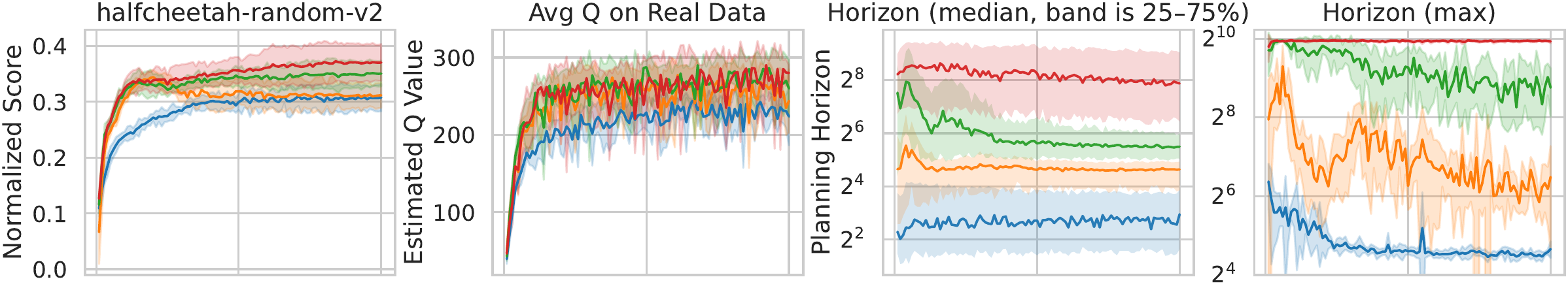}
    \includegraphics[width=0.8\linewidth]{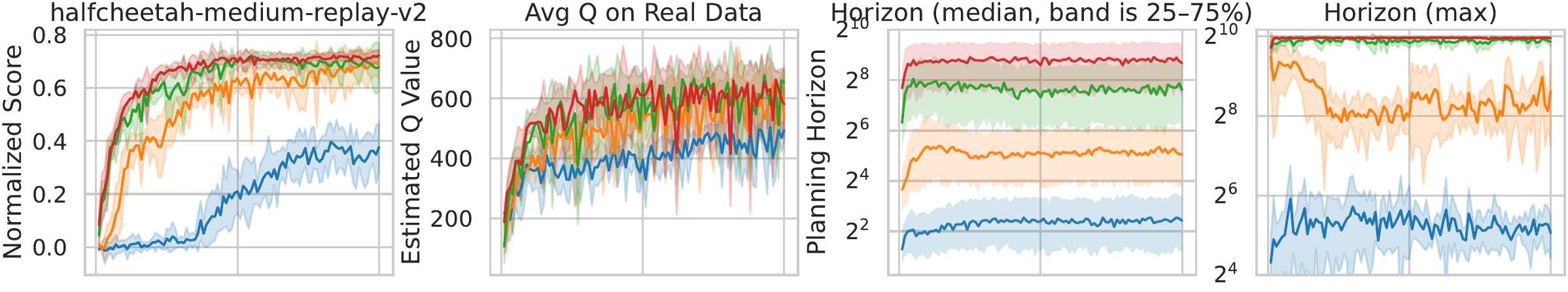}
    \includegraphics[width=0.8\linewidth]{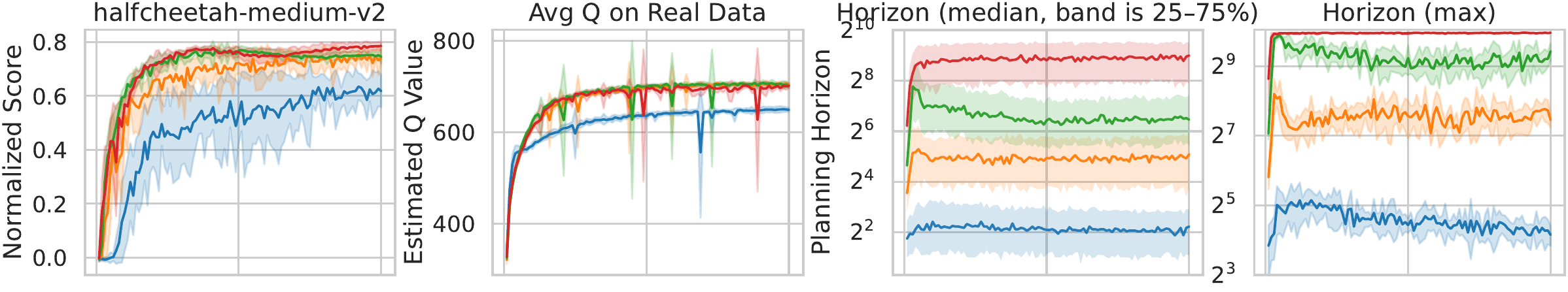}
    \includegraphics[width=0.8\linewidth]{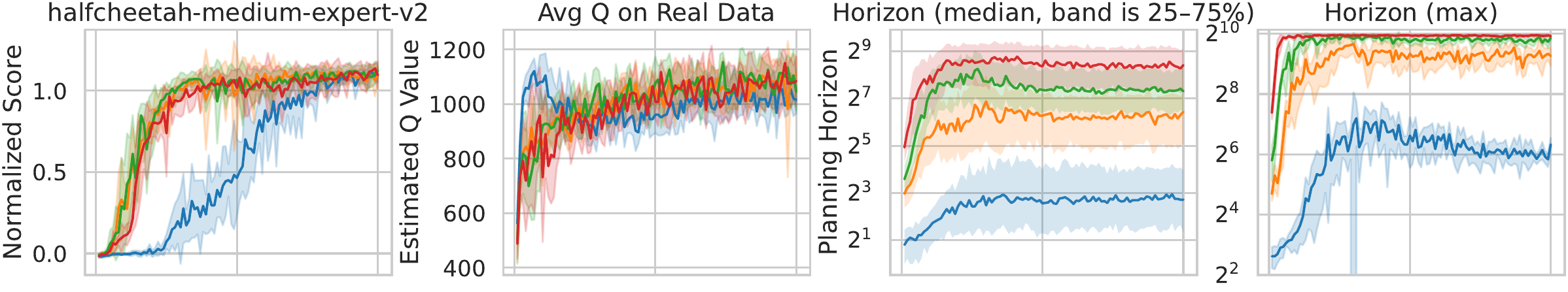}
    \\\vspace{1em}
    \includegraphics[width=0.8\linewidth]{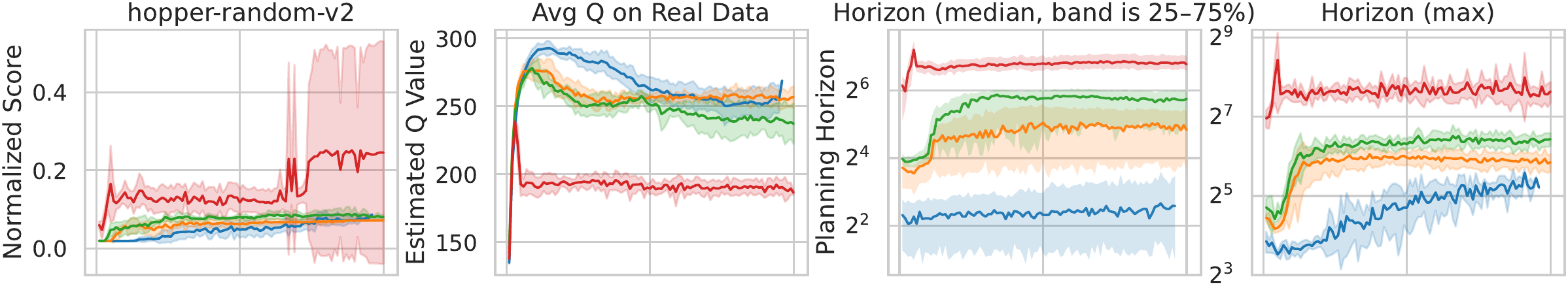}
    \includegraphics[width=0.8\linewidth]{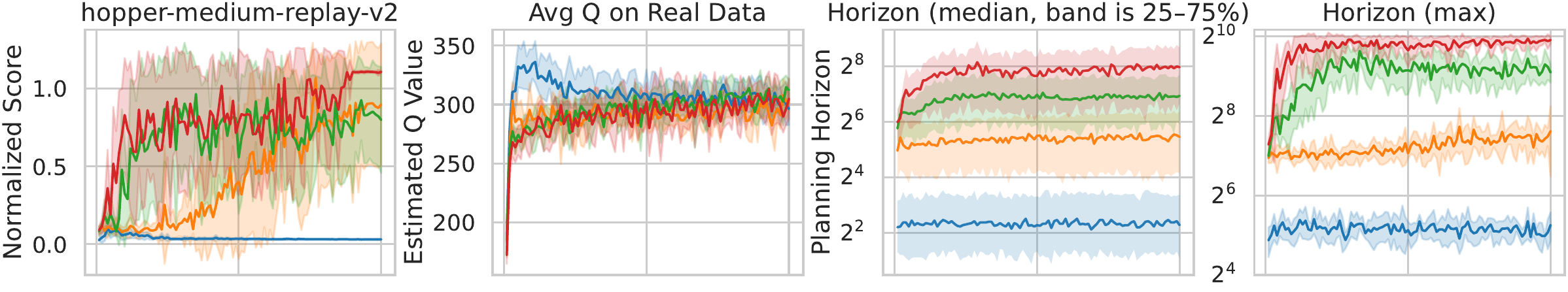}
    \includegraphics[width=0.8\linewidth]{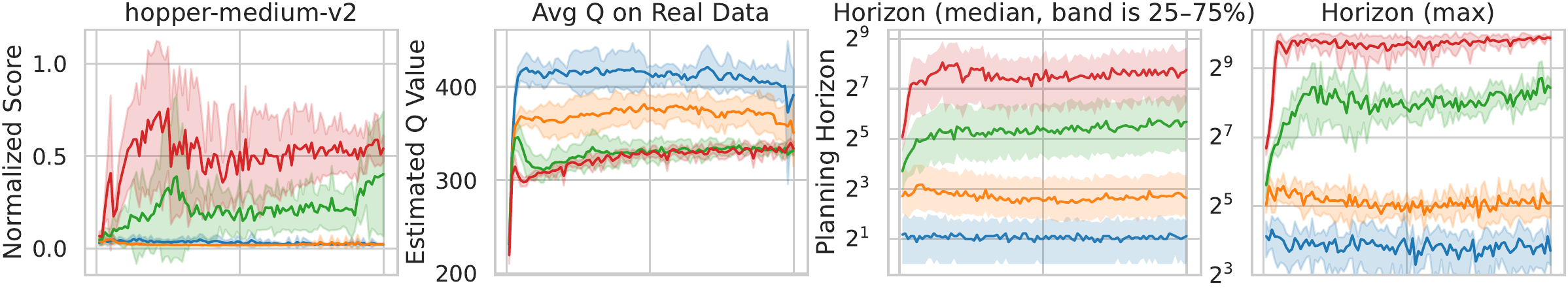}
    \includegraphics[width=0.8\linewidth]{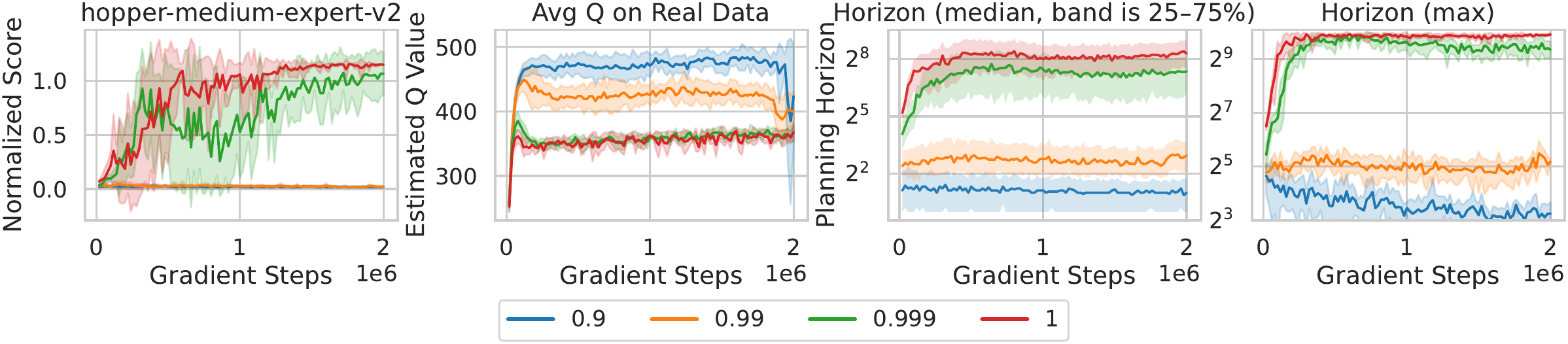}
    \caption{Ablation on the uncertainty quantile $\zeta$ for rollout truncation (part 1 of 4). The third column reports rollout horizon statistics (median with interquartile range) over 100 training-time rollouts, and the fourth column reports the maximum horizon.}
    \label{fig:horizon_1}
\end{figure}

\begin{figure}[h]
    \centering
    \includegraphics[width=0.8\linewidth]{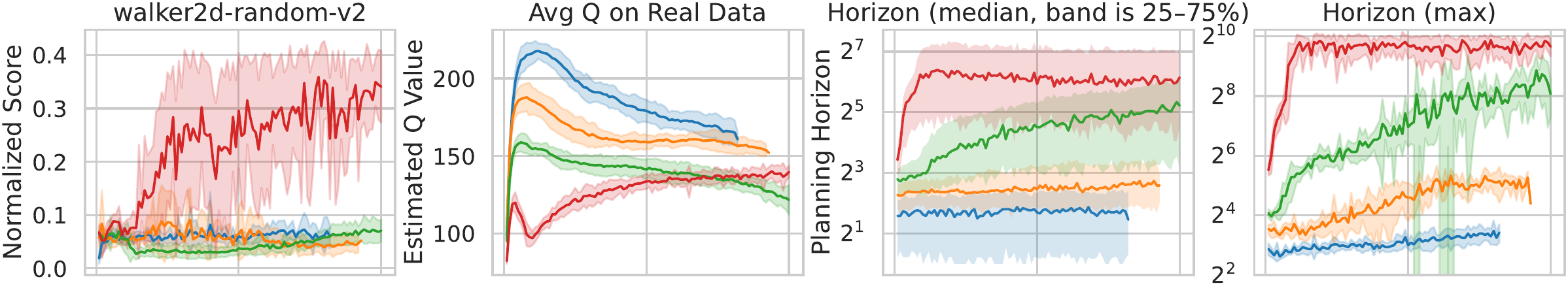}
    \includegraphics[width=0.8\linewidth]{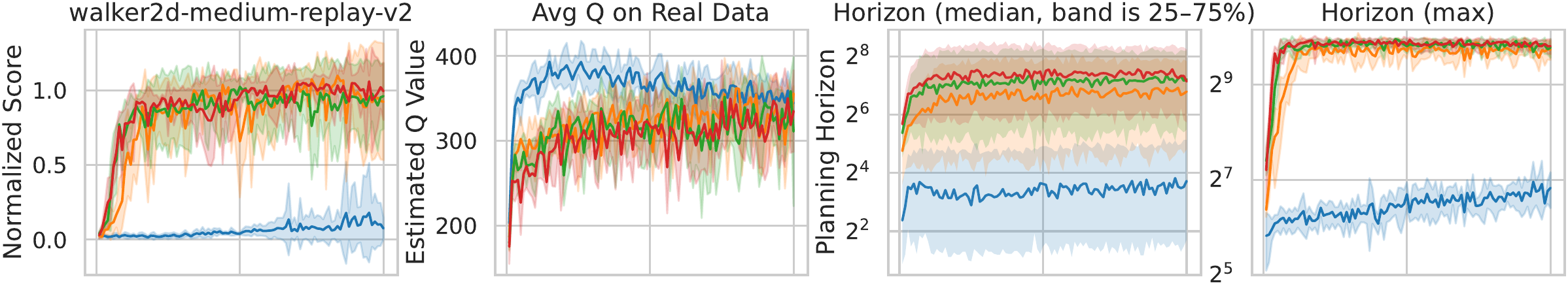}
    \includegraphics[width=0.8\linewidth]{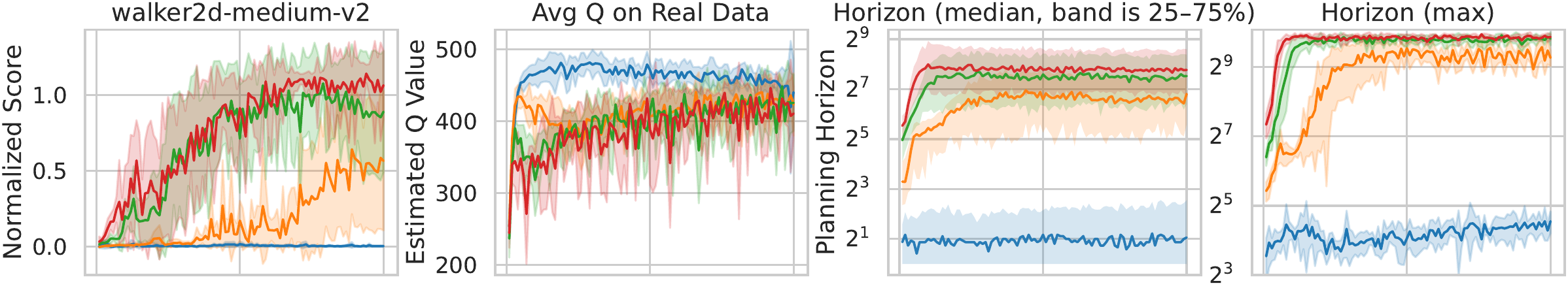}
    \includegraphics[width=0.8\linewidth]{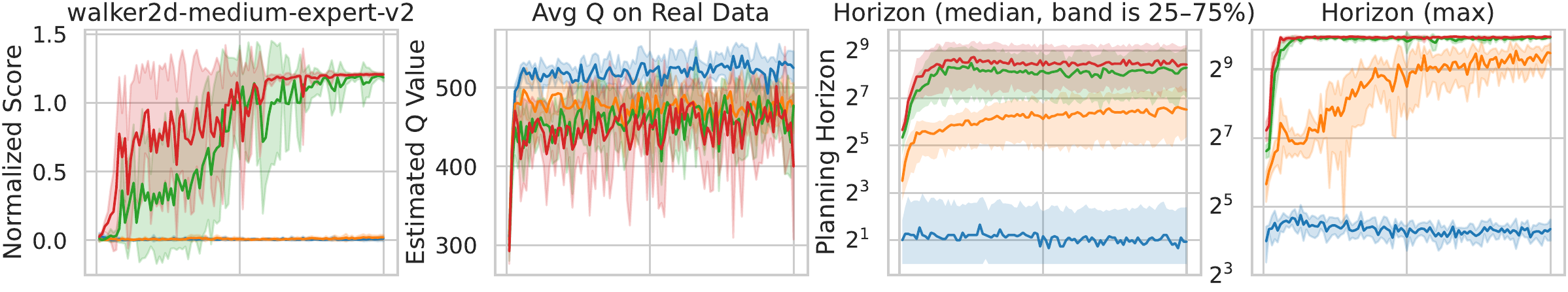}
    \\\vspace{1em}
    \includegraphics[width=0.8\linewidth]{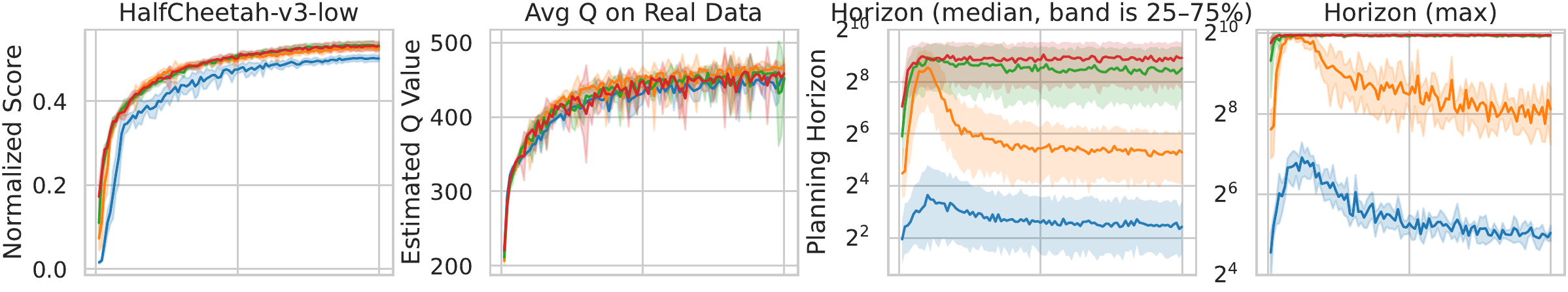}
    \includegraphics[width=0.8\linewidth]{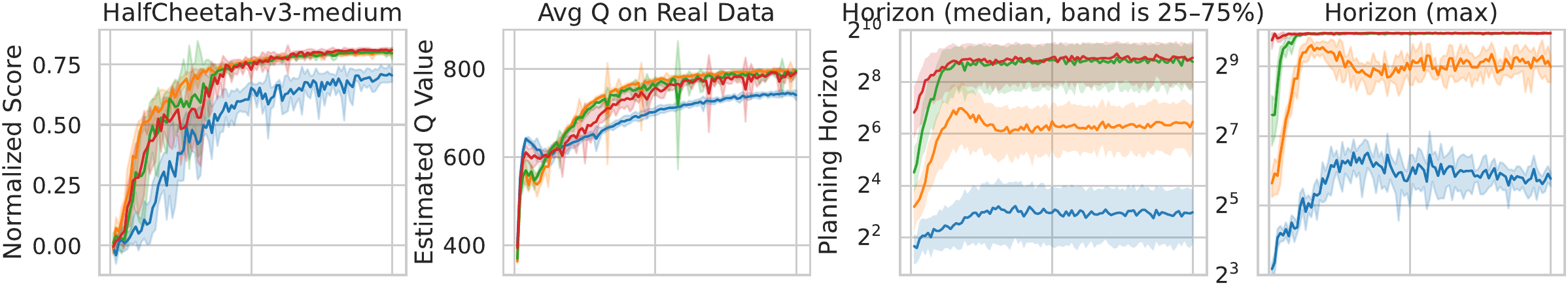}
    \includegraphics[width=0.8\linewidth]{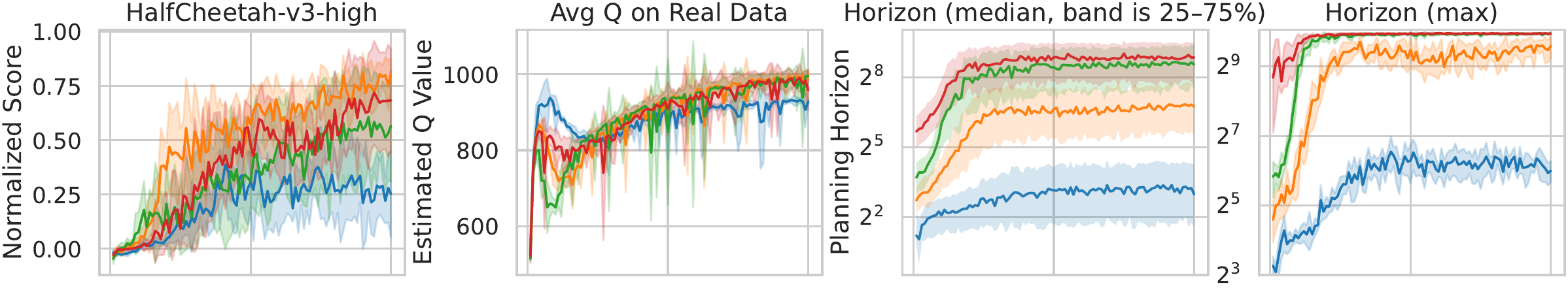}
    \\\vspace{1em}
    \includegraphics[width=0.8\linewidth]{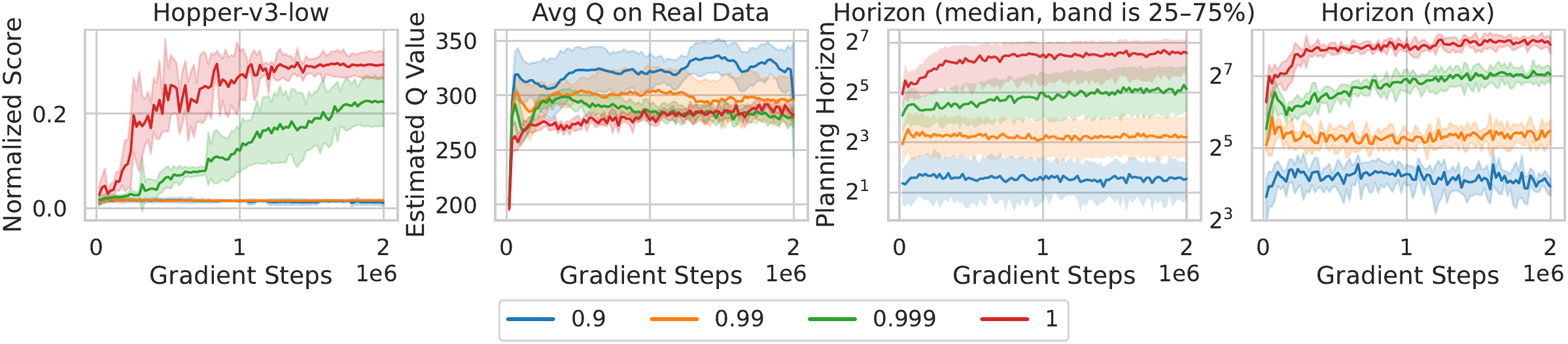}
    \caption{Ablation on the uncertainty quantile $\zeta$ for rollout truncation (part 2 of 4).}
    \label{fig:horizon_2}
\end{figure}

\begin{figure}[h]
    \centering
    \includegraphics[width=0.8\linewidth]{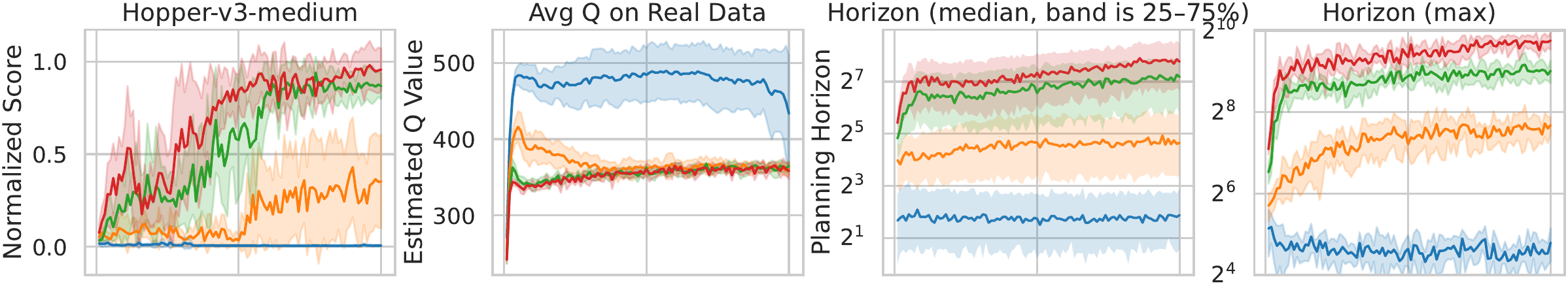}
    \includegraphics[width=0.8\linewidth]{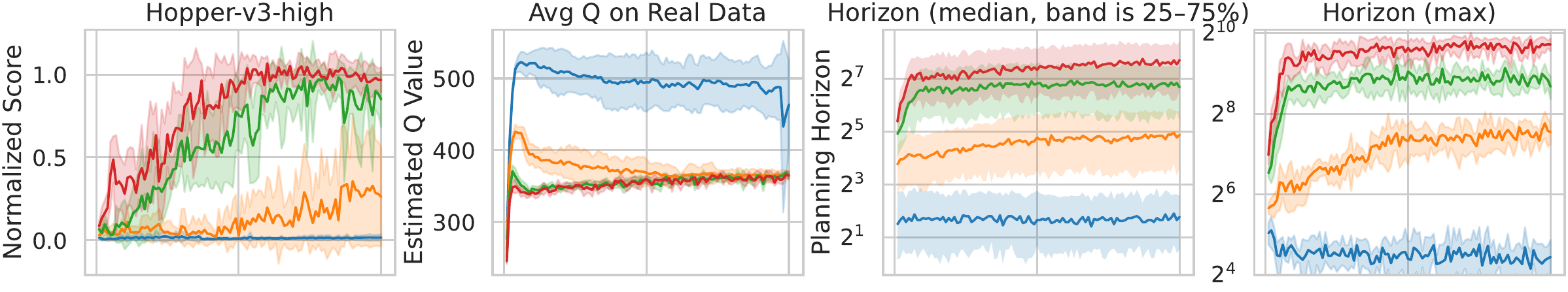}
    \\\vspace{1em}
    \includegraphics[width=0.8\linewidth]{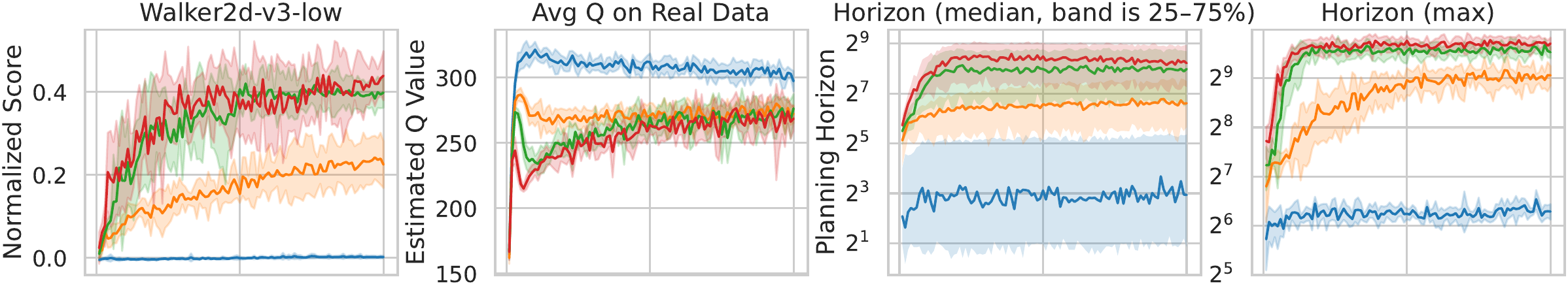}
    \includegraphics[width=0.8\linewidth]{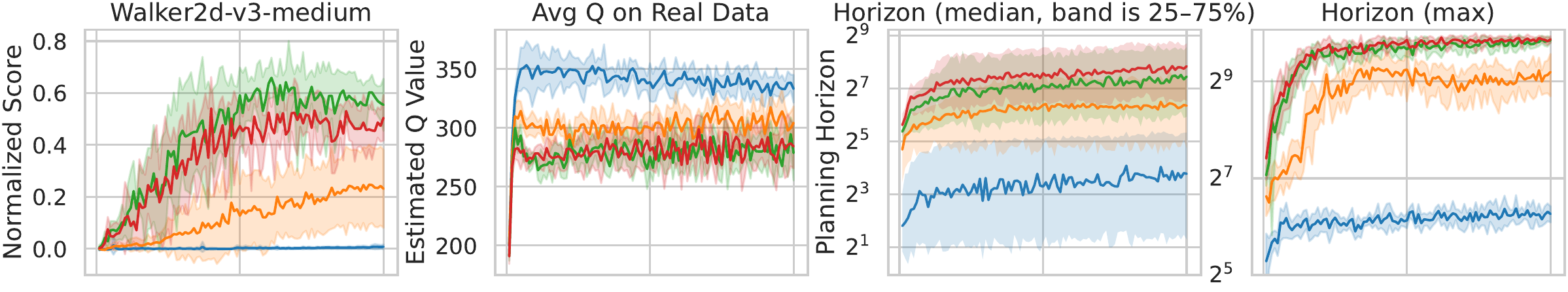}
    \includegraphics[width=0.8\linewidth]{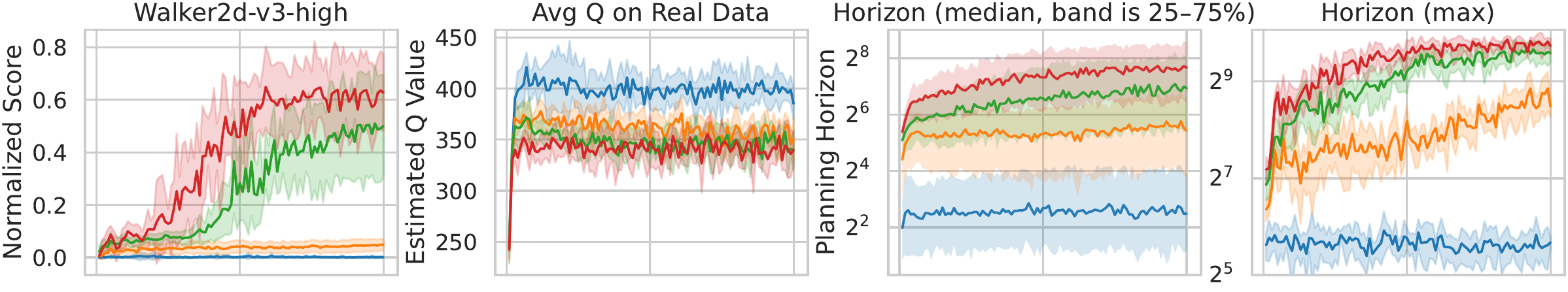}
    \\\vspace{1em}
    \includegraphics[width=0.8\linewidth]{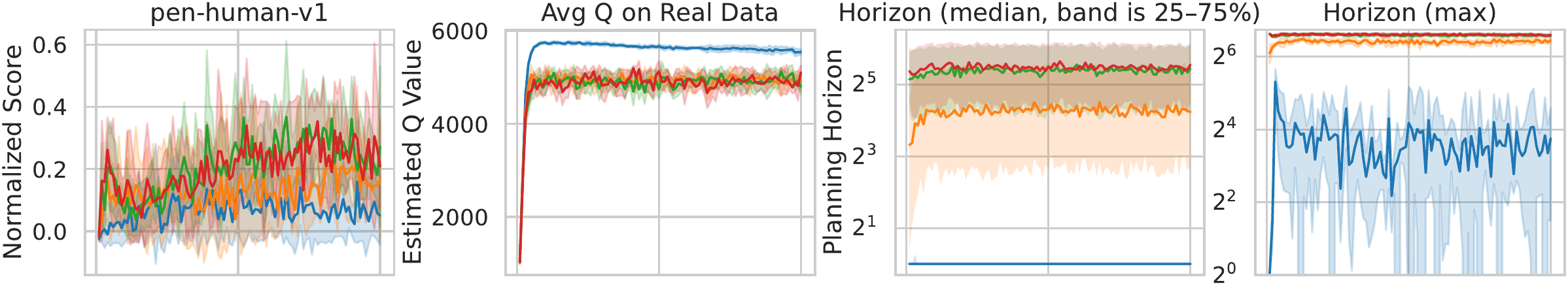}
    \includegraphics[width=0.8\linewidth]{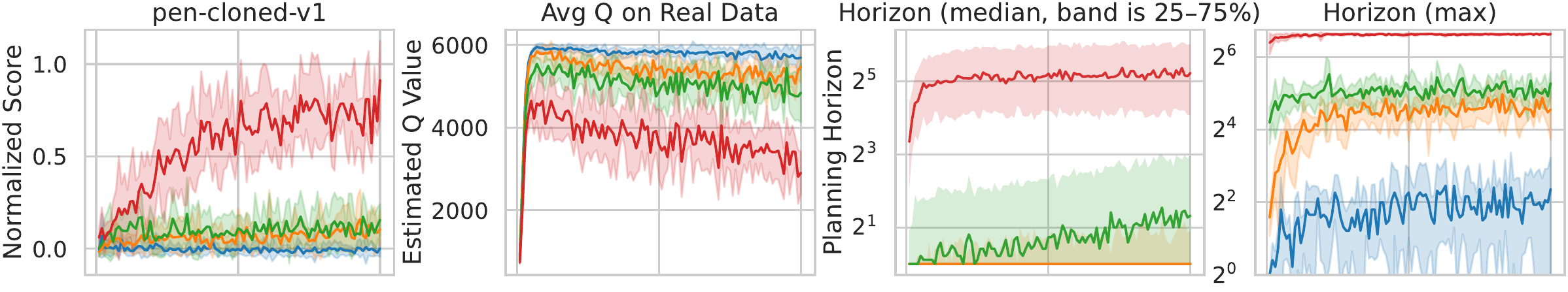}
    \includegraphics[width=0.8\linewidth]{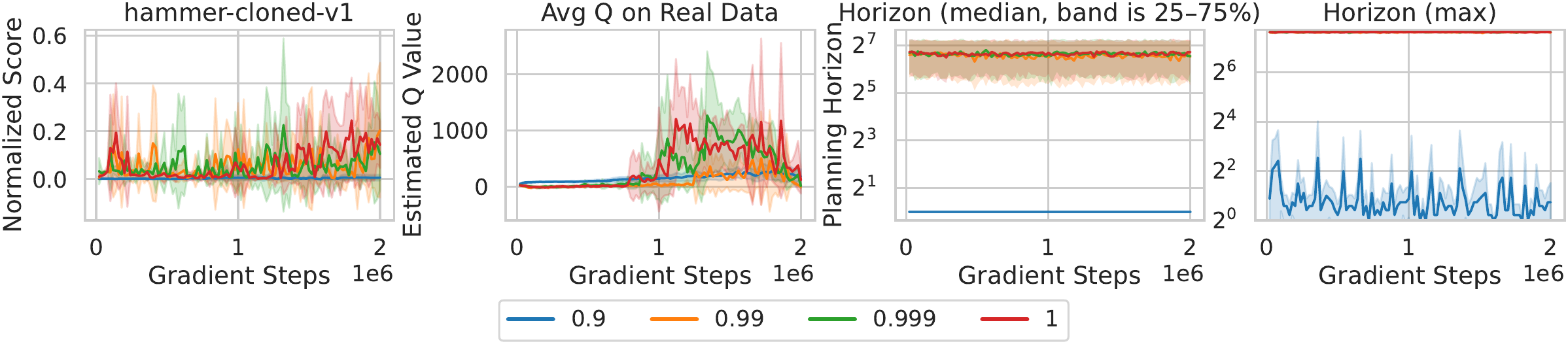}
    \caption{Ablation on the uncertainty quantile $\zeta$ for rollout truncation (part 3 of 4). Adroit benchmark has short maximum episode steps: $T=100 < 2^7$ in pen and $T=200 < 2^8$ in hammer, which limits the rollout horizon. }
    \label{fig:horizon_3}
\end{figure}

\begin{figure}[t]
\vspace{-3em}
    \centering
    \includegraphics[width=0.8\linewidth]{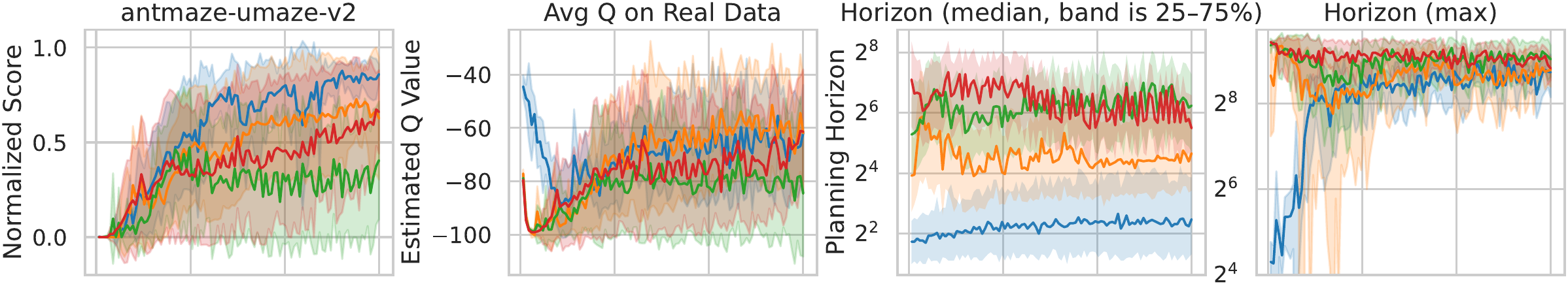}
    \includegraphics[width=0.8\linewidth]{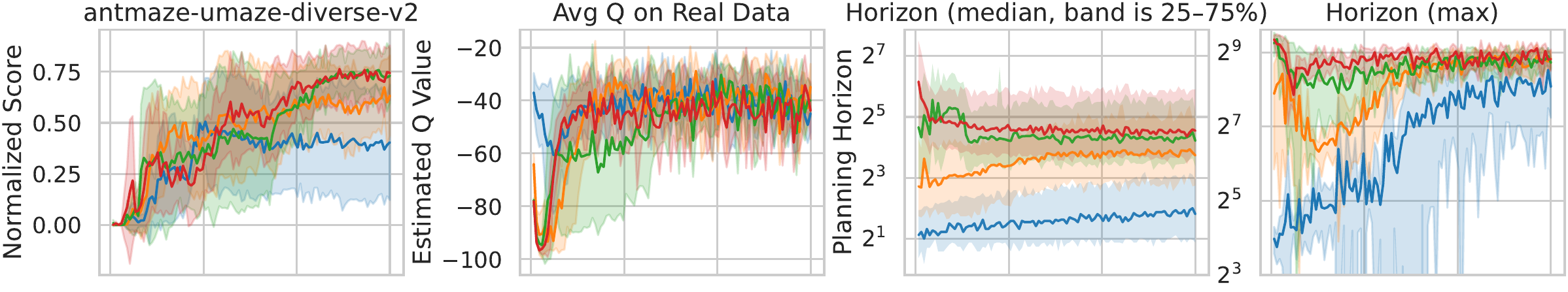}
    \includegraphics[width=0.8\linewidth]{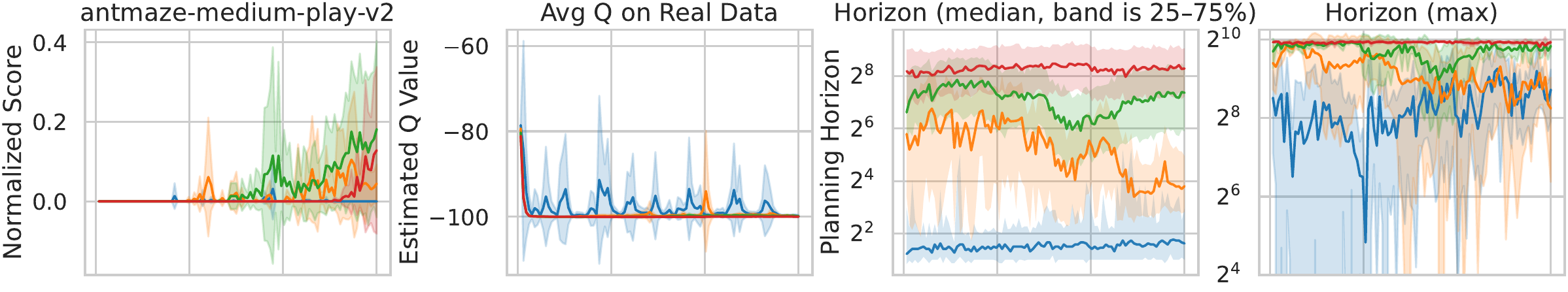}
    \includegraphics[width=0.8\linewidth]{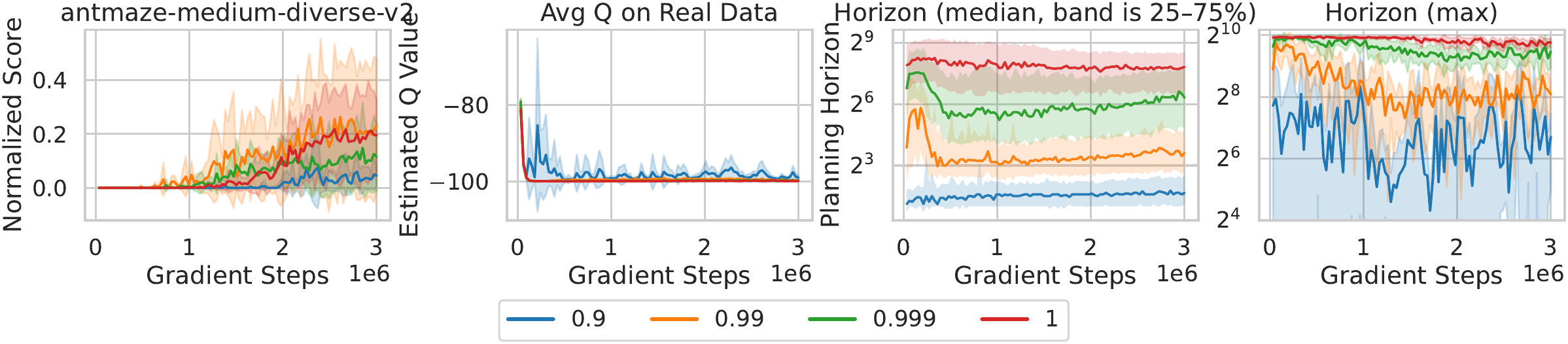}
    \caption{Ablation on the uncertainty quantile $\zeta$ for rollout truncation (part 4 of 4). Maximum episode steps are $T=700$ in umaze and $T=1000$ in medium maze. Successful episodes terminate early in AntMaze, so horizon lengths are partially confounded by this effect.\looseness=-1
    }
    \label{fig:horizon_4}
\end{figure}

\clearpage
\subsection{Full Results on Compounding Errors}
\label{app:LN}

\autoref{fig:compound_app1}--\autoref{fig:compound_app3} report the full results on compounding errors to complement \autoref{fig:compound_main} in the main paper.

\textbf{Plotting setup.} For each world ensemble trained given a dataset, we evaluate compounding errors using \textit{three} datasets that share the same underlying MDP ($\star$-random-v2, $\star$-medium-replay-v2, and $\star$-medium-expert-v2). For each evaluation dataset, we collect $200$ rollouts in total (two rollouts per ensemble member). This evaluation protocol allows us to span a broad range of exploratory behaviors, similar to \citet{zhou2025diffusion}.

We generate a synthetic rollout as follows: first we draw $m_\theta \sim \ensemble$ and an entire real trajectory $(s_{0:T}, a_{0:T-1}, r_{1:T})$ from the evaluation dataset. We then let $\hat s_0 = s_0$, and $(\hat r_{t+1}, \hat s_{t+1}) \sim m_\theta(\hat s_t, a_t), \forall t < T$.
Synthetic rollouts are truncated \textit{only} when numerical overflow occurs (\texttt{float32}); we do not apply an uncertainty threshold and we ignore the terminal function. Because rollout lengths vary in Hopper, we apply forward filling (\texttt{pandas.DataFrame.ffill}) so that medians and percentile statistics remain well-defined.
For the leftmost two columns, RMS denotes the root-mean-square, $\mathrm{RMS}(x) = \sqrt{\tfrac{1}{k}\sum_{i=1}^k x_i^2}$ for $x \in \R^k$, which normalizes the $\ell_2$ norm to be dimension-invariant. RMSE denotes the root-mean-square error, $\mathrm{RMSE}(x,y) = \mathrm{RMS}(x-y)$ for $x,y \in \R^k$. For the rightmost scatter plot, we aggregate all state-action pairs from these rollouts and show the relation between estimated uncertainty $\unc(\hat{s}_t, a_t)$ and the next-state error $\mathrm{RMSE}(\hat{s}_{t+1}, s_{t+1})$.

\textbf{LayerNorm significantly suppresses worst-case (95\%) errors across all evaluation setups.} Across the $6\times 3 = 18$ evaluation setups shown in \autoref{fig:compound_app1}--\autoref{fig:compound_app3}, LayerNorm consistently prevents compounding state-error and reward-bias explosion (1st and 3rd columns) by stabilizing the predicted state norm (2nd column). In contrast, without LayerNorm, \textbf{16 of the 18} setups exhibit clear error explosion. We further observe that models trained on medium and medium-expert datasets (with narrower coverage) tend to explode more rapidly than those trained on medium-replay datasets (with broader coverage). The only two non-exploding cases without LayerNorm occur when models trained on medium-replay are evaluated with random action sequences, likely because medium-replay offers the broadest coverage and includes random-action trajectories.

\textbf{LayerNorm also significantly suppresses medium-case errors.} LayerNorm effectively controls medium-case errors across all evaluation setups. In contrast, without LayerNorm, \textbf{12 of the 18} setups exhibit medium-case error explosion (with the remaining 6 showing comparable performance to the LayerNorm variant). Notably, although models trained on random datasets do not explode under random-action rollouts, they \textit{do} explode under higher-quality action sequences when LayerNorm is removed.

\textbf{Uncertainty threshold can safeguard with LayerNorm against compounding error.}  For world ensembles with LayerNorm, we find that the uncertainty threshold $\zeta = 1.0$ (used in our main experiments) reliably separates severe error regions, since $\zeta = 1.0$ approximates the boundary of in-distribution data. While the Spearman's rank coefficient, often used to benchmark uncertainty estimation accuracy~\citep{lu2021revisiting}, is less than $0.6$ in 5 out of the 18 setups, we argue that this metric is less critical for Bayesian RL. Unlike approaches that penalize with uncertainties at every step, we only use uncertainty as a binary cutoff for truncation. As a result, our method is inherently more robust to imperfect uncertainty ranking, requiring only that severe errors lie beyond the $\zeta=1.0$ boundary.

\begin{figure}[h]
    \centering
    \includegraphics[width=0.9\linewidth]{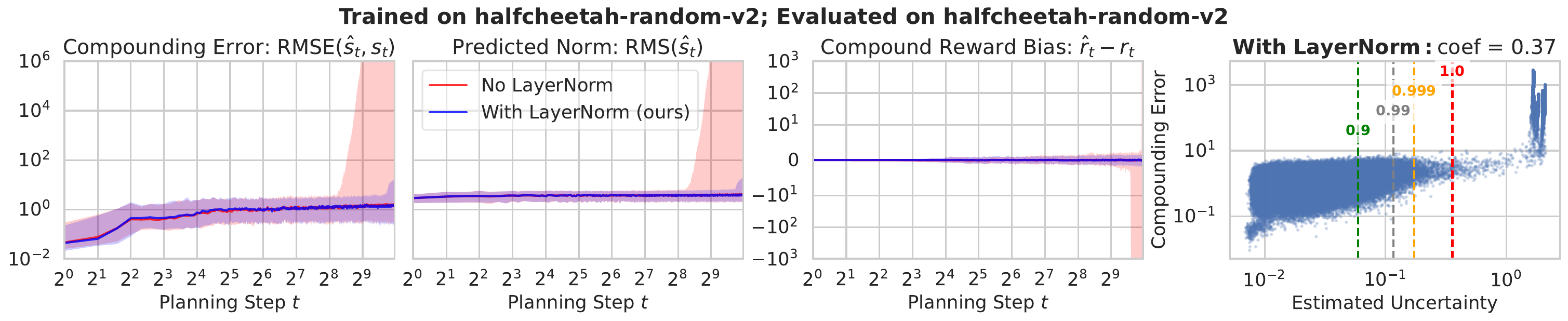}
    \includegraphics[width=0.9\linewidth]{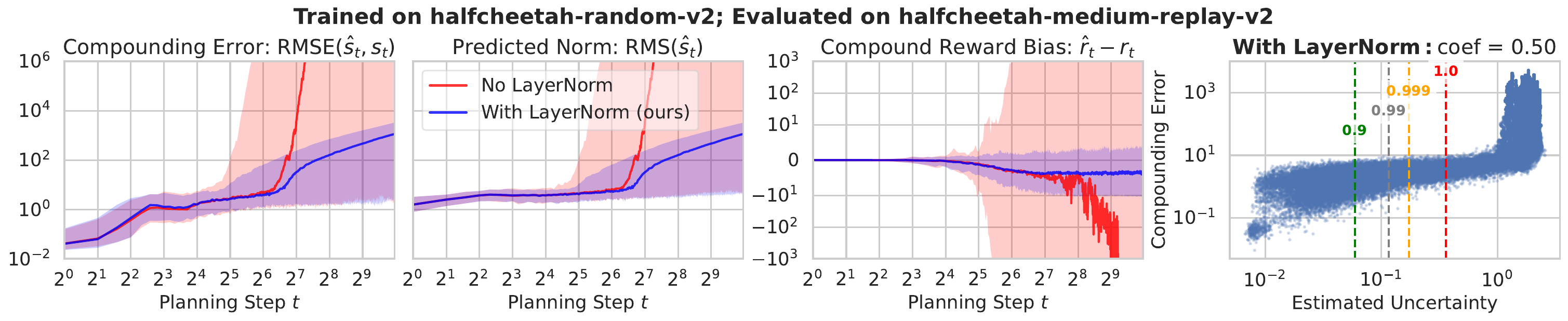}
    \includegraphics[width=0.9\linewidth]{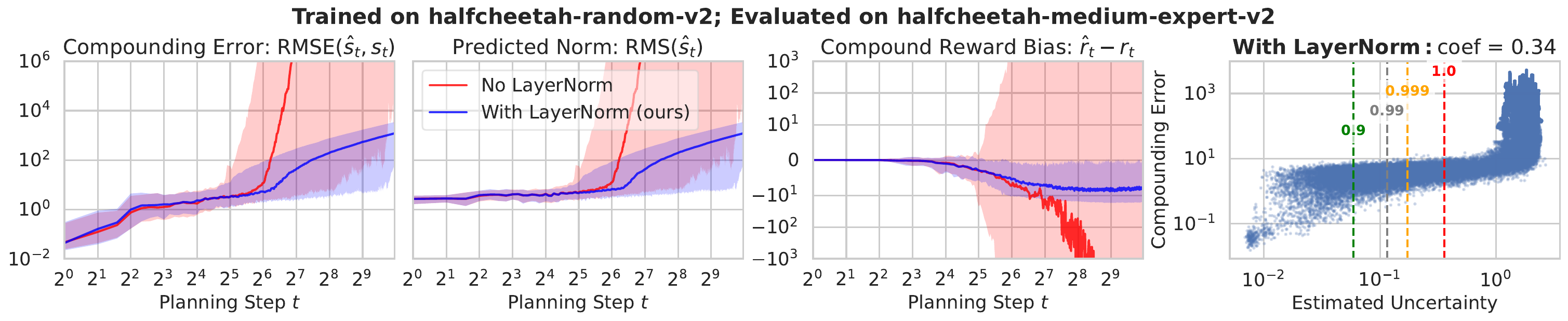}
    \\ \vspace{1em}
    \includegraphics[width=0.9\linewidth]{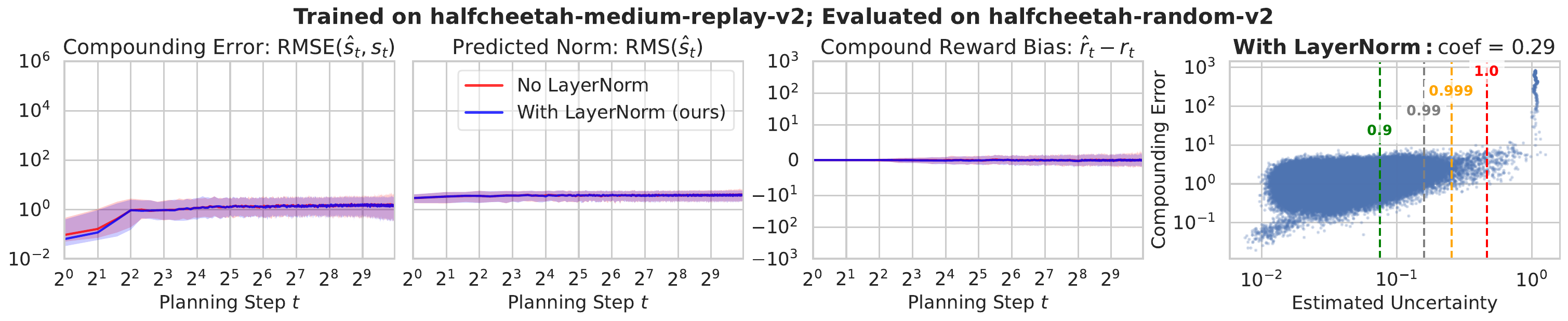}
    \includegraphics[width=0.9\linewidth]{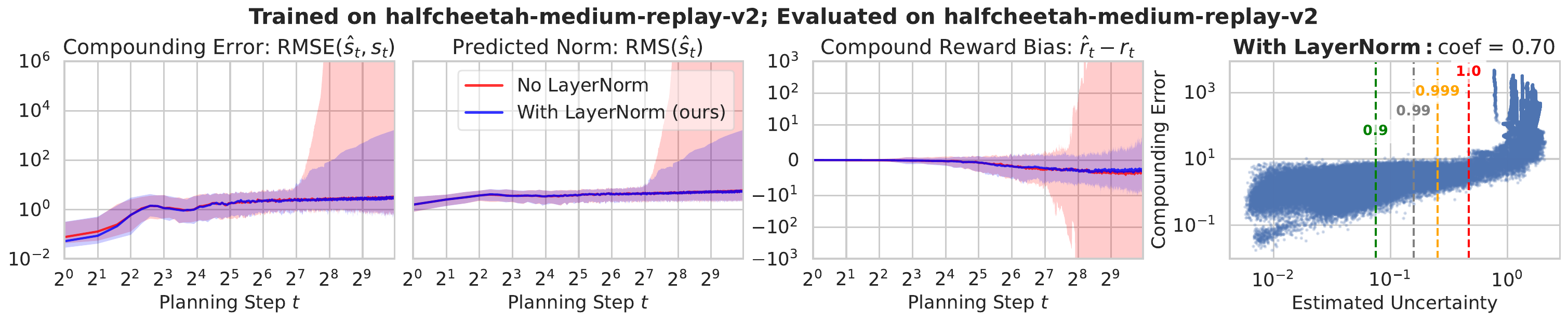}
    \includegraphics[width=0.9\linewidth]{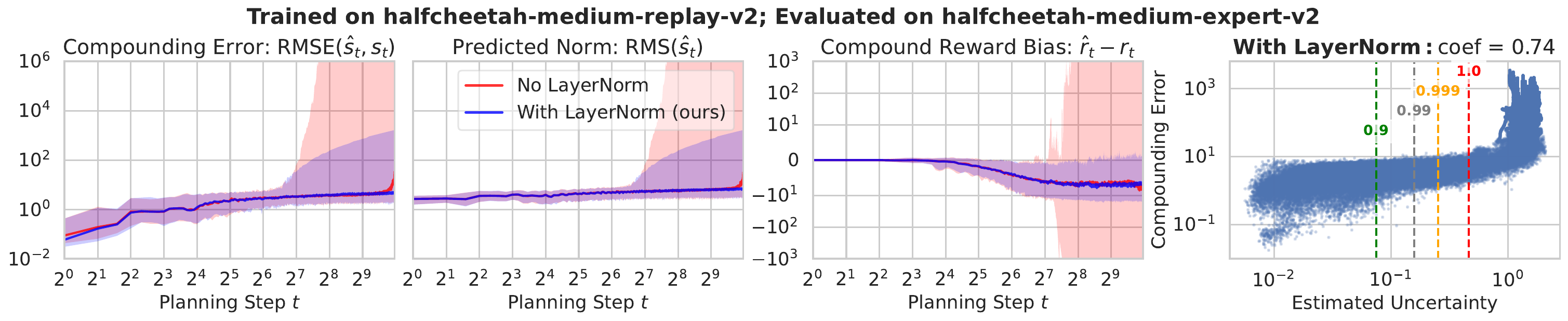}
    \caption{Effect of LayerNorm in world models (part 1 of 3). For each metric, we plot the \textbf{median} (solid line) together with the \textbf{5-95\% percentile band} across 200 rollouts. The rightmost scatter plots show the Spearman's rank coefficients in the with-LayerNorm setting; vertical lines mark uncertainty thresholds $\zeta \in\{0.9, 0.99,0.999, 1.0\}$.}
    \label{fig:compound_app1}
\end{figure}

\begin{figure}[h]
    \centering
    \includegraphics[width=0.9\linewidth]{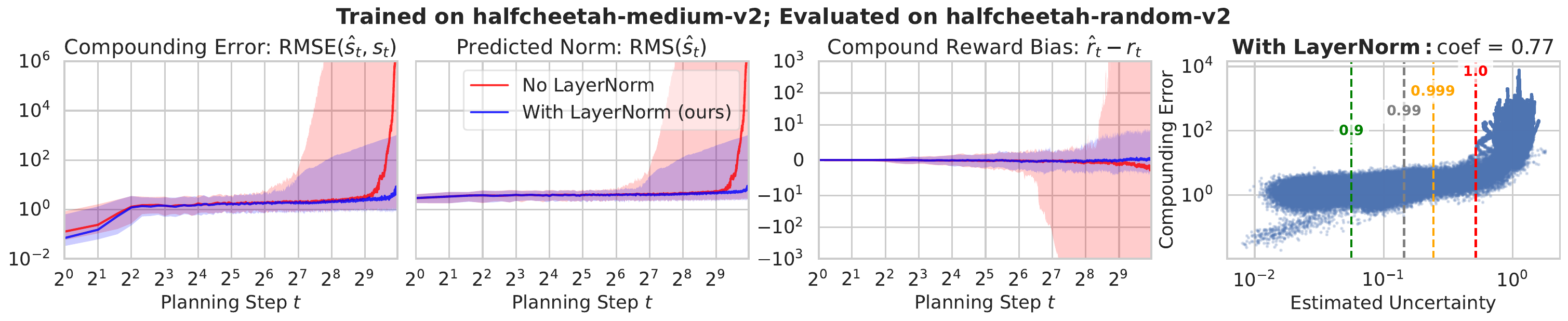}
    \includegraphics[width=0.9\linewidth]{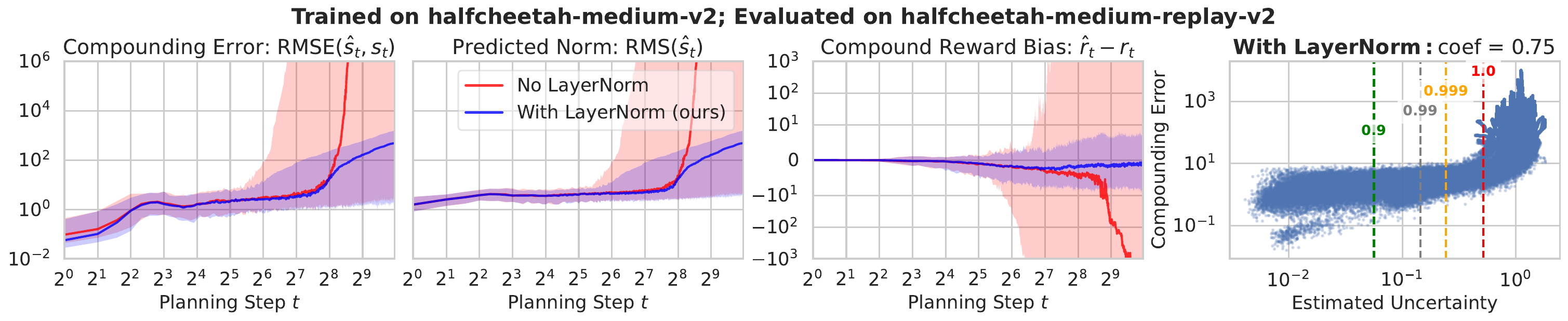}
    \includegraphics[width=0.9\linewidth]{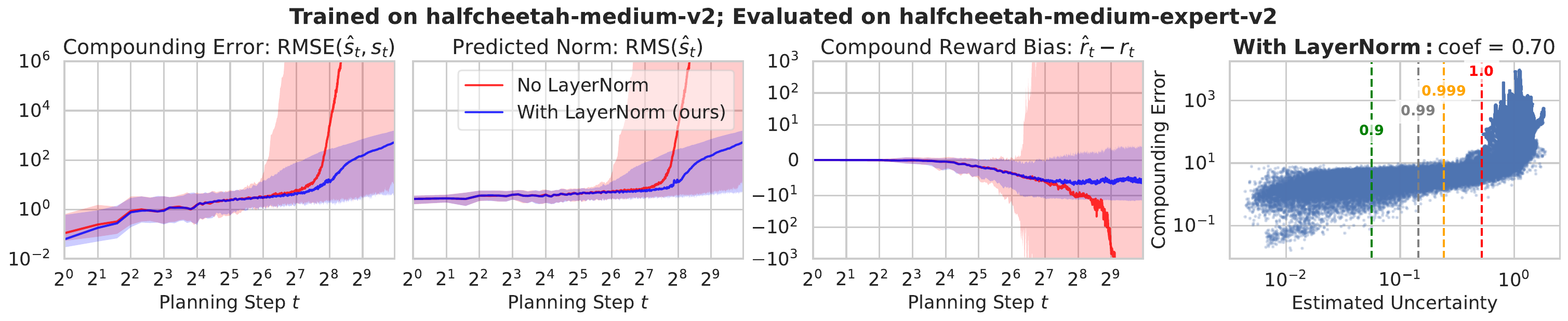}
    \\ \vspace{1em}
    \includegraphics[width=0.9\linewidth]{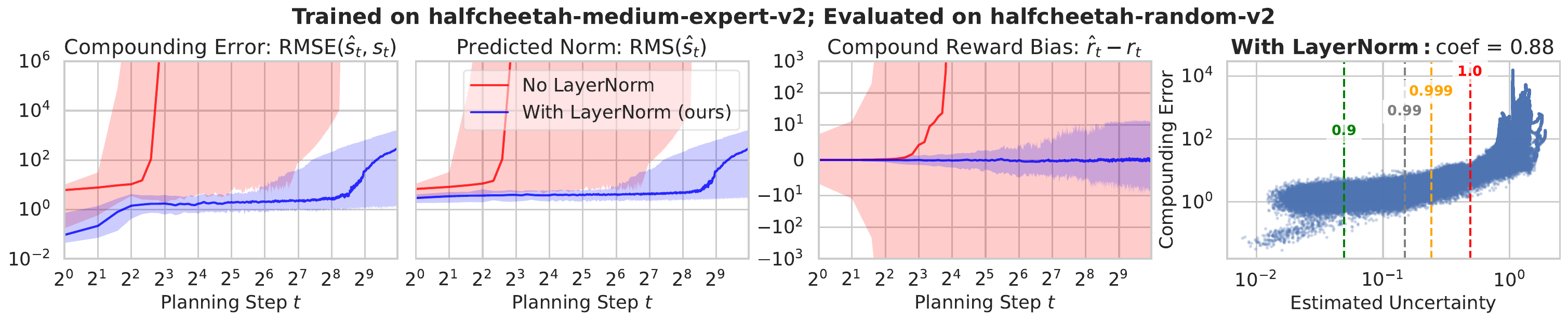}
    \includegraphics[width=0.9\linewidth]{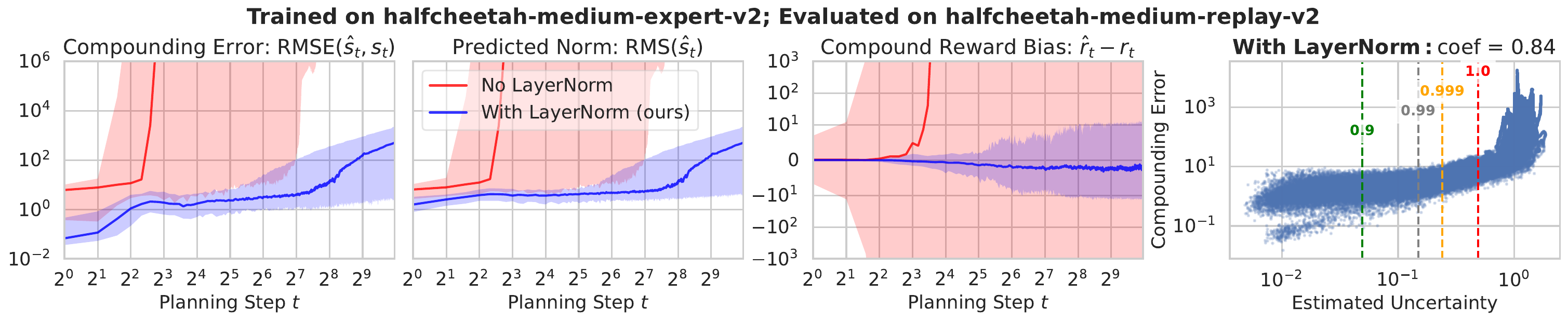}
    \includegraphics[width=0.9\linewidth]{figs/compound/halfcheetah-medium-expert-v2/halfcheetah-medium-expert-v2/main.pdf}
    \caption{Effect of LayerNorm in world models (part 2 of 3).}
    \label{fig:compound_app2}
\end{figure}

\begin{figure}[h]
    \centering
    \includegraphics[width=0.9\linewidth]{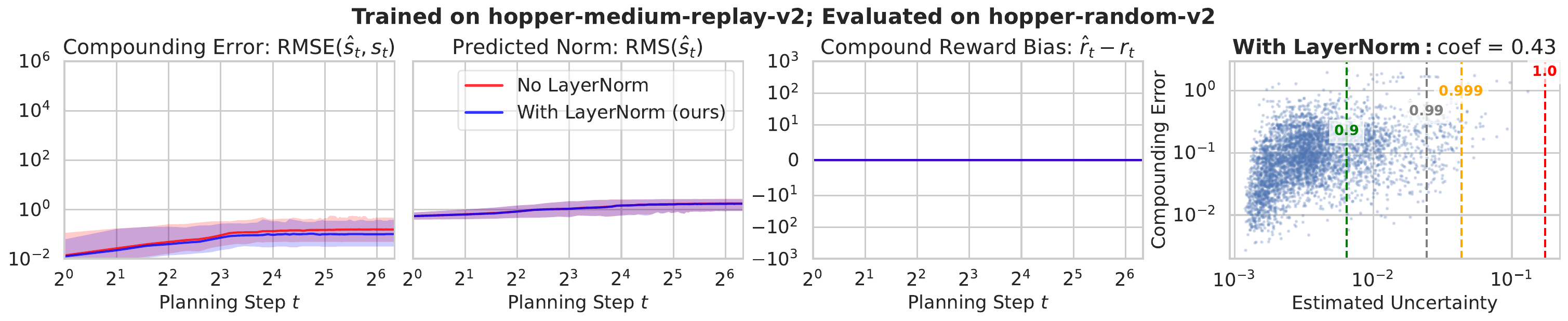}
    \includegraphics[width=0.9\linewidth]{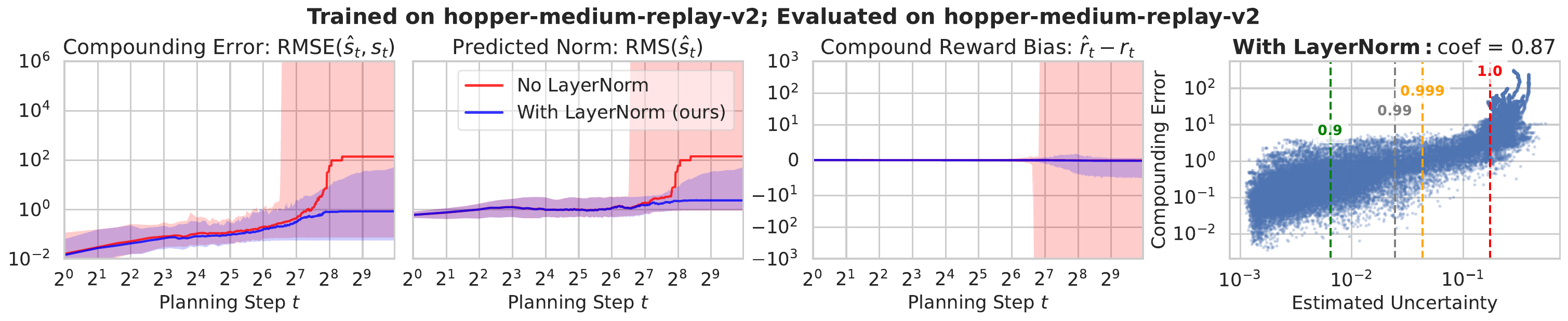}
    \includegraphics[width=0.9\linewidth]{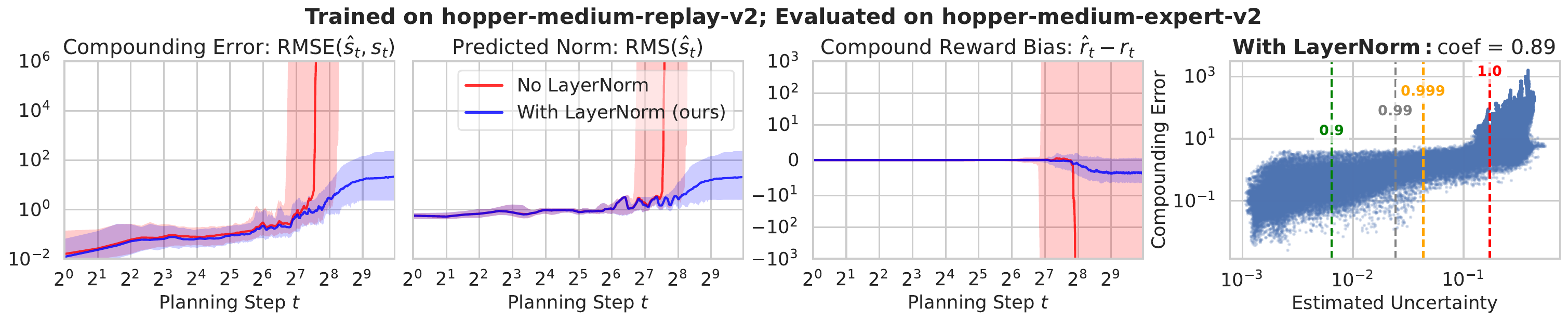}
    \\ \vspace{1em}
    \includegraphics[width=0.9\linewidth]{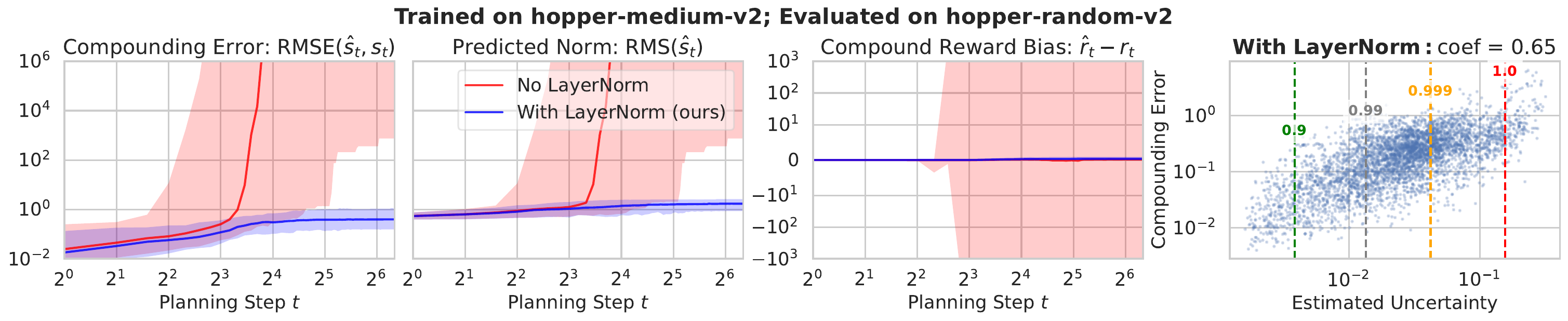}
    \includegraphics[width=0.9\linewidth]{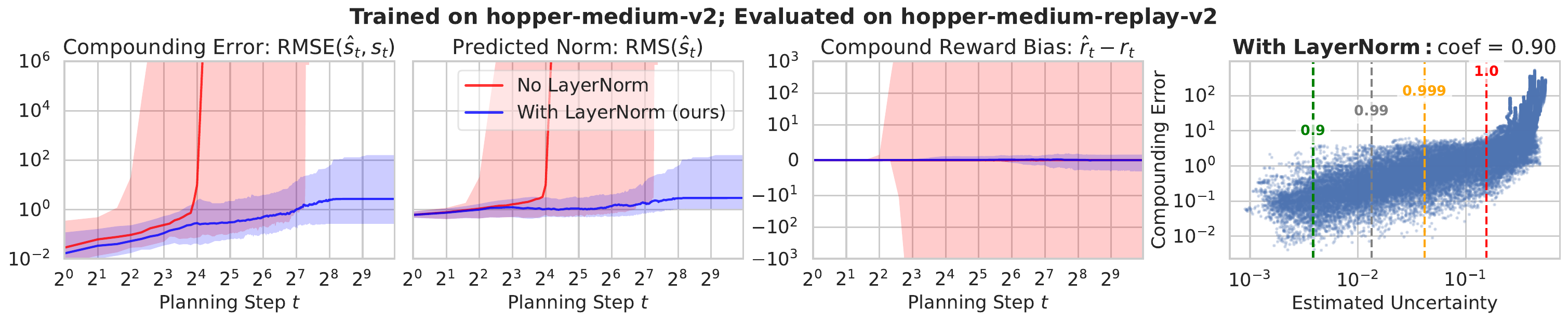}
    \includegraphics[width=0.9\linewidth]{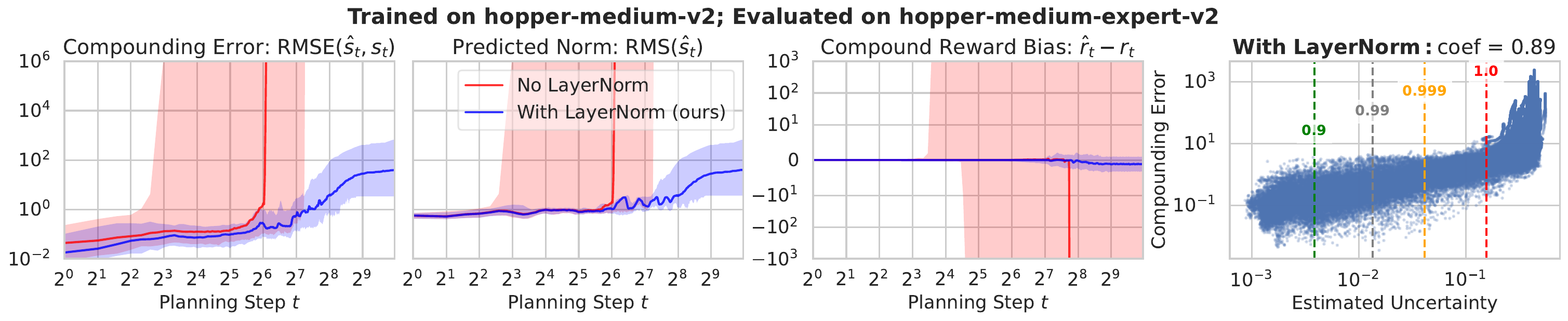}
    \caption{Effect of LayerNorm in world models (part 3 of 3).}
    \label{fig:compound_app3}
\end{figure}

\clearpage
\subsection{Sensitivity and Ablation Results}
\label{app:ablation}
\vspace{1em}

\begingroup
\setlength{\tabcolsep}{4pt}       %
\renewcommand{\arraystretch}{1.2} %
\begin{table}[H]
\caption{
\textbf{Sensitivity and ablation results per dataset.} 
The \texthl{hlcolor}{highlighted} setting ($N{=}100$, $\lambda{=}0.0$, $\zeta{=}1.0$, using the entire history as agent input) is the main result. Ablations vary one hyperparameter at a time, except for the Markov agent, where we sweep the real data ratio $\kappa$ for a fair comparison. 
Red shading shows \textbf{degradation} level: \texthl{red!10}{light} (3–10), \texthl{red!35}{medium} (10–30), \texthl{red!60}{dark} ($>$30).
Green shading shows \textbf{improvement} level: \texthl{green!10}{light} (3–10), \texthl{green!35}{medium} (10–30), \texthl{green!60}{dark} ($>$30). 
\autoref{tab:ablation} is a summary.
}
\label{tab:ablation_full}
\centering
\resizebox{\columnwidth}{!}{%
\begin{tabular}{r|rrr|rrrrr|rrrr|rr}
\toprule
& \multicolumn{3}{c|}{Ensemble size $N$} 
& \multicolumn{5}{c|}{Unc.\ penalty coef.\ $\lambda$} 
& \multicolumn{4}{c|}{Truncation threshold $\zeta$} & \multicolumn{2}{c}{Agent input} \\
\cmidrule(lr){2-4} \cmidrule(lr){5-9} \cmidrule(lr){10-13} \cmidrule(lr){14-15}
Dataset
& \highlight{100} & 20 & 5
& \highlight{0.0} & 0.04 & 0.2 & 1.0 & 5.0
& \highlight{1.0} & 0.999 & 0.99 & 0.9 
& \highlight{Hist.} & Mark. \\
\midrule
hc-random-v2
& 37.0 & 39.7 & 38.2
& 37.0 & 35.6 & \degL{32.7} & \degM{19.3} & \degD{1.2}
& 37.0 & 35.0 & \degL{31.2} & \degL{30.7}
& 37.0 & \impL{44.4} \\
hp-random-v2
& 24.5 & \degL{15.1} & \degM{13.3} 
& 24.5 & \impM{48.2} & \impM{40.1} & \impL{31.6} & \degL{18.8} 
& 24.5 & \degM{8.0} & \degM{7.1} & \degM{8.7} 
& 24.5 & 21.7 \\
wk-random-v2
& 34.1 & \degM{20.5} & \degM{8.2}
& 34.1 & 33.0 & 34.2 & \degM{23.3} & \degD{0.0}
& 34.1 & \degM{7.1} & \degM{5.3} & \degM{6.0} 
& 34.1 & 33.4 \\
hc-med-rep-v2
& 72.1 & \degL{68.6} & \degL{66.7}
& 72.1 & 72.0 & 70.1 & \degL{63.5} & \degM{52.1}
& 72.1 & \degL{67.8} & \degL{68.6} & \degD{37.7} 
& 72.1 & \impL{76.9} \\
hp-med-rep-v2
& 110.6 & \degM{81.8} & \degD{75.2}
& 110.6 & 110.9 & \degM{97.6} & \degM{95.3} & \degD{28.8}
& 110.6 & \degD{79.8} & \degM{89.7} & \degD{3.2} 
& 110.6 & \degD{47.2} \\
wk-med-rep-v2
& 99.3 & \degL{91.3} & 97.8
& 99.3 & \degM{87.4} & \degL{93.5} & \degM{86.7} & \degM{72.8}
& 99.3 & \degL{96.0} & \degL{92.3} & \degD{7.4} 
& 99.3 & \impM{112.8} \\
hc-medium-v2
& 78.6 & \degL{73.7} & \degL{74.2}
& 78.6 & 77.5 & \degL{70.7} & \degM{59.8} & \degM{54.6}
& 78.6 & \degL{74.5} & \degL{74.0} & \degM{61.8} 
& 78.6 & 78.2 \\
hp-medium-v2
& 54.2 & \degM{37.1} & \degL{48.8}
& 54.2 & 52.2 & \impD{105.8} & \impM{74.0} & \impM{64.8}
& 54.2 & \degM{40.3} & \degD{2.3} & \degD{2.2}
& 54.2 & \degL{47.0} \\
wk-medium-v2
& 106.4 & \degL{103.1} & \degD{55.8}
& 106.4 & \degL{96.6} & \degM{77.9} & \degM{93.6} & \degM{81.9}
& 106.4 & \degM{88.9} & \degD{56.5} & \degD{0.2} 
& 106.4 & \impL{112.4} \\
hc-med-exp-v2
& 109.4 & 107.3 & \degM{97.8}
& 109.4 & 107.7 & \impL{112.7} & 110.3 & \degM{96.4}
& 109.4 & 112.3 & 109.6 & 109.6 
& 109.4 & 111.9 \\
hp-med-exp-v2
& 114.8 & \degM{100.2} & \degM{96.8}
& 114.8 & 114.4 & 113.5 & \degL{110.3} & \degL{110.1} 
& 114.8 & \degL{106.5} & \degD{2.2} & \degD{1.8} 
& 114.8 & \degM{101.3} \\
wk-med-exp-v2
& 120.6 & 120.6 & 118.5
& 120.6 & 121.5 & \degL{116.2} & \degM{109.3} & \degM{107.4}
& 120.6 & 118.5 & \degD{2.0} & \degD{1.1} 
& 120.6 & 122.2 \\
\midrule
hc-v3-Low
& 53.0 & 51.4 & 52.2
& 53.0 & 52.3 & \degL{47.3} & \degM{37.8} & \degM{26.3}
& 53.0 & 53.3 & 52.7 & 50.1 
& 53.0 & 55.4 \\
hp-v3-Low
& 30.3 & \degL{26.6} & 29.3
& 30.3 & 30.0 & \degL{24.3} & \degM{15.5} & \degM{12.7}
& 30.3 & \degL{22.5} & \degM{1.6} & \degM{1.1} 
& 30.3 & 29.4 \\
wk-v3-Low
& 43.9 & 41.9 & \degL{35.4}
& 43.9 & 46.1 & 45.7 & \impL{53.0} & \impL{53.2}
& 43.9 & \degL{39.8} & \degM{22.5} & \degD{0.2} 
& 43.9 & \impM{57.4} \\
hc-v3-Med
& 81.1 & 80.5 & \degL{77.3}
& 81.1 & 81.2 & 79.3 & \degM{68.1} & \degM{56.6}
& 81.1 & 79.5 & 79.9 & \degM{70.4}
& 81.1 & 82.5 \\
hp-v3-Med
& 95.7 & \degL{88.6} & 94.1
& 95.7 & \degL{89.3} & \degM{81.6} & \degM{72.1} & \degD{33.4}
& 95.7 & \degL{86.8} & \degD{35.2} & \degD{0.6}
& 95.7 & 94.9 \\
wk-v3-Med
& 50.5 & \degM{34.4} & \degM{39.9}
& 50.5 & \degL{43.9} & 50.2 & \degL{44.8} & \degL{40.9}
& 50.5 & \impL{55.5} & \degM{23.1} & \degD{0.7} 
& 50.5 & \degL{47.1} \\
hc-v3-High
& 68.3 & \degL{59.9} & \degM{53.8}
& 68.3 & 65.9 & 67.5 & 71.1 & \degM{54.1}
& 68.3 & \degM{56.7} & \impM{80.4} & \degD{25.2}
& 68.3 & \impM{81.5} \\
hp-v3-High
& 96.8 & \impL{100.4} & 99.3
& 96.8 & \degL{89.6} & \degM{75.3} & \degL{90.2} & \degD{46.4}
& 96.8 & \degM{85.0} & \degD{26.1} & \degD{1.6} 
& 96.8 & \degM{80.9} \\
wk-v3-High
& 62.7 & \degL{59.1} & \degL{57.0}
& 62.7 & \degL{55.3} & \impL{67.2} & \impL{72.2} & \degL{58.5}
& 62.7 & \degM{50.0} & \degD{4.8} & \degD{0.0}
& 62.7 & \impL{72.4} \\
\midrule
pen-human-v1
& 20.8 & 19.3 & \degL{13.7}
& 20.8 & \impL{25.8} & \impL{24.0} & \impM{35.9} & \impM{34.8}
& 20.8 & \impL{27.2} & \degL{16.8} & \degM{5.0} 
& 20.8 & \impL{28.0} \\
pen-cloned-v1
& 91.3 & \degM{63.4} & \degM{75.6}
& 91.3 & \degM{68.2} & \degM{76.0} & \degM{77.2} & \degM{67.2}
& 91.3 & \degD{15.7} & \degD{10.6} & \degD{0.2} 
& 91.3 & \degM{75.6}  \\
hammer-cloned-v1
& 14.4 & \degL{7.4} & \degL{8.6}
& 14.4 & \degL{7.0} & \degM{2.7} & \degM{1.4} & \degM{0.2}
& 14.4 & \degL{10.5} & \impL{20.5} & \degM{0.6} 
& 14.4 & \degL{5.6} \\
\midrule
umaze-v2 
& 66.1 & \impL{71.1} & 65.3 
& 66.1 & \degD{19} & \degD{9.3} & \degD{16.7} & \degD{28.4}
& 66.1 & \degM{40.6} & \degL{62.6} & \impM{86.0} 
& 66.1 & \degD{0.0} \\
umaze-diverse-v2 
& 74.4 & \degM{52.5} & \degD{37.8} 
& 74.4 & \degD{0.8} & \degD{3.3} & \degD{0.2} & \degD{19.8}
& 74.4 & 73.0 & \degM{63.3} & \degD{40.4} 
& 74.4 & \degD{5.2} \\ 
medium-play-v2 
& 12.8 & \degM{2.2} & \degM{0.8} 
& 12.8 & \degM{0.0} & \degM{0.0} & \degM{0.0} & \degM{0.0}
& 12.8 & \impL{18.2} & \degL{4.5} & \degM{0.0} 
& 12.8 & \degM{0.0} \\
medium-diverse-v2 
& 19.4 & \degM{6.8} & \degL{11.3} 
& 19.4 &  \degM{0.0} & \degM{0.0} & \degM{0.0} & \degM{0.0}
& 19.4 & \degL{11.4} & \impL{23.1} & \degM{4.5}
& 19.4 & \degM{0.0} \\
\bottomrule
\end{tabular}
}
\end{table}
\endgroup

\textbf{Ablation: Using a Markov agent in \algo works well on most locomotion tasks, but fails in AntMaze.}
Popular offline RL methods employ Markov agents, while \algo uses history-dependent agents due to its epistemic POMDP formulation. To assess the practical importance of memory, we replace the history-dependent agent in \algo with a Markov one, sweep over the same real-data-ratio range, and report the best results in \autoref{tab:ablation} and \autoref{tab:ablation_full}.
The Markov version performs similarly to the history-dependent one on most \textit{locomotion} tasks, especially on the NeoRL benchmark (64.7 $\to$ 66.8). However, it suffers severe degradation on hopper-medium-replay-v2 (110.6 $\to$ 47.2) and on all AntMaze tasks (43.2 $\to$ 1.3). 
This pattern suggests that in locomotion domains, epistemic uncertainty is relatively mild, so using a single observation may often infer the model index, making memory less critical. As a result, \algo with a Markov agent remains a strong baseline in these settings. In contrast, AntMaze has high epistemic uncertainty, especially about the maze layout which is crucial to navigation, so memory is needed to accumulate information over time.

\begingroup
\setlength{\tabcolsep}{4pt}
\renewcommand{\arraystretch}{1.2}
\begin{table}[H]
\vspace{1em}
\caption{Ablation results on using \textbf{fixed} horizon lengths $H \in \{256, 10, 3\}$ instead of adaptive long horizon (\texthl{hlcolor}{highlighted}). For the Adroit domain which has maximum episode steps $\le 200$, we use $H=32$ to replace $H=256$. 
Red shading shows \textbf{degradation} level: \texthl{red!10}{light} (3–10), \texthl{red!35}{medium} (10–30), \texthl{red!60}{dark} ($>$30).
Green shading shows \textbf{improvement} level: \texthl{green!10}{light} (3–10), \texthl{green!35}{medium} (10–30), \texthl{green!60}{dark} ($>$30).}
\label{tab:horizon_ablation}
\centering
\resizebox{0.5\columnwidth}{!}{%
\begin{tabular}{r|rrrr}
\toprule
Dataset & \highlight{Adaptive $H$} & $H=256$ & $H=10$ & $H=3$ \\
\midrule
hc-random-v2    & 37.0 & \impL{40.6} & \degM{27.8} & \degM{24.4} \\
hp-random-v2    & 24.5 & \degM{11.7} & \degM{8.5}  & \degM{8.2}  \\
wk-random-v2    & 34.1 & \degM{16.9} & \degD{4.1}  & \degM{8.3}  \\
hc-med-rep-v2   & 72.1 & 73.1 & \degM{54.9} & \degD{4.0}  \\
hp-med-rep-v2   & 110.6& \degM{95.1} & \degD{3.8}  & \degD{3.8}  \\
wk-med-rep-v2   & 99.3 & \degL{92.6} & \degD{15.7} & \degD{7.4}  \\
hc-medium-v2    & 78.6 & 75.8 & \degM{66.9} & \degD{41.9} \\
hp-medium-v2    & 54.2& \degM{39.4} & \degD{4.8}  & \degD{2.2}  \\
wk-medium-v2    & 106.4& \degD{60.2} & \degD{4.5}  & \degD{2.4}  \\
hc-med-exp-v2   & 109.5& 108.0& \degM{98.7} & \degD{71.2} \\
hp-med-exp-v2   & 114.8& 115.8& \degD{3.2}  & \degD{1.7}  \\
wk-med-exp-v2   & 120.7& 118.4& \degD{4.5}  & \degD{2.2}  \\
\midrule
hc-v3-Low       & 53.1  & 52.5        & 50.7 & \degL{44.0} \\
hp-v3-Low       & 30.3  & \impL{39.4} & \degM{2.6}  & \degM{1.5}  \\
wk-v3-Low       & 43.9  & \degL{35.6} & \degD{4.6}  & \degD{1.0}  \\
hc-v3-Med       & 81.1  & 79.8        & \degL{71.3} & \degD{51.1} \\
hp-v3-Med       & 95.7  & 96.0 & \degD{6.2}  & \degD{0.8}  \\
wk-v3-Med       & 50.5  & 49.7        & \degD{6.0}  & \degD{0.0}  \\
hc-v3-High      & 68.3  & 68.6 & \degL{59.5} & \degD{28.7} \\
hp-v3-High      & 96.8  & 98.4 & \degD{3.2}  & \degD{1.1}  \\
wk-v3-High      & 62.7  & \degM{45.3} & \degD{4.5}  & \degD{2.2}  \\
\midrule
pen-human-v1    & 20.8  & \degM{3.5}  & \degM{8.6}  & \degM{0.2} \\
pen-cloned-v1   & 91.3  & \degL{86.2} & \degD{52.1} & \degD{6.7}  \\
hammer-cloned-v1& 14.4  & \impL{17.8} & \degM{1.1}  & \degM{0.2} \\
\midrule
umaze-v2        & 66.1  & \impL{69.9} & \impL{71.8} & \degD{33.4} \\
umaze-diverse-v2& 74.4  & \degD{39.0} & 73.3        & \degD{24.9} \\
medium-play-v2  & 12.8  & 10.7 & \degM{0.0}  & \degM{0.0}  \\
medium-diverse-v2& 19.4 & \degL{13.2} & \degM{0.8}  & \degM{0.0}  \\
\bottomrule
\end{tabular}
}
\end{table}
\endgroup

\textbf{Ablation: Fixed short horizons collapse performance, and fixed long horizons underperform adaptive horizons.}
As shown in \autoref{tab:horizon_ablation}, fixed short horizons ($H=3,10$), which are common in offline MBRL, fail dramatically in our setting. Their performance drops across most datasets, consistent with our analysis that short rollouts increase reliance on bootstrapping and thus exacerbate value overestimation. Using a fixed long horizon ($H=256$; or $H=32$ in Adroit) alleviates this issue and is often competitive, but still falls short of adaptive truncation overall. These results confirm that fixed short horizons are harmful to non-conservative RL, and adaptive long horizons are better than fixed long horizons.

\begingroup
\setlength{\tabcolsep}{4pt}
\renewcommand{\arraystretch}{1.2}
\begin{table}[H]
\vspace{1em}
\caption{Ablation results on using \textbf{SAC}~\citep{haarnoja2018soft} instead of REDQ~\citep{chen2021randomized} as the backbone actor-critic.
The \texthl{hlcolor}{highlighted} setting (REDQ, $\zeta{=}1.0$) is the main result.
Red shading shows \textbf{degradation} level: \texthl{red!10}{light} (3--10), \texthl{red!35}{medium} (10--30), \texthl{red!60}{dark} ($>$30).
Green shading shows \textbf{improvement} level: \texthl{green!10}{light} (3--10), \texthl{green!35}{medium} (10--30), \texthl{green!60}{dark} ($>$30).}
\label{tab:sac_ablation}
\centering
\resizebox{0.5\columnwidth}{!}{%
\begin{tabular}{r|rrr}
\toprule
Dataset & \highlight{REDQ ($\zeta{=}1.0$)} & SAC ($\zeta{=}1.0$) & SAC ($\zeta{=}0.9$) \\
\midrule
hc-random-v2  & 37.0  & 36.8          & \degL{31.7}  \\
hp-random-v2  & 24.5  & \degL{15.7}   & \degM{8.0}   \\
wk-random-v2  & 34.1  & \degL{25.0}   & \degM{4.3}   \\
hc-med-rep-v2 & 72.1  & 70.4   & \degD{34.2}  \\
hp-med-rep-v2 & 110.6 & \degM{98.9}   & \degD{4.9}   \\
wk-med-rep-v2 & 99.3 & \degD{61.5}   & \degD{20.6}  \\
hc-medium-v2  & 78.6  & 75.9   & \degM{62.9}  \\
hp-medium-v2  & 54.2  & \degM{42.4}   & \degD{2.1}   \\
wk-medium-v2  & 106.4 & \degM{80.2}   & \degD{0.9}   \\
hc-med-exp-v2 & 109.5 & 108.8         & 111.0 \\
hp-med-exp-v2 & 114.8 & \degL{109.3}  & \degD{2.0}   \\
wk-med-exp-v2 & 120.7 & 118.9  & \degD{0.7}   \\
\bottomrule
\end{tabular}
}
\end{table}
\endgroup

\textbf{Ablation: The dominant factor is horizon length, rather than REDQ vs. SAC.}
\autoref{tab:sac_ablation} shows that replacing REDQ with standard SAC only causes a moderate drop in performance when long horizons are retained ($\zeta{=}1.0$), whereas enforcing short horizons with SAC ($\zeta{=}0.9$) leads to a dramatic collapse on most datasets. This indicates that the main gain of \algo does not come from the specific choice of REDQ, but from enabling sufficiently long rollouts. REDQ still provides a useful stabilization benefit, but it is secondary compared with the effect of horizon length.

\begin{figure}[h]
\vspace{1em}
    \centering
    \includegraphics[width=0.28\linewidth]{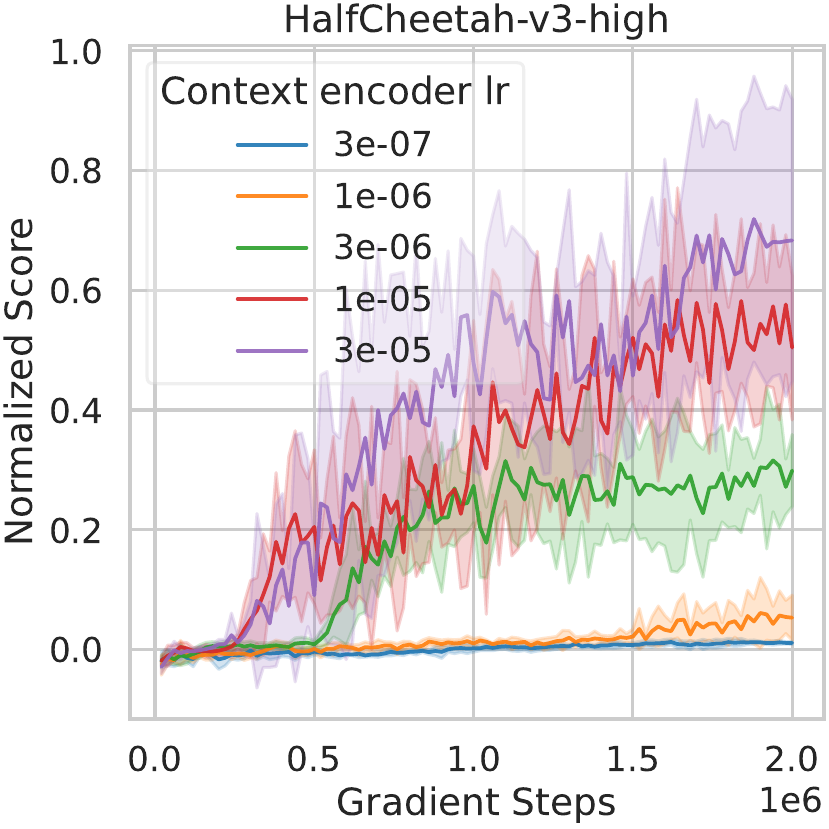}
    \includegraphics[width=0.28\linewidth]{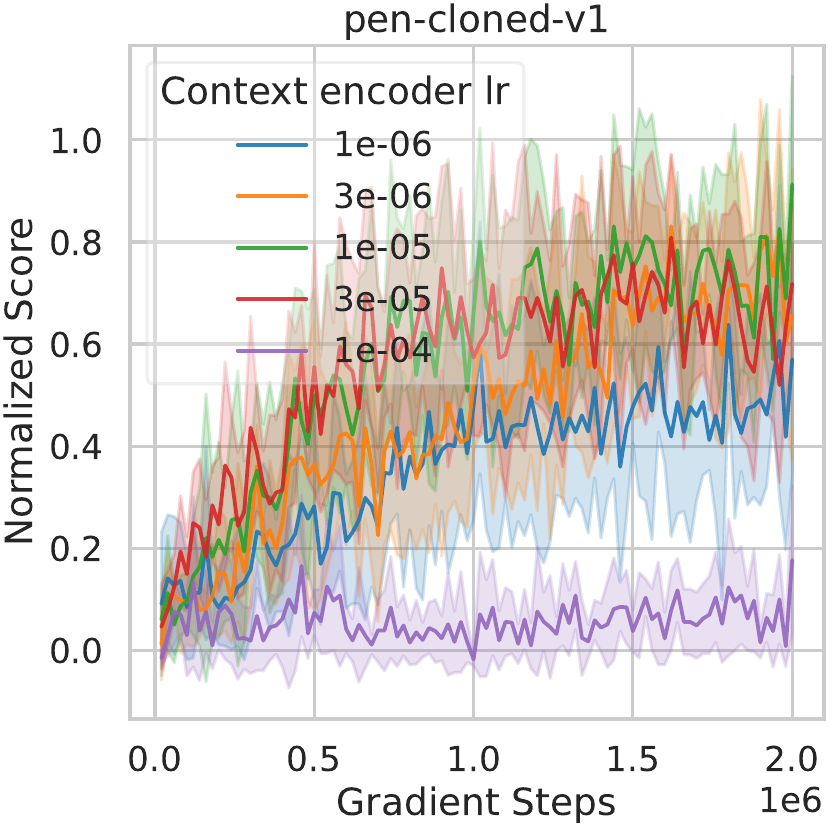}
    \includegraphics[width=0.28\linewidth]{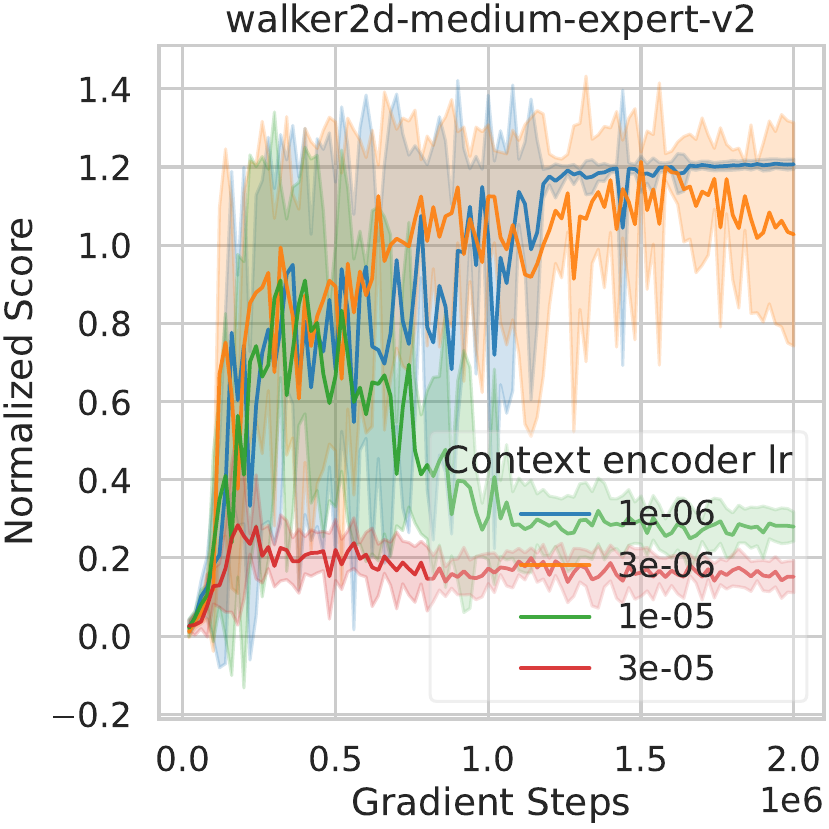}
    \caption{Selective learning curves on datasets where performance is \emph{sensitive} to the \textbf{context encoder learning rate}, favoring high (left), medium (middle), and low (right) values.}
    \vspace{1em}
    \label{fig:enclr}
\end{figure}

\begin{figure}[h]
\vspace{1em}
    \centering
    \includegraphics[width=0.28\linewidth]{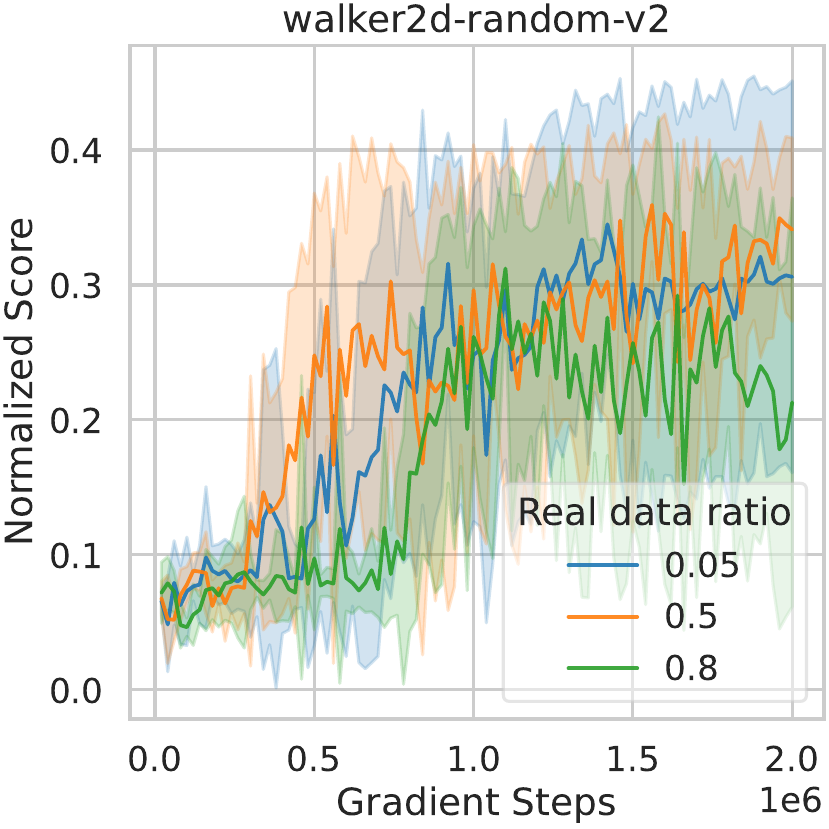}
    \includegraphics[width=0.28\linewidth]{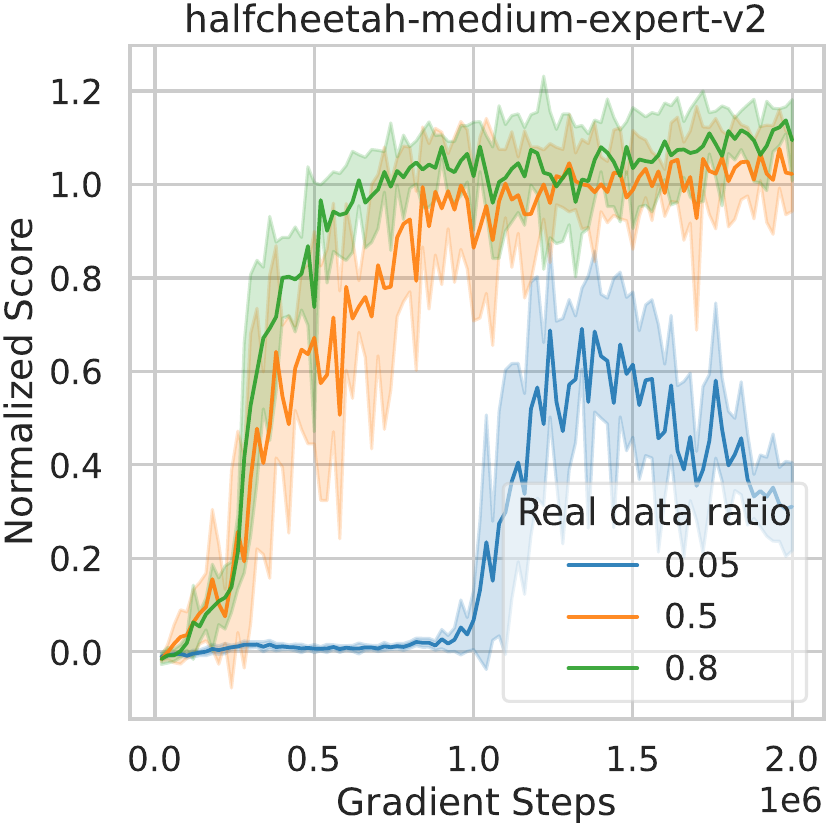}
    \includegraphics[width=0.28\linewidth]{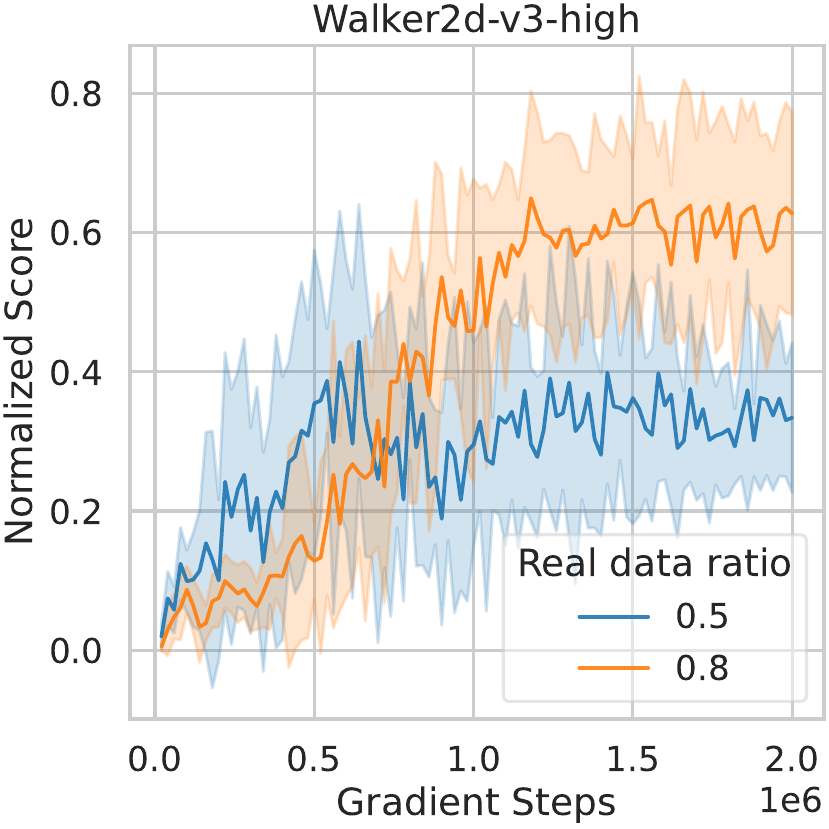}
    \caption{Selective learning curves on datasets where performance is \emph{sensitive} to the \textbf{real data ratio}.}
    \vspace{1em}
    \label{fig:realw}
\end{figure}

\subsection{Failure-Case Analysis in AntMaze}
\label{app:antmaze}
\vspace{1em}

\begin{figure}[H]
    \centering
    \includegraphics[width=0.17\linewidth]{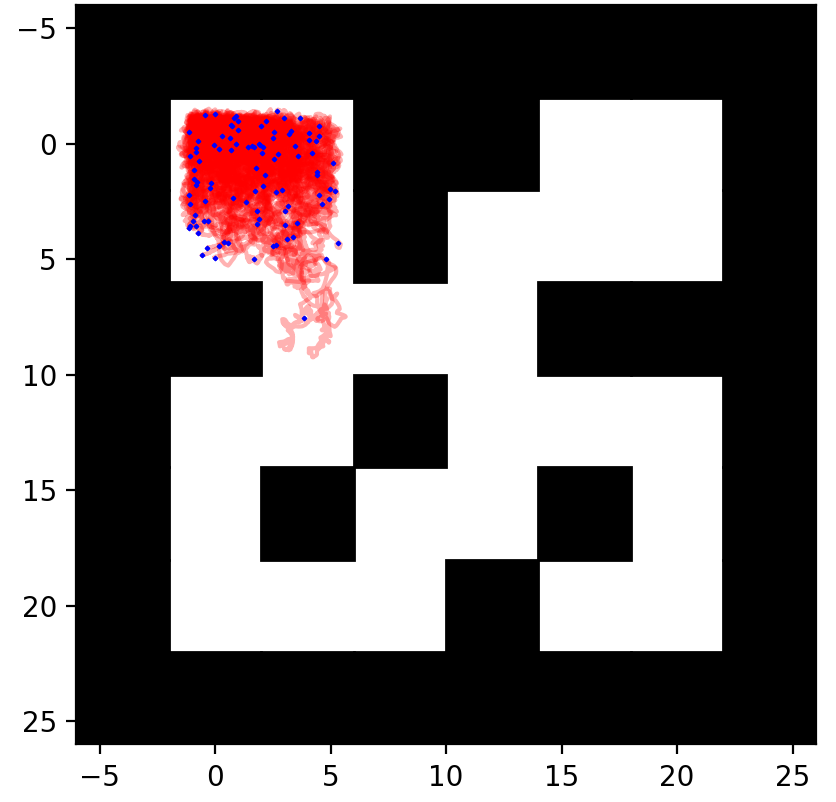}
    \includegraphics[width=0.17\linewidth]{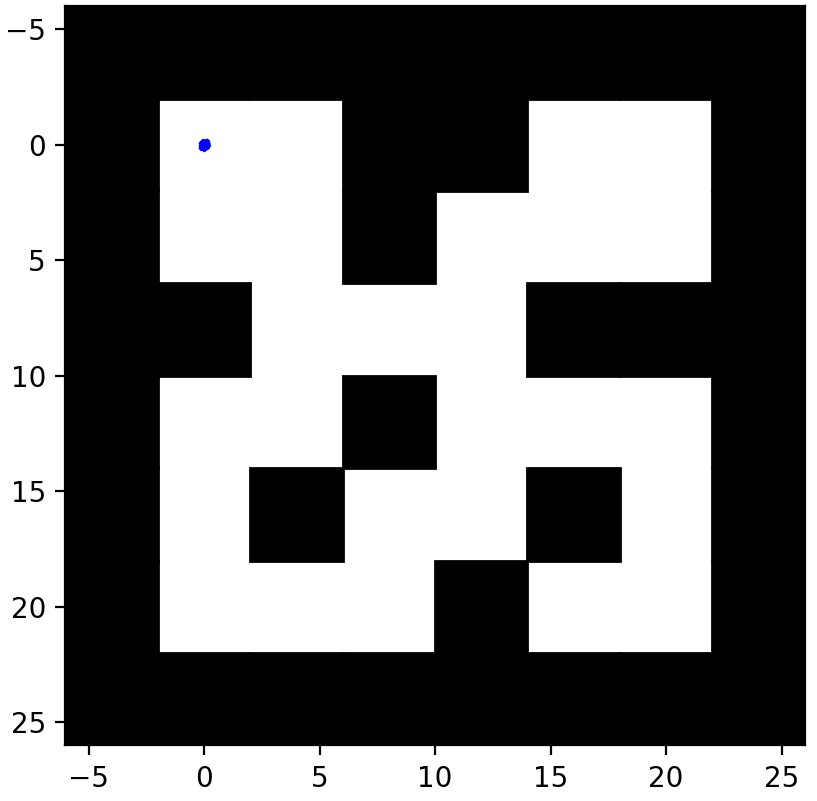}
   \includegraphics[width=0.17\linewidth]{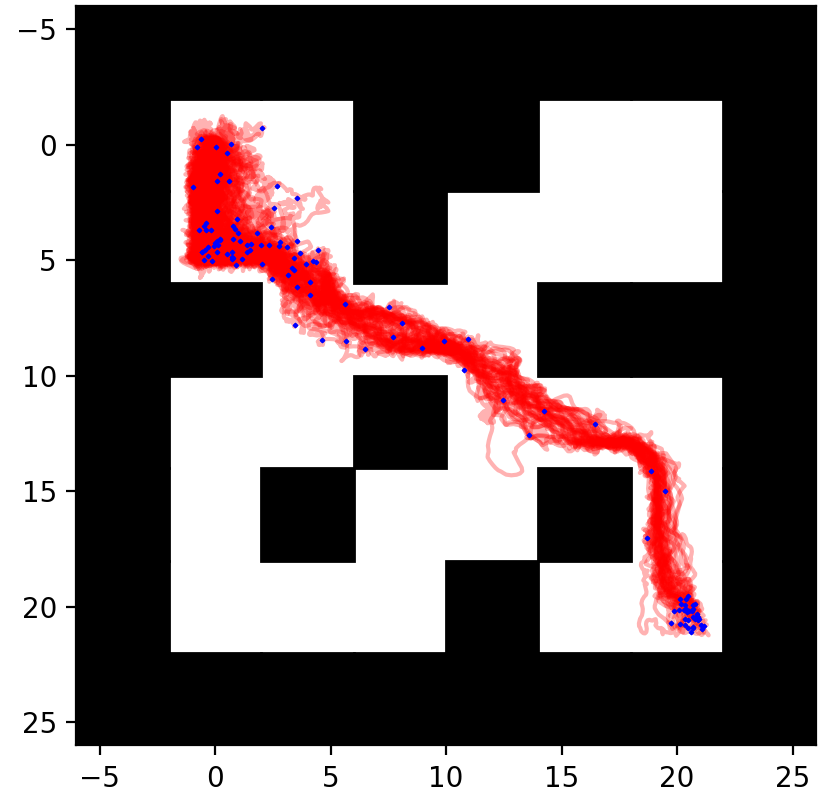}
    \includegraphics[width=0.22\linewidth]{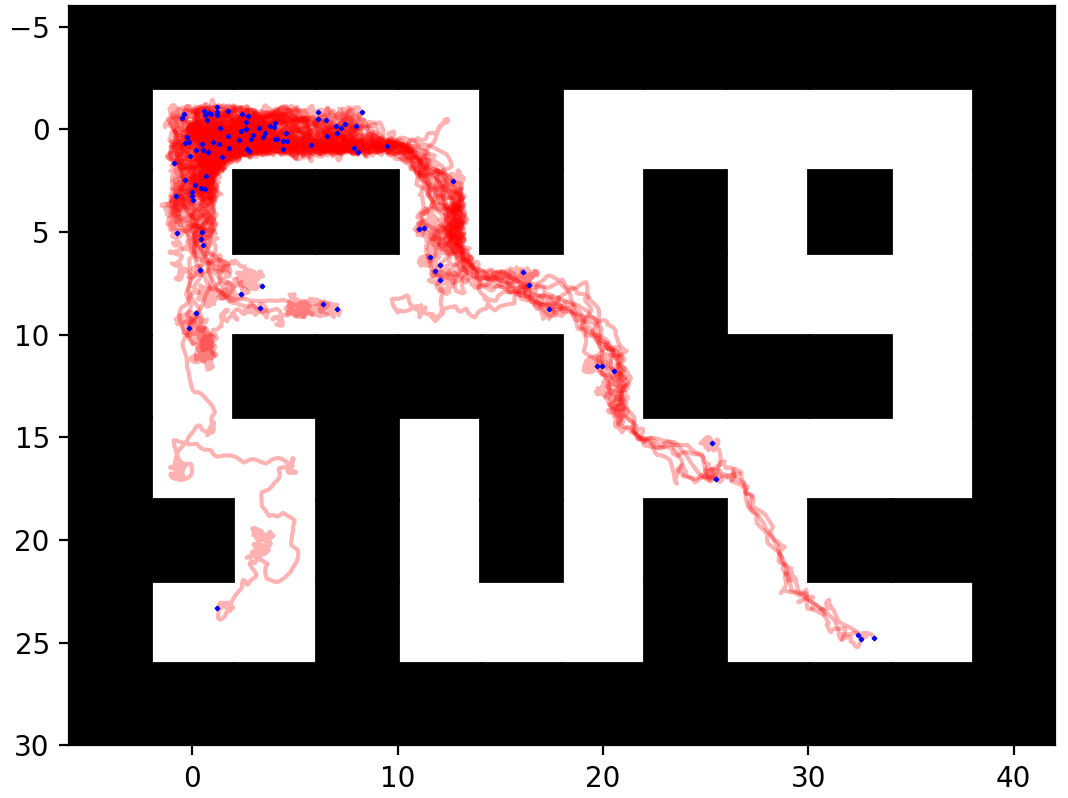}
    \includegraphics[width=0.22\linewidth]{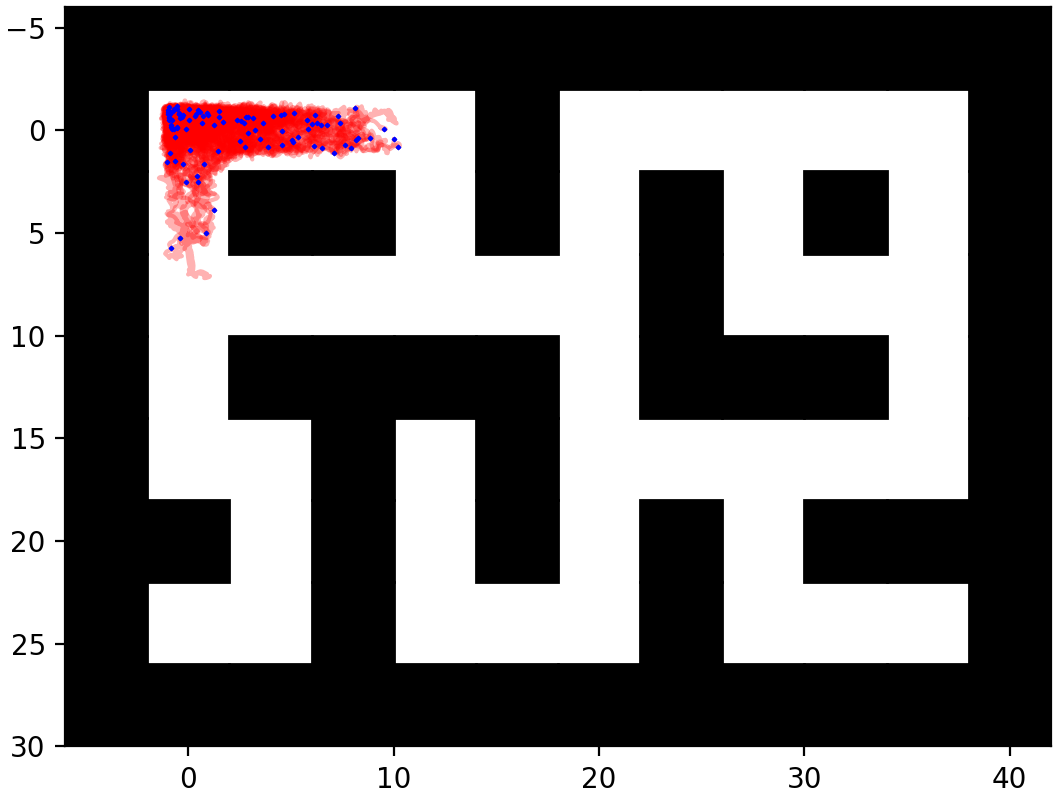}
    \caption{Failure cases in antmaze-medium-play-v2 (left three; different seeds) and antmaze-large-diverse-v2 (right two). The agent starts at the top-left corner and the goal is at the bottom-right. We show 100 evaluation trajectories in red and mark their endpoints in blue.}
    \label{fig:antmaze}
\end{figure}

\autoref{fig:antmaze} illustrates typical failure cases of \algo in AntMaze. Due to sparse rewards, the agent receives no learning signal before reaching the goal, often remaining near the initial region or colliding with nearby walls. Introducing explicit conservatism, such as uncertainty penalties, further degrades performance (\autoref{tab:ablation_full}).
We attribute these failures primarily to limited exploration in large mazes and long-horizon modeling challenges in contact-rich dynamics. Addressing these issues likely requires stronger planning and world modeling components.

\end{document}